\newtheorem{theorem}{Theorem}[section]
\newtheorem{proposition}[theorem]{Proposition}
\newtheorem{lemma}{Lemma}
\newtheorem{corollary}[theorem]{Corollary}
\newtheorem{definition}[theorem]{Definition}
\newtheorem{assumption}{Assumption}
\newtheorem{remark}[theorem]{Remark}
\crefname{assumption}{Assumption}{Assumptions}
\Crefname{assumption}{Assumption}{Assumptions}
\crefname{theorem}{Theorem}{Theorems}
\Crefname{theorem}{Theorem}{Theorems}
\crefname{lemma}{Lemma}{Lemmas}
\Crefname{lemma}{Lemma}{Lemmas}
\crefname{corollary}{Corollary}{Corollaries}
\Crefname{corollary}{Corollary}{Corollaries}
\journal{Neurocomputing}
\begin{document}

\begin{frontmatter}



\title{Finite-Time Analysis of Simultaneous Double Q-learning}



\author[inst1]{Hyunjun Na}
\ead{nhjun@kaist.ac.kr}

\author[inst1]{Donghwan Lee\corref{cor1}}
\ead{donghwan@kaist.ac.kr}

\cortext[cor1]{Corresponding author}

\address[inst1]{%
  Department of Electrical Engineering, Korea Advanced Institute of Science and Technology,\\
  291 Daehak‐ro, Daejeon 34141, South Korea%
}


\begin{abstract}
$Q$-learning is one of the most fundamental reinforcement learning (RL) algorithms. Despite its widespread success in various applications, it is prone to overestimation bias in the $Q$-learning update. To address this issue, double $Q$-learning employs two independent $Q$-estimators which are randomly selected and updated during the learning process. This paper proposes a modified double $Q$-learning, called simultaneous double $Q$-learning (SDQ), with its finite-time analysis. SDQ eliminates the need for random selection between the two $Q$-estimators, and this modification allows us to analyze double $Q$-learning through the lens of a novel switching system framework facilitating efficient finite-time analysis. Empirical studies demonstrate that SDQ converges faster than double $Q$-learning while retaining the ability to mitigate the maximization bias. Finally, we derive a finite-time expected error bound for SDQ. 
\end{abstract}



\begin{keyword}


Simultaneous double $Q$‐learning \sep
double $Q$‐learning \sep
finite‐time analysis \sep
maximization bias \sep
switching systems
\end{keyword}

\end{frontmatter}



\section{Introduction}\label{sec:intro}

Reinforcement learning (RL) is a class of learning algorithms for finding an optimal policy in unknown environments through interactions with the environment~\citep{sutton2018reinforcement}.
Among them, $Q$-learning~\citep{watkins} is one of the most widely studied and practically successful methods, which aims to learn an optimal policy by iteratively estimating the optimal action–value function.
Owing to its simplicity and model-free nature, $Q$-learning has been successfully applied to a wide range of problems, including control, robotics, and game playing~\citep{park2001modular,zhou2005cooperative,ho2006hiq,mnih2013playing}.
From a theoretical perspective, its convergence properties have also been extensively studied under various setting, such as stochastic approximation frameworks and finite-time anlayses~\citep{borkar2000ode,even2003learning,azar2011speedy,zou2019finite,lee2020unified,lee2023discrete,lee2024final}.

Despite its empirical successes and theoretical achievements, $Q$-learning is known to suffer from overestimation in the $Q$-estimator, known as the maximization bias~\citep{sutton2018reinforcement}.
This bias arises because the $Q$-value update selects the maximum action-value estimate, often leading to overestimation due to noise in the sampled estimates. For instance, when multiple actions are available, even small overestimations can accumulate through repeated updates, systematically skewing the $Q$-function. This issue becomes particularly severe in environments with a large number of actions or heterogeneous action spaces, where it can significantly slow the convergence of the policy to an optimal solution. To overcome this obstacle, the so-called double $Q$-learning was proposed in~\cite{hasselt2010double}, which empirically demonstrated that the maximization bias can be reduced by using double $Q$-estimators instead of the single $Q$-estimator. Since its introduction, double $Q$-learning has been successfully applied in practice \citep{van2016deep,zhang2018human,huang2019autonomous}, and analyzed thoroughly in~\citep{one,two}. However, from a practical standpoint, double $Q$-learning employs a random switching mechanism between two $Q$-estimators to mitigate maximization bias. While this mechanism effectively reduces overestimation, 
it relies on an alternating update scheme between the two $Q$-estimators.
As a result, the overall learning process can theoretically take up to twice as long to converge under the same step-size settings~\citep{weng2020mean}.

Motivated by the aforementioned discussion, this paper proposes a modified double $Q$-learning called simultaneous double $Q$-learning (SDQ), which departs from the original in two key aspects: 1)
Elimination of random selection: It dispenses with the need for random selection between the two $Q$-estimators, a step that can slow down convergence in the original double $Q$-learning. This design replaces the stochastic estimator-selection mechanism in double $Q$-learning 
with a simultaneous update scheme, 
where both estimators are updated concurrently at each iteration. 
This eliminates the randomness in estimator selection, and the update structure becomes deterministic, which aligns naturally with the switching-system interpretation adopted in our theoretical framework.
2) Different roles of $Q$-estimators: 
In the original double $Q$-learning, the two estimators play asymmetric roles:
one estimator selects the greedy action based on its own values, 
while the other provides the target for the update. 
In contrast, SDQ introduces a cross-referenced mechanism in which 
each estimator uses the other to determine the greedy action,
but computes the target value using its own estimate. Specifically, the updates of SDQ can be expressed as:

\begin{align}
Q_{k+1}^{A}(s_{k},a_{k})&=Q_{k}^{A}(s_{k},a_{k})+\alpha_{k} \{ r_{k+1}\nonumber+\gamma Q_{k}^{A}(s_{k+1}, \text{argmax}_{a\in\mathcal{A}} Q_{k}^{B}(s_{k+1},a))\nonumber-Q_{k}^{A}(s_{k},a_{k})\},\nonumber\\
Q_{k+1}^{B}(s_{k},a_{k})&=Q_{k}^{B}(s_{k},a_{k})+\alpha_{k} \{ r_{k+1}+\gamma Q_{k}^{B}(s_{k+1}, \text{argmax}_{a\in \mathcal{A}} Q_{k}^{A}(s_{k+1},a))\nonumber-Q_{k}^{B}(s_{k},a_{k})\},\nonumber
\end{align}
where $Q_k^{A}$ and $Q_k^{B}$ denote two separate estimators of the optimal
action--value function $Q^*$ at iteration $k$.
The pair $(s_k,a_k)\in\mathcal{S}\times\mathcal{A}$ represents the
state--action pair sampled at time $k$, $r_{k+1}$ is the immediate reward
observed after taking action $a_k$ at state $s_k$, and $s_{k+1}$ is the
subsequent state.
The scalar $\alpha_k>0$ denotes the step size, and $\gamma\in(0,1)$ is the
discount factor.
Each estimator updates itself using the greedy action determined by the
other estimator, while evaluating the target value with its own estimate.
This mutual role exchange creates a symmetric interaction between the two estimators, and the resulting update equations form a coupled pair that can be naturally modeled as a discrete-time switching system \cite{liberzon2003switching}.
Such a symmetric formulation provides an analytical structure that facilitates finite-time convergence analysis.

To establish the finite-time error bounds, a novel analysis framework is developed in this paper. In particular, SDQ is modelled as a switching system~\citep{lee2020unified,lee2023discrete,lee2024final}, which captures the dynamics of double $Q$-learning as a discrete-time switching system model. For finite-time convergence analysis, two comparison systems -- termed the lower comparison system and the upper comparison system -- are derived to bound the behavior of the original switching system. Through convergence of these comparison systems, the following expected error bound is derived:
\begin{align}
\max\bigl\{\mathbb{E}\lVert Q_k^A - Q^* \rVert_\infty,\;
           \mathbb{E}\lVert Q_k^B - Q^* \rVert_\infty\bigr\}
&\le
  \frac{120\,\alpha^{1/2}\lvert \mathcal{S}\times \mathcal{A}\rvert}
       {d_{\min}^{9/2}(1-\gamma)^{11/2}}
  +\,\frac{48\,\rho^{\,k-4}k^{4}\lvert \mathcal{S}\times \mathcal{A}\rvert^{3/2}}
          {1-\gamma}\,,
\end{align}
where $\lvert \mathcal{S}\times \mathcal{A}\rvert$ is the number of the state-action pairs, $d_{\text{min}}$ is the minimum state-action occupation frequency, $\alpha\in(0,1)$ is the constant step-size and $\rho\coloneqq1-\alpha d_{\text{min}}(1-\gamma)\in(0,1)$
is the exponential decay rate.

Although the switching system model has been first introduced in~\cite{lee2020unified,lee2023discrete,lee2024final}, we extend this view to double $Q$-learning and provide a new finite-time analysis. 
We note that this extension is not trivial because the two estimators are coupled through their update rules. These additional dependencies complicate the finite-time analysis compared to standard $Q$-learning.
Therefore, the techniques used in the previous studies cannot be directly applied to double $Q$-learning. In this paper, new approaches have been developed to overcome this challenge. Details on the proposed analysis can be found in Section~\ref{sec:framework}, \ref{sec:analysis_process}. Finally, the main contributions are summarized as follows:
\begin{enumerate}
\item[(a)] 
SDQ is proposed to address maximization bias while exhibiting favorable convergence properties. Moreover, this modification enables double $Q$-learning to be viewed through the lens of a switching system and enables more efficient finite-time analysis.

\item[(b)] Based on the switching system model, novel finite-time analysis techniques and new expected error bounds are proposed for the SDQ. Moreover, the analysis frameworks introduced in this paper provide new theoretical perspectives and additional insights on double $Q$-learning and related algorithms. 
\end{enumerate}

\section{Related works}
There has been a growing body of research on finite-time analyses of 
$Q$-learning and its variants. Beyond double $Q$-learning, several recent studies 
have investigated finite-sample guarantees for vanilla $Q$-learning. 
For instance,~\cite{chen2024lyapunov} developed a Lyapunov-based theory for Markovian stochastic approximation. This theory provides finite-sample bounds for asynchronous tabular $Q$-learning under Markovian sampling.

Other works~\citep{qu2020finite, li2020sample} have also established 
non-asymptotic error bounds for $Q$-learning with similar Markovian 
sampling assumptions. 

There have been relatively few studies on the convergence analysis of double $Q$-learning. 
The first convergence proof was provided by~\citep{hasselt2010double}, 
but it only established asymptotic convergence. 
Finite-time convergence results have been more recently presented in~\citep{one,two}, 
which analyzed both synchronous and asynchronous double $Q$-learning 
under non-i.i.d. observation models. 
Unlike prior analyses of $Q$-learning and double $Q$-learning that typically rely on
Markovian sampling and cover-time conditions, our SDQ adopts an i.i.d.\ stochastic exploration assumption 
in which each state-action pair pair is independently drawn from a stationary distribution that is positive for all state-action pair.
This formulation simplifies the finite-time analysis while preserving the essential stochastic nature of RL. We note that the Markovian setting is more practical and realistic, as it captures temporal correlations commonly observed in RL environments.
However, with modest additional effort, such as incorporating mixing-time or cover-time conditions, the proposed framework can also be extended to the Markovian case as demonstrated in~\cite{lim2024finite}. Regarding the step-size, the previous analyses in~\citep{one,two} impose more restrictive ranges 
for convergence, whereas the proposed analysis allows a broader class of step-sizes $\alpha \in (0,1)$.

Moreover, existing works in~\citep{one,two} have primarily established high-probability error bounds, which provide probabilistic guarantees on the learning process.
In contrast, our analysis focuses on expected error bounds, which characterize the expected estimation accuracy and offer a complementary viewpoint.
While the two types of results serve different purposes, they are closely related:
an expected error bound can typically be converted into a probabilistic one through concentration inequalities.
Therefore, the expected bound used in this study should not be viewed as weaker or stronger, but rather as a complementary formulation that provides mean-error characterization aligned with our system-theoretic analysis framework.
In this sense, our result complements the existing probabilistic analyses and contributes to a more complete understanding of finite-time behavior in double $Q$-learning.

\section{Preliminaries}\label{sec:prelim}

\subsection{Markov decision problem}
We focus on an infinite-horizon discounted Markov decision process (MDP) in which an agent learns an optimal policy by maximizing the expected discounted sum of future rewards through sequential interactions with the environment. 
The environment is modeled by a finite state space ${\cal S}:=\{ 1,2,\ldots ,|{\cal S}|\}$ and a finite action space ${\cal A}:= \{1,2,\ldots,|{\cal A}|\}$, where $|{\cal S}|$ and $|{\cal A}|$ denote the cardinalities of the state and action spaces, respectively.
At each step, given the current state $s\in{\cal S}$, the agent chooses an action $a\in{\cal A}$, and the system transitions to a next state $s'\in{\cal S}$ with probability $P(s' \mid s,a)$. It receives a reward $r(s,a,s')$. For simplicity, we assume that the reward function is deterministic and denote it by 
$r(s_k, a_k, s_{k+1}) =: r_{k+1}$, where $k \in \{0,1,\ldots\}$. A \emph{deterministic policy} $\pi: \mathcal{S} \to \mathcal{A}$ assigns to each state 
$s \in \mathcal{S}$ a specific action $\pi(s) \in \mathcal{A}$. 
The objective of the Markov decision problem is to determine an 
\emph{optimal policy} $\pi^*$ that maximizes the expected cumulative discounted rewards 
over an infinite horizon:
\begin{align*}
\pi^* := \arg\max_{\pi \in \Theta} 
\mathbb{E}\!\left[\sum_{k=0}^{\infty} \gamma^k r_{k+1} \,\bigg|\, \pi\right],
\end{align*}
where $\gamma \in [0,1)$ is the discount factor, 
$\Theta$ denotes the set of all admissible deterministic policies, 
$(s_0,a_0,s_1,a_1,\ldots)$ represents a state--action trajectory generated under policy $\pi$, 
and $\mathbb{E}[\cdot|\pi]$ indicates the expectation conditioned on $\pi$. The $Q$-function associated with a policy $\pi$ is defined as
\begin{align*}
Q^{\pi}(s,a)
= \mathbb{E}\!\left[\sum_{k=0}^{\infty} \gamma^k r_{k+1} 
\,\bigg|\, s_0=s,\, a_0=a,\, \pi \right], 
\quad (s,a) \in \mathcal{S} \times \mathcal{A}.
\end{align*}
The optimal $Q$-function is given by 
$Q^*(s,a) = Q^{\pi^*}(s,a)$ for all $(s,a) \in \mathcal{S} \times \mathcal{A}$. 
Once $Q^*$ is obtained, the optimal policy can be recovered via the greedy rule:
\begin{align*}
\pi^*(s) = \arg\max_{a \in \mathcal{A}} Q^*(s,a).
\end{align*}
Throughout the paper, we assume that the MDP is ergodic. It ensures the existence of a stationary state distribution and the well-posedness of the problem.

\subsection{Switching system }
Following standard notions in control theory~\citep{liberzon2003switching,lin2009stability,khalil2002nonlinear}, 
a discrete-time switching system can be regarded as a particular instance of a nonlinear dynamical system.
We briefly revisit this concept, as it forms the analytical foundation for representing the update mechanism of $Q$-learning.
We begin with a general nonlinear discrete-time system:
\begin{align}
x_{k+1} = f(x_k), \quad x_0 = z \in \mathbb{R}^n, \quad k \in \{1,2,\ldots\}, \label{eq:nonlinear-system}
\end{align}
where $x_k \in \mathbb{R}^n$ denotes the system state and $f:\mathbb{R}^n \to \mathbb{R}^n$ is a nonlinear mapping. 
A point $x^* \in \mathbb{R}^n$ is called an equilibrium point of~\eqref{eq:nonlinear-system} if the state remains at $x^*$ whenever the system starts from $x_0 = x^*$. 
For \eqref{eq:nonlinear-system}, equilibrium points are the real roots of the equation $f(x) = x$. 
Moreover, an equilibrium $x^*$ is said to be globally asymptotically stable if, for any initial condition $x_0 \in \mathbb{R}^n$, the state trajectory satisfies $x_k \to x^*$ as $k \to \infty$.

A subclass of nonlinear systems is the \emph{linear switching system} \cite{liberzon2003switching}, expressed as
\begin{align}
x_{k+1} = A_{\sigma_k} x_k, \quad x_0 = z \in \mathbb{R}^n, \quad k \in \{0,1,\ldots\}, \label{eq:switched-system}
\end{align}
where $x_k \in \mathbb{R}^n$ is the state, 
$\sigma_k \in \mathcal{M} := \{1,2,\ldots,M\}$ denotes the mode at time $k$, 
and $\{A_\sigma\}_{\sigma \in \mathcal{M}}$ are the subsystem matrices. 
The switching signal $\sigma_k$ may vary arbitrarily or follow a prescribed policy, such as a state-feedback rule $\sigma_k = \sigma(x_k)$. 
A more general formulation is the \emph{affine switching system}:
\begin{align*}
x_{k+1} = A_{\sigma_k} x_k + b_{\sigma_k}, 
\quad x_0 = z \in \mathbb{R}^n, \quad k \in \{0,1,\ldots\},
\end{align*}
where $b_{\sigma_k} \in \mathbb{R}^n$ represents a mode-dependent additional input vector. 
The presence of this additional affine term generally increases the difficulty of ensuring system stability.

\subsection{Double Q-learning}
Double $Q$-learning~\citep{hasselt2010double} is a variant of $Q$-learning~\citep{watkins}, which can reduce the maximization bias in its update by updating one of the two $Q$-estimators $Q^{A}_k$ and $Q^{B}_k$, which is selected randomly. Therefore, the corresponding update can be presented as follows: 
\begin{align}\label{eqn:original_update}
Q_{k+1}^{A}(s_{k},a_{k})&=\zeta_{k} Q_{k}^{A}(s_{k},a_{k})+\alpha_{k}\zeta_{k} \{ r_{k+1}\nonumber+\gamma Q_{k}^{B}(s_{k+1}, \text{argmax}_{a\in \mathcal{A}} Q_{k}^{A}(s_{k+1},a))\nonumber-Q_{k}^{A}(s_{k},a_{k})\},\nonumber\\
Q_{k+1}^{B}(s_{k},a_{k})&=(1-\zeta_{k}) Q_{k}^{B}(s_{k},a_{k})+\alpha_{k}(1-\zeta_{k}) \{ r_{k+1}+\gamma Q_{k}^{A}(s_{k+1}, \text{argmax}_{a\in \mathcal{A}} Q_{k}^{B}(s_{k+1},a))-Q_{k}^{B}(s_{k},a_{k})\},
\end{align}
where $Q_k^{A}$ and $Q_k^{B}$ denote two separate estimators of the optimal
action--value function $Q^*$ at iteration $k$.
The pair $(s_k,a_k)\in\mathcal{S}\times\mathcal{A}$ represents the
state--action pair sampled at time $k$, $r_{k+1}$ is the immediate reward
observed after taking action $a_k$ at state $s_k$, and $s_{k+1}$ is the
subsequent state.
The scalar $\alpha_k>0$ denotes the step size at iteration $k$, and
$\gamma\in(0,1)$ is the discount factor.
The Bernoulli random variable $\zeta_k\in\{0,1\}$ determines which estimator
is updated at iteration $k$, with
$\mathbb{P}(\zeta_k=0)=\mathbb{P}(\zeta_k=1)=0.5$.
At each iteration, only one of the two estimators is updated using the
greedy action determined by the other estimator. By eliminating the max operator in its updates, it is known to reduce effectively the maximization bias.

\subsection{Assumption and Definition}\label{sec:ass_def}
Throughout, we make the following standard assumptions, which are widely adopted in the RL literature.
We consider the scenario where the data samples are generated from an RL agent interacting with the environment under a fixed
behavior policy $\beta$. At each iteration, the state--action pair $(s,a)$ is assumed to be drawn independently from the stationary state distribution $p$ and the behavior policy $\beta$, 
which leads to the following joint distribution:
\[
d(s,a)=p(s)\beta (a \vert s), \quad (s,a) \in \mathcal{S} \times \mathcal{A}.
\]
Moreover, the following assumptions will be adopted. 
\begin{assumption}\label{asm:one}
(Sufficient exploration) 
$d(s,a)>0$ for all $s\in \mathcal{S}, a\in \mathcal{A}$.
\end{assumption}
\begin{assumption}(Constant step-size)\label{asm:two}
The step-size is a constant $\alpha\in (0,1).$
\end{assumption}
\begin{assumption}(Unit bounded reward)\label{asm:three}
We have
\[
\max_{(s,a,s')\in \mathcal{S} \times \mathcal{A} \times \mathcal{S}} \lvert r(s,a,s') \rvert = R_{\text{max}} \leq 1. 
\]
\end{assumption}
\begin{assumption}(Bounded initialization)\label{asm:four}
The initial iterate $Q_{0}$ satisfies $\lVert Q_{0} \rVert_{\infty}\leq 1.$
\end{assumption}
\noindent\cref{asm:one} 
guarantees sufficient coverage of the state-action space
and~\cref{asm:three} and~\ref{asm:four} are introduced without loss of generality and for simplicity of the analysis.
For notational convenience, we define the following quantities that will be used throughout the paper.
\begin{definition}
\label{def:d_max_and_d_min}$\,$
1) Maximum state-action occupancy frequency:
\begin{align*}
d_{\mathrm{max}}\coloneq\max_{(s,a)\in \mathcal{S} \times \mathcal{A}}d(s,a)\in(0,1).
\end{align*}
2) Minimum state-action occupancy frequency:
\begin{align*}
d_{\mathrm{min}}\coloneq\min_{(s,a)\in \mathcal{S} \times \mathcal{A}}d(s,a)\in(0,1).
\end{align*}
3) Exponential decay rate:
\begin{align}
\rho\coloneq1-\alpha d_{\mathrm{min}}(1-\gamma).
\end{align}
\end{definition}
\noindent Under~\cref{asm:two}, the decay rate satisfies $\rho \in (0,1)$.
Throughout the paper, we will use the following compact notations for dynamical system representations:
\[
P=
\begin{bmatrix}
P_{1}\\
\vdots\\
P_{\lvert \mathcal{A} \rvert}
\end{bmatrix},
R=
\begin{bmatrix}
R_{1}\\
\vdots\\
R_{\lvert \mathcal{A} \rvert}
\end{bmatrix},
Q=
\begin{bmatrix}
Q(\cdot,1)\\
\vdots\\
Q(\cdot,\lvert \mathcal{A} \rvert)
\end{bmatrix},
\]
and
\[
D_a=\mathrm{diag}\big(d(1,a),\ldots,d(|\mathcal{S}|,a)\big),\quad
D=\mathrm{blkdiag}(D_1,\ldots,D_{|\mathcal{A}|}).
\]
where $P_a = P(\cdot|a,\cdot) \in \mathbb{R}^{|\mathcal{S}|\times|\mathcal{S}|}$, $Q(\cdot,a)\in\mathbb{R}^{|\mathcal{S}|}$ for $a\in\mathcal{A}$, and $R_a(s)\coloneqq\mathbb{E}[r(s,a,s')|s,a]$.
Here, $\mathrm{diag}(\cdot)$ denotes a diagonal matrix formed from its vector
arguments, and $\mathrm{blkdiag}(\cdot)$ denotes a block-diagonal matrix whose
diagonal blocks are the given matrices.
Note that $P\in\mathbb{R}^{|\mathcal{S}||\mathcal{A}|\times|\mathcal{S}|}$, $R,Q\in\mathbb{R}^{|\mathcal{S}||\mathcal{A}|}$, and $D\in\mathbb{R}^{|\mathcal{S}||\mathcal{A}|\times|\mathcal{S}||\mathcal{A}|}$.
With this notation, the $Q$-function can be represented as a single stacked vector $Q\in\mathbb{R}^{|\mathcal{S}||\mathcal{A}|}$ that enumerates all $Q(s,a)$ values for every $(s,a)\in\mathcal{S}\times\mathcal{A}$.
Each entry $Q(s,a)$ can be expressed as $Q(s,a) = (e_a \otimes e_s)^T Q$, 
where
$e_s\in\mathbb{R}^{|\mathcal{S}|}$ and $e_a\in\mathbb{R}^{|\mathcal{A}|}$
denote the standard basis vectors, whose $s$-th and $a$-th components are
equal to one and all other components are zero, respectively,
and $\otimes$ denotes the Kronecker product.
Under~\cref{asm:two}, the matrix $D$ is a nonsingular diagonal matrix with strictly positive diagonal elements.
For any stochastic policy $\pi:\mathcal{S}\to\Delta_{|\mathcal{A}|}$, where $\Delta_{|\mathcal{A}|}$ denotes the probability simplex over $\mathcal{A}$, we define the matrix

\begin{align}
\Pi^{\pi}\coloneqq
\begin{bmatrix}
\pi(1)^{T}\otimes e_{1}^{T}\\
\pi(2)^{T}\otimes e_{2}^{T}\\
\vdots\\
\pi(\lvert S \rvert)^{T}\otimes e_{\lvert \mathcal{S} \rvert}^{T}\\
\end{bmatrix}
\in \mathbb{R}^{\lvert \mathcal{S}\rvert\times\lvert \mathcal{S}\rvert\lvert \mathcal{A}\rvert}\label{eq:swtiching-matrix}
\end{align}
It is well known that $P\Pi^\pi \in \mathbb{R}^{|\mathcal{S}||\mathcal{A}|\times|\mathcal{S}||\mathcal{A}|}$ represents the transition probability matrix of state–action pairs under policy $\pi$.
In the case of a deterministic policy $\pi:\mathcal{S}\to\mathcal{A}$, the stochastic policy can be equivalently expressed using a one-hot encoding vector
$\vec{\pi}(s) := e_{\pi(s)} \in \Delta_{|\mathcal{A}|}$.
The resulting action-transition matrix takes the same form as~(\ref{eq:swtiching-matrix}), with $\pi$ replaced by $\vec{\pi}$.
For any $Q \in \mathbb{R}^{|\mathcal{S}||\mathcal{A}|}$, we denote by $\pi_Q(s) := \arg\max_{a\in\mathcal{A}} Q(s,a)$ the greedy policy with respect to $Q$, and use the shorthand notation $\Pi_Q := \Pi^{\pi_Q}$.

We recall a standard result ensuring that the $Q$-learning sequence remains bounded~\citep{gosavi2006boundedness}, 
which plays an important role in our analysis.
\begin{lemma} \citep{gosavi2006boundedness}\label{lem:lem1}
If the step-size is less than one, then for all $k\geq 0$
\[
\lVert Q_{k} \rVert_{\infty}\leq Q_{\mathrm{max}} = \frac{\mathrm{max}{\{R_{\mathrm{max}},\mathrm{max}_{(s,a)\in \mathcal{S} \times \mathcal{A}}\|Q_{0}(s,a)\|_{\infty}\}}}{1-\gamma}.
\]
\end{lemma}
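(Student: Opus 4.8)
The plan is to prove the bound by induction on $k$, exploiting the fact that an asynchronous $Q$-learning update of the sampled coordinate is a convex combination (because the step-size lies in $(0,1)$) and that the constant $Q_{\text{max}}$ is chosen precisely so that $R_{\text{max}}+\gamma Q_{\text{max}}\le Q_{\text{max}}$. In other words, $Q_{\text{max}}$ is a super-solution of the fixed-point relation that the update contracts toward, so it is preserved coordinate-wise along the iteration.

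For the base case $k=0$, note that $1-\gamma\le 1$ implies $Q_{\text{max}}\ge\max\{R_{\text{max}},\max_{(s,a)}Q_0(s,a)\}\ge\lVert Q_0\rVert_\infty$ (reading $\max_{(s,a)}Q_0(s,a)$ as the relevant magnitude of the initialization, cf.\ \cref{asm:four}), so the claim holds at $k=0$. For the inductive step, suppose $\lVert Q_k\rVert_\infty\le Q_{\text{max}}$. The update changes only the sampled coordinate $(s_k,a_k)$,
\[
Q_{k+1}(s_k,a_k)=(1-\alpha_k)\,Q_k(s_k,a_k)+\alpha_k\bigl(r_k+\gamma\max_{a'\in A}Q_k(s_{k+1},a')\bigr),
\]
and leaves all other coordinates unchanged. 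Since $\alpha_k<1$, both weights $1-\alpha_k$ and $\alpha_k$ are nonnegative, so by the triangle inequality and the induction hypothesis,
\[
|Q_{k+1}(s_k,a_k)|\le(1-\alpha_k)Q_{\text{max}}+\alpha_k\bigl(R_{\text{max}}+\gamma Q_{\text{max}}\bigr).
\]
The key arithmetic observation is that $R_{\text{max}}\le\max\{R_{\text{max}},\max_{(s,a)}Q_0(s,a)\}=(1-\gamma)Q_{\text{max}}$, hence $R_{\text{max}}+\gamma Q_{\text{max}}\le(1-\gamma)Q_{\text{max}}+\gamma Q_{\text{max}}=Q_{\text{max}}$. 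Substituting this back gives $|Q_{k+1}(s_k,a_k)|\le(1-\alpha_k)Q_{\text{max}}+\alpha_k Q_{\text{max}}=Q_{\text{max}}$, while for every untouched coordinate $(s,a)\ne(s_k,a_k)$ we have $|Q_{k+1}(s,a)|=|Q_k(s,a)|\le Q_{\text{max}}$ directly. Taking the maximum over all coordinates yields $\lVert Q_{k+1}\rVert_\infty\le Q_{\text{max}}$, which closes the induction.

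There is essentially no hard step here: the argument is a routine stochastic-approximation boundedness induction, and the only points requiring care are (i) aligning the constant $Q_{\text{max}}$ with the super-solution inequality $R_{\text{max}}+\gamma Q_{\text{max}}\le Q_{\text{max}}$, and (ii) using the hypothesis $\alpha_k\in(0,1)$ (\cref{asm:two}) to ensure each update is a genuine convex combination so that magnitudes cannot grow. Since the statement is classical, one may alternatively simply invoke~\citep{gosavi2006boundedness} verbatim; the sketch above is included for completeness.
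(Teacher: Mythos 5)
Your induction is correct and is the standard argument for this classical fact; the paper itself offers no proof and simply cites \citet{gosavi2006boundedness}, so your sketch fills in exactly the step the paper leaves implicit. Two small points are worth making explicit. First, as you note, the numerator $\max_{(s,a)}Q_0(s,a)$ in the statement should really be read as $\lVert Q_0\rVert_\infty$ for the base case and the inductive step to go through for arbitrary (possibly negative) initializations; under Assumption~\ref{asm:four} this distinction is harmless, and the paper only ever uses the consequence $Q_{\max}\le \tfrac{1}{1-\gamma}$. Second, in this paper the lemma is applied to the SDQ iterates $Q_k^A,Q_k^B$, whose bootstrap term is $\gamma\,Q_k^A\bigl(s_{k+1},\argmax_a Q_k^B(s_k,a)\bigr)$ rather than $\gamma\max_{a'}Q_k(s_{k+1},a')$; your convex-combination argument carries over verbatim because the bootstrap value is still bounded in magnitude by $\lVert Q_k^A\rVert_\infty$ regardless of which action is selected, so one runs the same induction on $\max\{\lVert Q_k^A\rVert_\infty,\lVert Q_k^B\rVert_\infty\}\le Q_{\max}$. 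With those caveats acknowledged, nothing is missing: the key inequality $R_{\max}+\gamma Q_{\max}\le Q_{\max}$ and the requirement $\alpha_k<1$ are exactly the two ingredients the cited result relies on.
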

\noindent From Assumptions~\ref{asm:three} and~\ref{asm:four}, we can easily see that $Q_{\max}\leq\frac{1}{1-\gamma}$.

\section{Simultaneous double Q-learning (SDQ)}\label{sec:MDQL}
\label{headings}
\subsection{Algorithm}\label{sec:MDQL_alg}
In this paper, we consider the following modified double $Q$-learning, called simultaneous double $Q$-learning (SDQ):
\begin{align}
\label{eqn:modified_update}
Q_{k+1}^{A}(s_{k},a_{k})&=Q_{k}^{A}(s_{k},a_{k})+\alpha_{k} \{ r_{k+1}\nonumber+\gamma Q_{k}^{A}(s_{k+1}, \text{argmax}_{a\in A} Q_{k}^{B}(s_{k+1},a))\nonumber-Q_{k}^{A}(s_{k},a_{k})\},\nonumber\\
Q_{k+1}^{B}(s_{k},a_{k})&=Q_{k}^{B}(s_{k},a_{k})+\alpha_{k} \{ r_{k+1}+\gamma Q_{k}^{B}(s_{k+1}, \text{argmax}_{a\in A} Q_{k}^{A}(s_{k+1},a))-Q_{k}^{B}(s_{k},a_{k})\},
\end{align}
where $Q_k^{A}$ and $Q_k^{B}$ denote two separate estimators of the optimal
action--value function $Q^*$ at iteration $k$.
The pair $(s_k,a_k)\in\mathcal{S}\times\mathcal{A}$ represents the
state--action pair sampled at time $k$, $r_{k+1}$ is the immediate reward
observed after taking action $a_k$ at state $s_k$, and $s_{k+1}$ is the
subsequent state.
The scalar $\alpha_k>0$ denotes the step size at iteration $k$, and
$\gamma\in(0,1)$ is the discount factor.
The first difference between the original double $Q$-learning and SDQ is the role of each $Q$-estimator in the update. In the original double $Q$-learning, an optimal action is selected from the same $Q$-estimator, and it employs the other $Q$-estimator for bootstrapping. On the other hand, in the proposed version, an optimal action is selected from the other $Q$-estimator, and it employs the same $Q$-estimator for bootstrapping. This modification enables the use of the switching system framework from~\cite{lee2023discrete}.
It overcomes the difficulty caused by the switched order of $Q^A_k$ and $Q^B_k$ in the original double $Q$-learning while retaining the advantage of reducing overestimation bias.

The other difference is in the Bernoulli variable. Unlike the standard double $Q$-learning, which uses a Bernoulli variable for the $Q$-estimator selection, the modified version updates the two $Q$-estimators synchronously, which can potentially speed up the convergence. However, we note that our analysis can also include the Bernoulli random selection as in the original form without major changes in the finite-time error analysis. Besides, a potential issue that arises by eliminating the random $Q$-estimator selection is that if initially $Q^A_0 = Q^B_0$, then $Q^A_k = Q^B_k$ for all $k \ge 0$. In this case,~(\ref{eqn:modified_update}) is reduced to the standard $Q$-learning because in this case, $Q^A_k = Q^B_k$ for all $k\geq 0$. To bypass the issue for implementation, one simple approach is to randomly initialize $Q^A_0$ and $Q^B_0$ so that $Q^A_0 \neq Q^B_0$.

To demonstrate its effect, let us consider the example in Figure~\ref{fig:maximization_bias}(left) (adopted from \citep{sutton2018reinforcement}, \text{Ch. 6.7}). We consider an epsilon-greedy exploration with $\epsilon = 0.1$, constant step‐size $\alpha = 0.1$, and discount factor $\gamma=0.9$. The experiment consists of 1,000 independent runs, each comprising 500 training episodes. The initial $Q$‐values for SDQ are uniformly sampled from $[-0.3,0.3]$. We also include a perturbed $Q$‐learning variant, in which the $Q$‐table is initialized by sampling from the same uniform distribution $[-0.3,0.3]$. As observed in the result, SDQ initially suffers from overestimation of $Q$‐values due to the random initialization of its two estimators. However, this bias is quickly mitigated, and SDQ converges at a rate similar to that of original double $Q$‐learning, as shown in Figure~\ref{fig:maximization_bias}(right). Furthermore, both standard $Q$‐learning and the perturbed $Q$‐learning variant continue to exhibit overestimation, highlighting the efficacy of SDQ in mitigating this bias.
\begin{figure*}[h!]
  \centering
    {\includegraphics[scale=0.25]{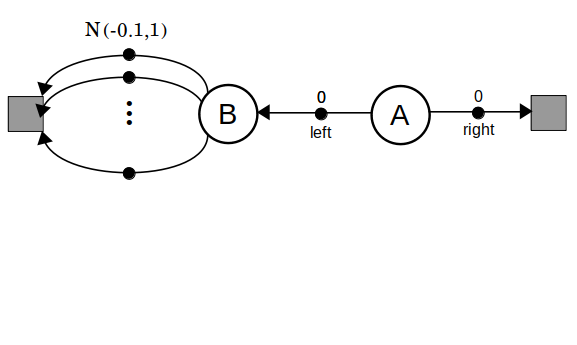}}
    {\includegraphics[scale=0.38]{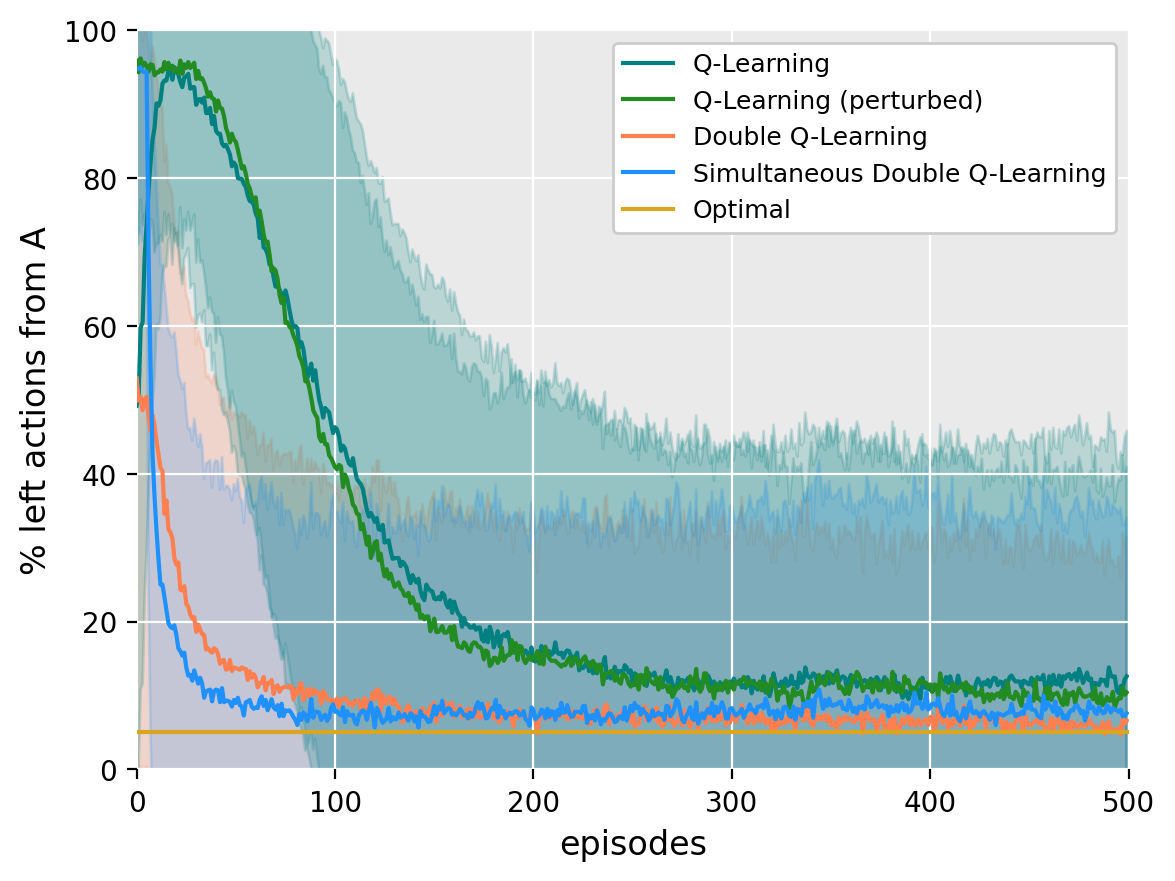}} 
    
  \caption{\textbf{Left}: An example from~\citep{sutton2018reinforcement}. The episode always starts from the $A$ node. Taking the right action from the $A$ node results in zero reward, and the episode is terminated. Otherwise, taking the left action leads to state $B$, where the agent chooses one of 10 available actions. Executing any of these actions results in a reward sampled from a normal distribution with mean $-0.1$ and standard deviation $1$. Then, the episode is terminated as well. Although $Q^{*}(A,\text{right})$ is zero and $Q^{*}(A,\text{left})$ is $-0.1\gamma$, $Q$-learning favors left action because of maximization bias. \textbf{Right}: Comparison of experiment results: SDQ vs. double $Q$-learning vs. $Q$-learning vs. $Q$-learning (perturbed, with randomly initialized $Q$-values).} 
  \label{fig:maximization_bias}
\end{figure*}

\subsection{Experiment}\label{sec:experiment}
We organize our evaluation into two complementary studies. First, we test 
the ability of SDQ to correct maximization bias in a simple stochastic 8×8 grid world where each step yields a stochastic reward. This environment makes overestimation bias clear and allows us to compare SDQ against both standard and perturbed $Q$-learning and double $Q$-learning. Here, the perturbed $Q$-learning variant refers to standard $Q$-learning
with randomly initialized $Q$-values. Second, we demonstrate that SDQ converges faster than double $Q$-learning across three deterministic OpenAI Gym tasks, FrozenLake-v0, CliffWalking-v0, and Taxi-v3.
All agents use the same epsilon-greedy exploration strategy, learning rate, and discount factor.
Each agent is trained until its learning curve has fully stabilized. Together, these experiments show that SDQ not only nearly eliminates overestimation bias but also delivers consistent gains in convergence speed.
\subsubsection{Grid World}

We begin by evaluating SDQ in a simple stochastic grid‐world from Figure~\ref{fig:grid_world}(left) designed to expose maximization bias. The agent occupies an 8×8 grid, starting in the lower‐left cell and seeking the upper‐right goal. Each non‐terminal transition yields a reward of $-10$ or $+2$ with equal probability, while entering the goal state grants $+20$ and immediately terminates the episode.  All five algorithms, $Q$-learning, perturbed $Q$-learning, double $Q$-learning, perturbed double $Q$-learning, and SDQ, are run for 10,000 steps using epsilon-greedy where $\varepsilon(s)=1/\sqrt{n(s)}$ and $n(s)$ is the number of times state $s$ has been visited. The learning rate $\alpha_k(s, a)$ is chosen as a linear decay, $\alpha_k(s, a) = 1/n_k(s, a)$. In the case of double $Q$-learning, the count $n_k(s, a)$ is set to $n_k^A(s, a)$ when updating $Q_k^A$, and to $n_k^B(s, a)$ when updating $Q_k^B$. The variables $n_k^A$ and $n_k^B$ respectively record how many times each state–action pair has been updated in the two value functions. The discount factor is set to $\gamma=0.95$, and $Q$‐values are initialized by sampling uniformly from $[-0.3,0.3]$, consistent with the setup in Figure \ref{fig:maximization_bias}. Results are averaged over 10 independent runs. Figure~\ref{fig:grid_world}(middle) shows the average cumulative reward per step. SDQ achieves a marginally higher final return than the other methods. Figure~\ref{fig:grid_world}(right) plots the maximum
action-value at the start state,
$\max_{a} Q_k(s_0,a)$, against the true optimal value
$\max_{a} Q^{*}(s_0,a)$ (dashed line). Both standard and perturbed $Q$-learning clearly overestimate, while the two double $Q$-learning variants underestimate. SDQ stays closest to the optimum throughout.
This result shows that it effectively corrects the overestimation bias.

\subsubsection{FrozenLake, CliffWalking, and Taxi Environments}
Next, we evaluate SDQ on three deterministic Gym tasks, FrozenLake-v0, CliffWalking-v0, and Taxi-v3, and show that it consistently converges faster than double $Q$-learning. In these experiments, we compare SDQ against original double $Q$-learning and a perturbed variant. We employ epsilon-greedy exploration with \(\epsilon=0.1\), a constant learning rate \(\alpha=0.01\), and a discount factor \(\gamma=0.99\) for all algorithms. $Q$-estimators for SDQ are initialized randomly with the uniform distribution $[0,0.01]$, and for the perturbed double $Q$-learning variant we apply the same uniform initialization $[0,0.01]$ to both estimators. To account for the sparse rewards in FrozenLake-v0, we evaluate the average episodic reward after applying a moving window of size 100 for each episode, smoothing the reward signal since the agent only receives a $+1$ reward upon successful completion. Agents are trained for 10,000 episodes in FrozenLake-v0 to accommodate its sparse rewards, for 500 episodes in CliffWalking-v0, and for 30,000 episodes in Taxi-v3. These episode counts ensure that each environment’s learning curve has stabilized. For all experiments, the results are averaged over $30$ independent runs.

Figure~\ref{fig:experiment} shows that SDQ achieves a modest but consistent improvement in convergence speed over both standard double $Q$-learning and its perturbed variant, suggesting this gain stems from its structural design rather than initialization alone. 
While the final returns are comparable to those of double $Q$-learning,
SDQ generally reaches its steady-state performance faster, which is consistent with the theoretical insight on its improved stability.

\begin{figure}[tpb]
  \centering
  \begin{minipage}{0.3\textwidth}
    \centering
    \begin{tikzpicture}[scale=0.33]
        \draw[step=1cm,thick] (0,0) grid (8,8);
        \node at (0.5,0.5) {\textbf{S}};
        \node at (7.5,7.5) {\textbf{G}};
    \end{tikzpicture}
  \end{minipage}
  \hspace{0.03\textwidth} 
  \begin{minipage}{0.3\textwidth}
    \centering
    \includegraphics[width=\linewidth]{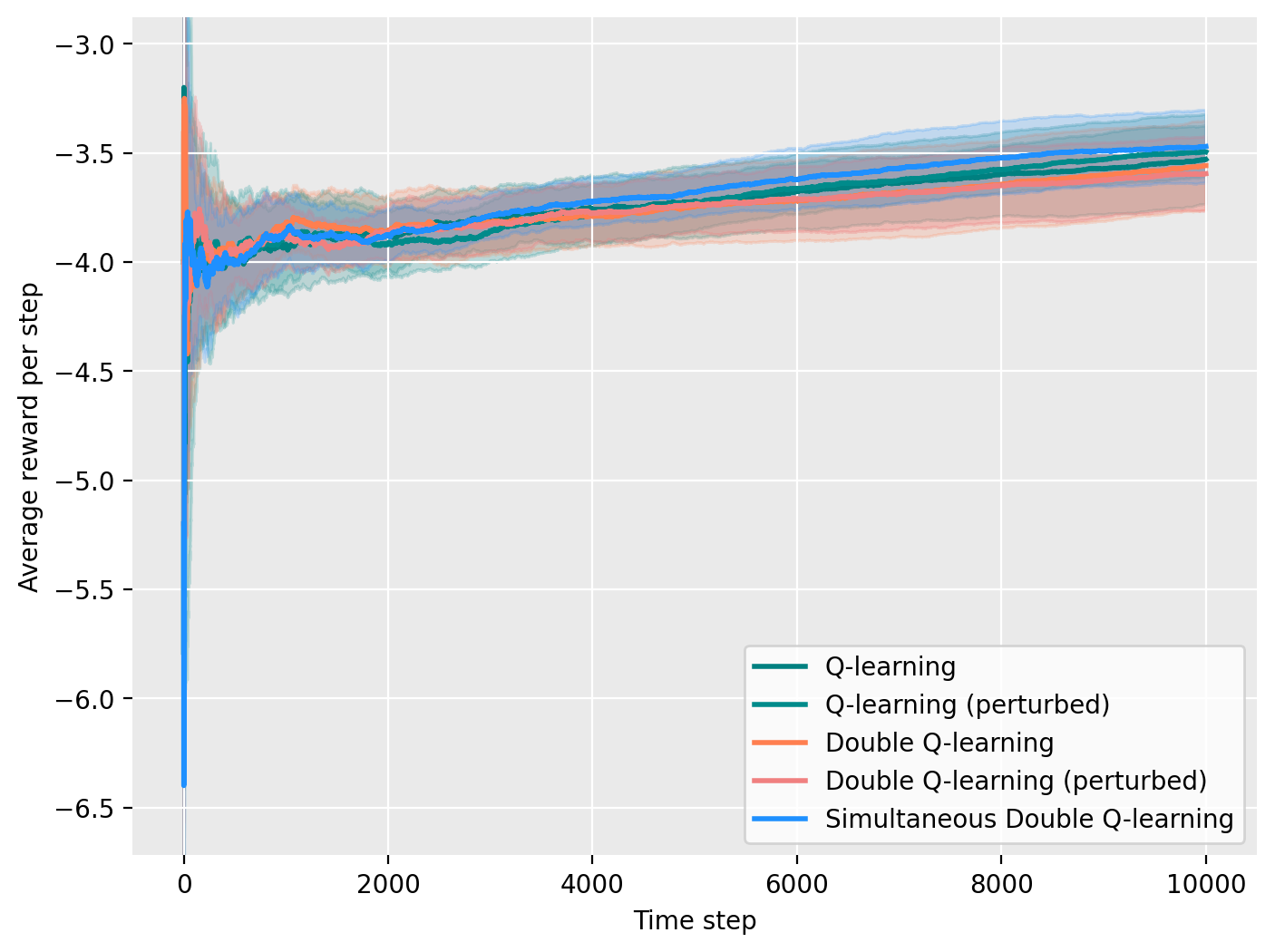}
  \end{minipage}
  \hspace{0.03\textwidth}
  \begin{minipage}{0.3\textwidth}
    \centering
    \includegraphics[width=\linewidth]{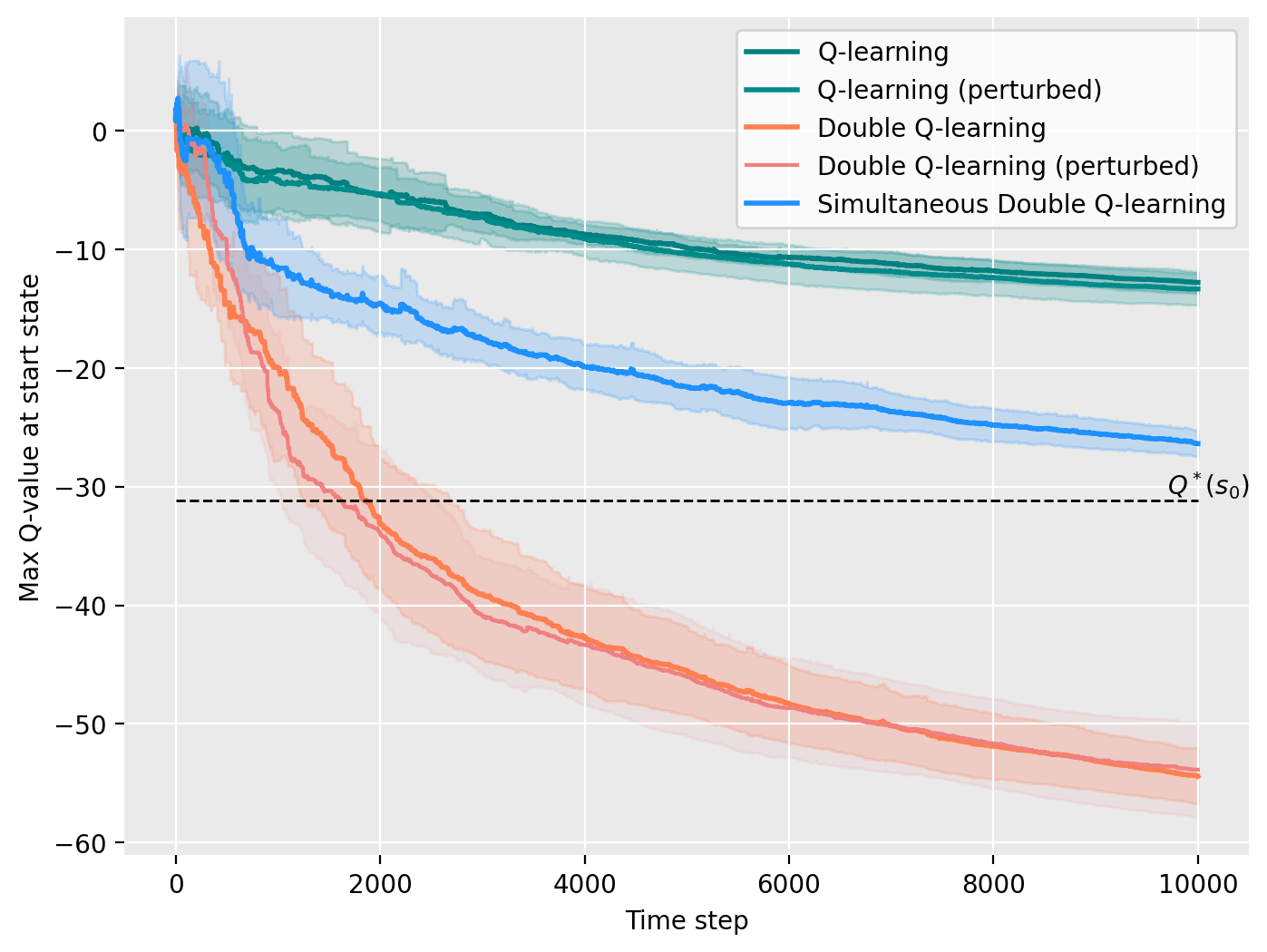}
  \end{minipage}
  \caption{\textbf{Left}: 8×8 Grid world example. \textbf{Middle}: Average cumulative reward per step for each algorithm. \textbf{Right}: Evolution over time of the start‐state’s maximum action‐value.}
  \label{fig:grid_world}
\end{figure}

\begin{figure*}[h!]
\centering
\begin{subfigure}{0.3\textwidth}
    \includegraphics[width=\textwidth]{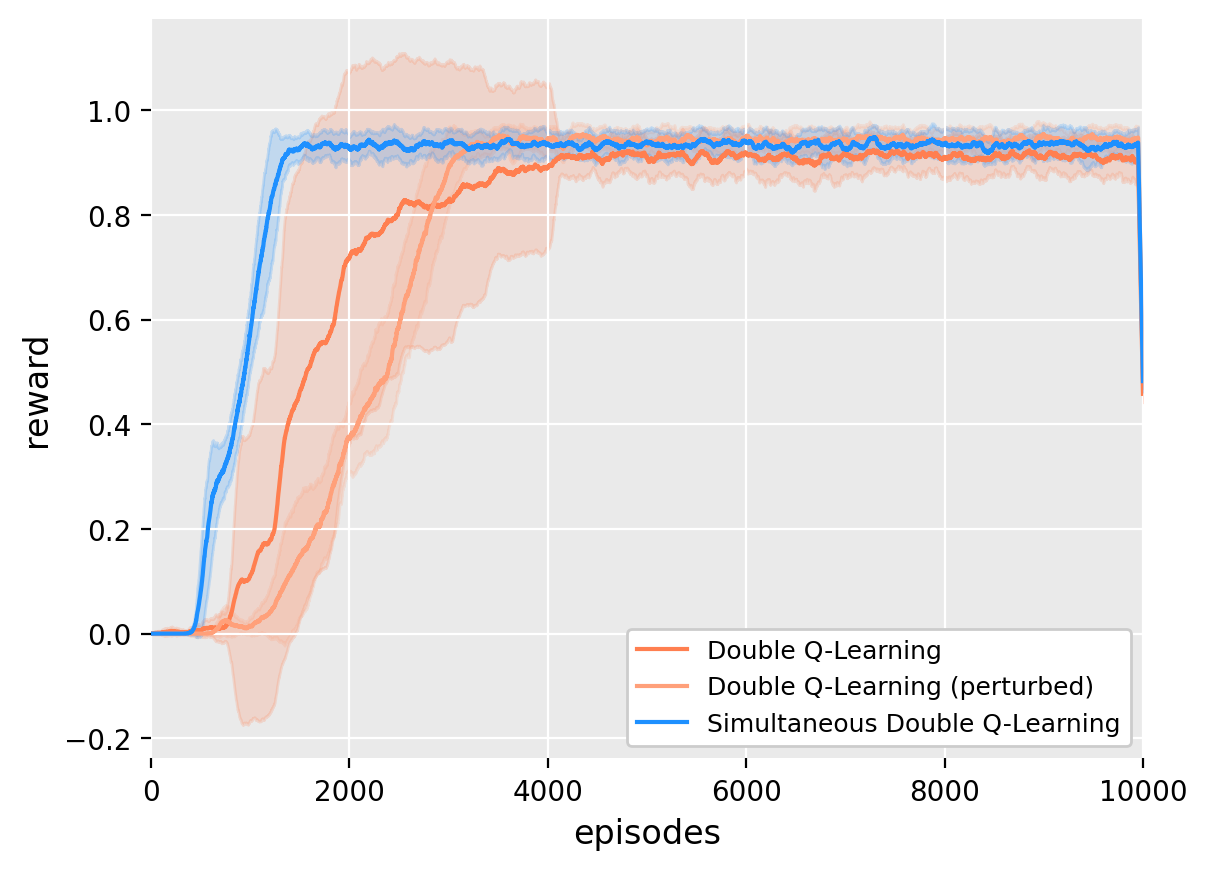}
    \caption{FrozenLake-v0}
    \label{fig:first}
\end{subfigure}
\hfill
\begin{subfigure}{0.3\textwidth}
    \includegraphics[width=\textwidth]{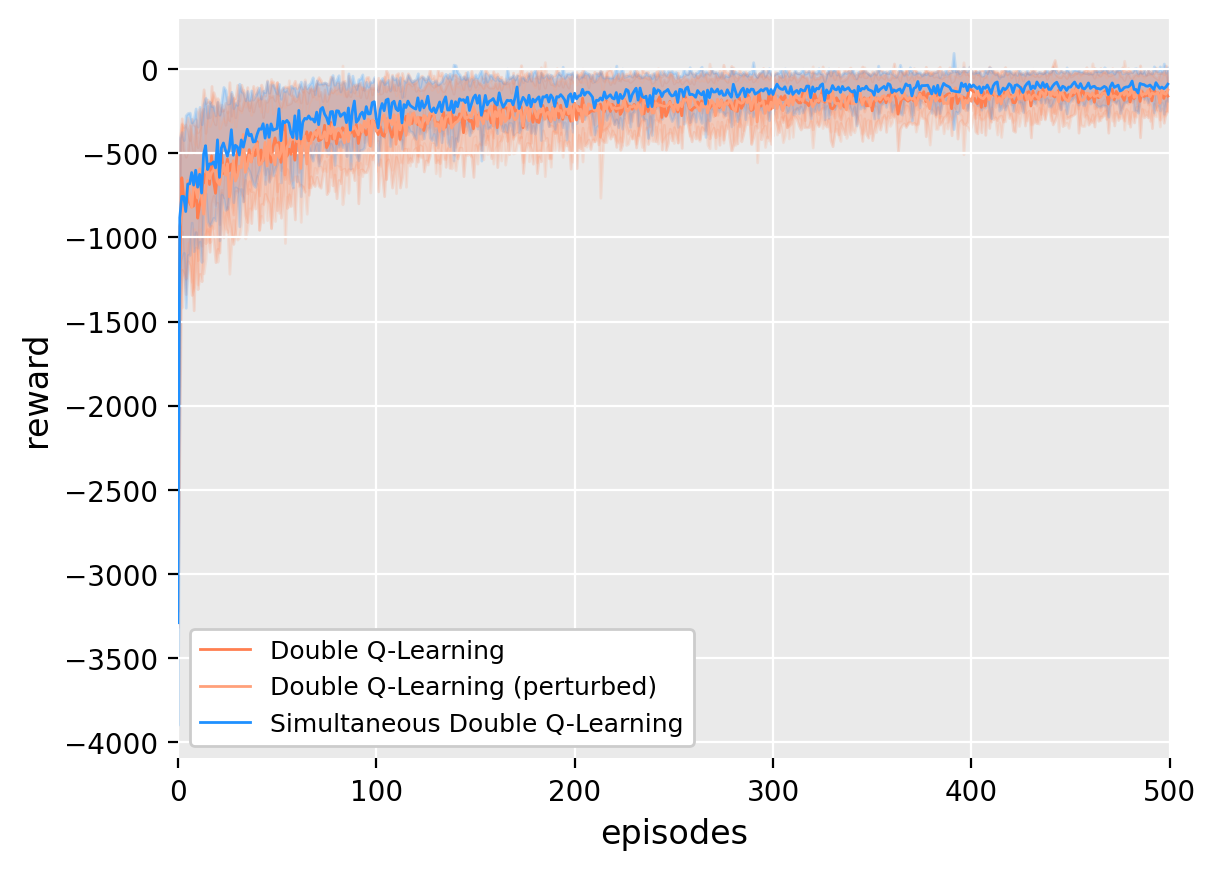}
    \caption{CliffWalking-v0}
    \label{fig:second}
\end{subfigure}
\hfill
\begin{subfigure}{0.3\textwidth}
    \includegraphics[width=\textwidth]{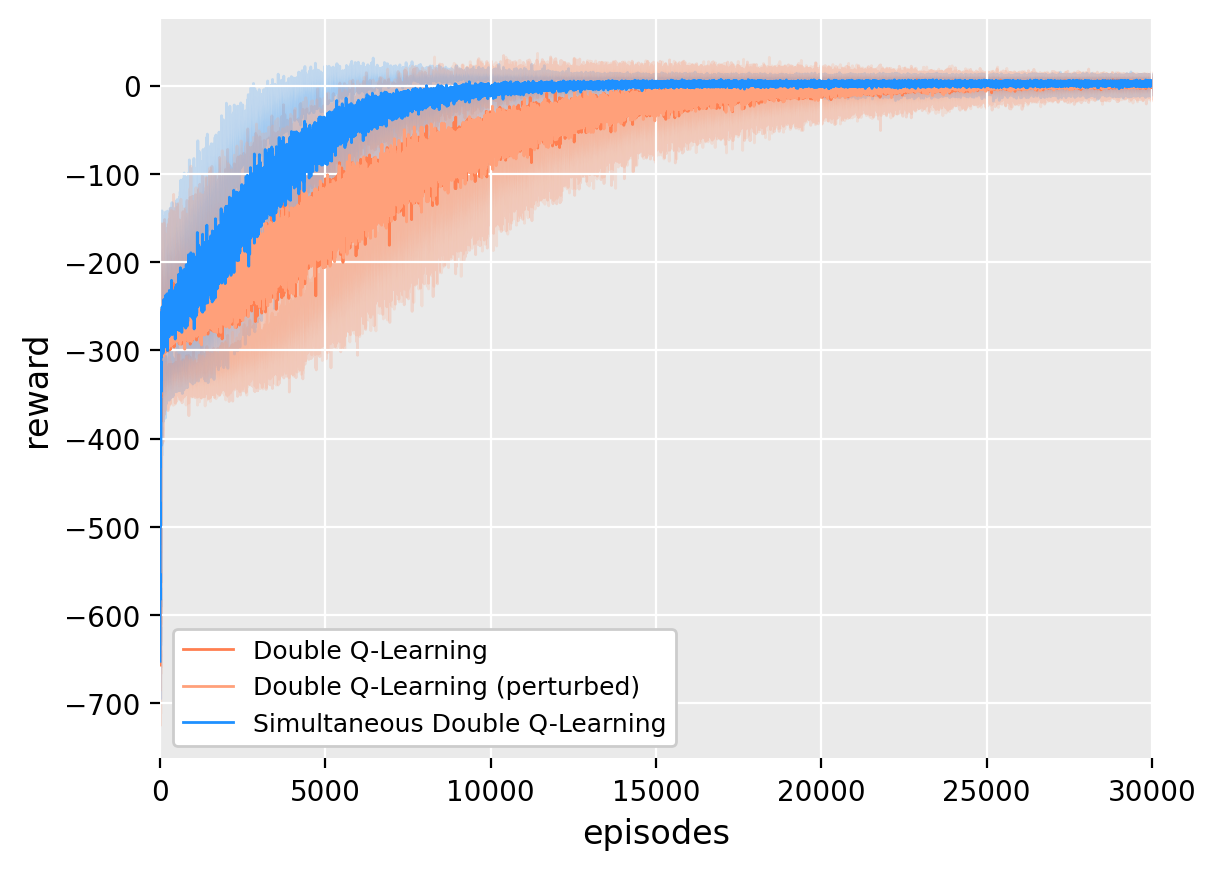}
    \caption{Taxi-v3}
    \label{fig:third}
\end{subfigure}
        
\caption{Comparison of experiment results: SDQ vs. double $Q$-learning vs. double $Q$-learning (perturbed, with randomly initialized $Q$-values). }
\label{fig:experiment}
\end{figure*}

\begin{remark}[Applicability to complex environments]
Recent studies have extended the double-estimator framework of double $Q$-learning 
to more complex and high-dimensional domains, including continuous-control and real-world dynamic settings 
(e.g.,~\citep{nagarajan2025double, patil2025optimizing, fan2025incremental}). 
These works demonstrate that the double-estimator structure remains a useful foundation 
for achieving stable and adaptive learning in complex environments. 
Unlike these approaches, which typically employ stochastic or alternating estimator updates, 
our SDQ adopts a deterministic coupling mechanism where both estimators are updated concurrently. 
This structural difference enables a tractable finite-time convergence
analysis within a control-theoretic framework, 
clarifying the theoretical role of estimator coupling beyond its empirical advantages.\end{remark}

\subsection{Finite-time error bounds}
In this subsection, we present finite-time error bounds for SDQ. Through the analysis given in this paper, we can derive a finite-time error bound given below.
\begin{theorem}\label{thm:final-theorem}
For any $k\geq0$, we have the following error bound:
\begin{align}\label{eqn:final-theorem}
\mathbb{E}[ \lVert Q_{k}^{A}-Q^{*} \rVert_{\infty}] &\leq 
\frac{120\alpha^{1/2}\lvert \mathcal{S}\times \mathcal{A}\rvert}{d_{\mathrm{min}}^{9/2}(1-\gamma)^{11/2}}+\frac{48\rho^{k-4}k^{4}\lvert \mathcal{S}\times \mathcal{A}\rvert^{3/2}}{(1-\gamma)}.
\end{align}
The same bound holds for $Q_{k}^{B}-Q^{*}$.
\end{theorem}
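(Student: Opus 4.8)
The strategy is to recast SDQ as a pair of coupled stochastic affine switching systems and to trap each error iterate between a \emph{lower} and an \emph{upper} comparison system, in the spirit of the switching-system analysis of ordinary $Q$-learning but with the extra bookkeeping needed for the interaction between the two estimators. First I would vectorize \cref{eqn:modified_update}: writing $\tilde Q_{k}^{A}:=Q_{k}^{A}-Q^{*}$, $\tilde Q_{k}^{B}:=Q_{k}^{B}-Q^{*}$, $A_{\pi}:=I-\alpha D+\alpha\gamma DP\Pi^{\pi}$, and invoking the Bellman optimality identity $Q^{*}=R+\gamma P\Pi_{Q^{*}}Q^{*}$, SDQ becomes
\[
\tilde Q_{k+1}^{A}=A_{\pi_{Q_{k}^{B}}}\tilde Q_{k}^{A}+\alpha\gamma DP\bigl(\Pi_{Q_{k}^{B}}-\Pi_{Q^{*}}\bigr)Q^{*}+\alpha w_{k+1}^{A},
\]
and symmetrically for $\tilde Q_{k+1}^{B}$ with $A$ and $B$ interchanged, where $w_{k+1}^{A}$ is the difference between the asynchronous sampled update and its conditional mean given the history $\mathcal{F}_{k}$, so $\mathbb{E}[w_{k+1}^{A}\mid\mathcal{F}_{k}]=0$ and $\pi_{Q_{k}^{B}}$ is $\mathcal{F}_{k}$-measurable. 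I would then record the structural facts that drive everything: (i) since $\Pi^{\pi}$ is nonnegative with unit row sums, $A_{\pi}\ge 0$ entrywise and $A_{\pi}\mathbf{1}=\mathbf{1}-\alpha(1-\gamma)D\mathbf{1}$, so each row sum of $A_{\pi}$ is at most $\rho$ and hence $\lVert A_{\pi}x\rVert_{\infty}\le\rho\lVert x\rVert_{\infty}$ for all $\pi$ and $x$; (ii) $\lVert w_{k}^{A}\rVert_{\infty}$ and $\lVert w_{k}^{B}\rVert_{\infty}$ are bounded by an absolute multiple of $(1-\gamma)^{-1}$ by \cref{lem:lem1} together with \cref{asm:three} and \cref{asm:four}; (iii) the affine term is entrywise non-positive, because $\Pi_{Q}Q^{*}\le\Pi_{Q^{*}}Q^{*}$, and is bounded below by $-2\alpha\gamma\lVert\tilde Q_{k}^{B}\rVert_{\infty}\mathbf{1}$, because $Q_{k}^{B}(s,\pi_{Q_{k}^{B}}(s))\ge Q_{k}^{B}(s,\pi_{Q^{*}}(s))$ forces $Q^{*}(s,\pi_{Q_{k}^{B}}(s))\ge Q^{*}(s,\pi_{Q^{*}}(s))-2\lVert\tilde Q_{k}^{B}\rVert_{\infty}$.

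Next I would introduce, sharing the initial condition $\tilde Q_{0}^{A}$, the \emph{upper} comparison system $\tilde Q_{k+1}^{A,u}=A_{\pi_{Q_{k}^{B}}}\tilde Q_{k}^{A,u}+\alpha w_{k+1}^{A}$, obtained by deleting the non-positive affine term, and the \emph{lower} comparison system $\tilde Q_{k+1}^{A,l}=A_{\pi_{Q_{k}^{B}}}\tilde Q_{k}^{A,l}-2\alpha\gamma\lVert\tilde Q_{k}^{B}\rVert_{\infty}\mathbf{1}+\alpha w_{k+1}^{A}$, obtained by replacing it with its lower bound, and likewise for $B$. Since each $A_{\pi}$ is nonnegative, a one-line induction gives $\tilde Q_{k}^{A,l}\le\tilde Q_{k}^{A}\le\tilde Q_{k}^{A,u}$ entrywise, hence $\lVert\tilde Q_{k}^{A}\rVert_{\infty}\le\lVert\tilde Q_{k}^{A,l}\rVert_{\infty}+\lVert\tilde Q_{k}^{A,u}\rVert_{\infty}$, and the same for $B$. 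Note the essential difference from single-estimator $Q$-learning: the lower system cannot be frozen at the optimal mode $\pi_{Q^{*}}$, because it inherits both the switching signal $\pi_{Q_{k}^{B}}$ and the forcing magnitude $\lVert\tilde Q_{k}^{B}\rVert_{\infty}$ of the \emph{other} estimator.

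It then remains to bound the comparison systems and unwind the coupling. Unrolling the upper system and applying $\lVert A_{\pi}\rVert_{\infty}\le\rho$ factor by factor to the transition products $\Phi_{k,j}:=A_{\pi_{Q_{k-1}^{B}}}\cdots A_{\pi_{Q_{j}^{B}}}$ yields $\lVert\tilde Q_{k}^{A,u}\rVert_{\infty}\le\rho^{k}\lVert\tilde Q_{0}^{A}\rVert_{\infty}+\alpha\bigl\lVert\sum_{j=0}^{k-1}\Phi_{k,j+1}w_{j+1}^{A}\bigr\rVert_{\infty}$; the deterministic part decays geometrically, while the stochastic part I would control through its second moment, passing to the Euclidean norm, using $\mathbb{E}[w_{j+1}^{A}\mid\mathcal{F}_{j}]=0$ and the geometric factors to collapse the sum---here one must also track the weak dependence between $w^{A}$ and $w^{B}$, as both estimators use the same transition---and converting back via $\lVert\cdot\rVert_{\infty}\le\lVert\cdot\rVert_{2}\le\lvert S\times A\rvert^{1/2}\lVert\cdot\rVert_{\infty}$; this is what produces the $\alpha^{1/2}$ and the powers of $\lvert S\times A\rvert$ and $d_{\text{min}}$ in the first summand of \cref{eqn:final-theorem}. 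The lower system contributes the same two pieces plus an extra $2\alpha\gamma\sum_{j=0}^{k-1}\rho^{k-1-j}\mathbb{E}\lVert\tilde Q_{j}^{B}\rVert_{\infty}$, so, writing $a_{k}:=\mathbb{E}\lVert\tilde Q_{k}^{A}\rVert_{\infty}$, $b_{k}:=\mathbb{E}\lVert\tilde Q_{k}^{B}\rVert_{\infty}$ and adding the two symmetric estimates, $c_{k}:=a_{k}+b_{k}$ obeys a self-referential bound of the form $c_{k}\le(\text{geometric transient})+(\text{stochastic floor})+2\alpha\gamma\sum_{j=0}^{k-1}\rho^{k-1-j}c_{j}$. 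As the coefficient $2\alpha\gamma/(1-\rho)$ need not be below one, I would not close this by a direct summation; instead I would feed in the crude a priori bound $c_{k}\le 4(1-\gamma)^{-1}$ from (ii), (iii) and \cref{lem:lem1}, and iterate the substitution a fixed number of times, each round trading a $\rho^{j}$-type transient for a $k\rho^{k}$-type one while picking up a bounded factor $\alpha\gamma$ and an index shift. Four such rounds suffice and give exactly the $\rho^{k-4}k^{4}$ shape of the second summand; adding the two summands and absorbing absolute constants into the displayed $120$ and $48$ yields \cref{eqn:final-theorem} for $\tilde Q_{k}^{A}$, and the bound for $\tilde Q_{k}^{B}$ follows from the built-in $A\leftrightarrow B$ symmetry of SDQ.

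The step I expect to be the genuine obstacle is exactly the mutual coupling: the comparison systems for $Q^{A}$ are steered by the greedy policy \emph{and} by the error norm of $Q^{B}$ (and vice versa), the two noises are correlated through the shared transition, and---unlike in ordinary $Q$-learning---the lower comparison system cannot be reduced to a single fixed mode, so the induced recursion for $c_{k}$ is not a priori contractive. Making the finite bootstrapping go through while keeping the dependence on $\alpha$, $d_{\text{min}}$, $1-\gamma$ and $\lvert S\times A\rvert$ under control through the repeated $\lVert\cdot\rVert_{\infty}$--$\lVert\cdot\rVert_{2}$ conversions is the technical heart, and the reason the earlier single-estimator switching-system techniques do not transfer verbatim.
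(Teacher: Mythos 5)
Your setup (the decomposition $\tilde Q_{k+1}^{A}=A_{\pi_{Q_k^B}}\tilde Q_k^A+\alpha\gamma DP(\Pi_{Q_k^B}-\Pi_{Q^*})Q^*+\alpha w_k^A$, the sign and magnitude facts about the affine term, the row-sum bound $\lVert A_\pi\rVert_\infty\le\rho$, and the sandwich by comparison systems) is sound and close in spirit to the paper. But the two technical routes you propose for the two hard steps both break down. First, the $\alpha^{1/2}$ floor. Your upper and lower comparison systems both carry the \emph{random} mode $\pi_{Q_k^B}$, so when you unroll, the transition product $\Phi_{k,j+1}=A_{\pi_{Q_{k-1}^B}}\cdots A_{\pi_{Q_{j+1}^B}}$ multiplying $w_j^A$ depends on $Q_{j+1}^B$, hence on $w_j^B$, hence is correlated with $w_j^A$ (both noises come from the same sampled transition). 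The cross terms in $\mathbb{E}\bigl[\lVert\sum_j\Phi_{k,j+1}w_j\rVert_2^2\bigr]$ therefore do not vanish under conditioning, and the best you can do pathwise is $\alpha\sum_j\rho^{k-1-j}\lVert w_j\rVert_\infty=O\bigl(\tfrac{1}{d_{\min}(1-\gamma)^2}\bigr)$, which does not decay in $\alpha$ at all. The martingale second-moment argument that yields $\alpha^{1/2}$ (Theorem~\ref{app:stochasit-linear}) genuinely requires a deterministic transition matrix; this is exactly why the paper applies it only to the two systems frozen at the mode $\Pi_{Q^*}$ (the linear part of $Q_k^{A_L}$ and $Q_k^{\text{err}_{UL}}$) and handles every switching comparison system through a \emph{subtraction} system ($Q_k^{A_U}-Q_k^{A_L}$, $Q_k^{\text{err}_U}-Q_k^{\text{err}_{UL}}$, $Q_k^{\text{err}_U}-Q_k^{\text{err}_L}$) in which the stochastic noise cancels identically and only deterministic $\lVert\cdot\rVert_\infty$ contractions are needed.

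Second, and more fundamentally, your coupling does not close. Bounding the affine term by $-2\alpha\gamma\lVert\tilde Q_k^B\rVert_\infty\mathbf{1}$ makes the disturbance proportional to the very quantity you are trying to bound, and the resulting Volterra inequality $c_k\le T_k+F+2\alpha\gamma\sum_{j=0}^{k-1}\rho^{k-1-j}c_j$ has resolvent growth rate $\rho+2\alpha\gamma=1+\alpha\bigl(2\gamma-d_{\min}(1-\gamma)\bigr)>1$ in any nontrivial MDP, so neither Gr\"onwall nor finitely many bootstrapping rounds contracts it: each substitution of a constant bound multiplies the non-decaying part by $\tfrac{2\gamma}{d_{\min}(1-\gamma)}>1$, and terminating after $m$ rounds with the a priori bound $c_j\le 4(1-\gamma)^{-1}$ leaves a residual of order $\bigl(\tfrac{2\gamma}{d_{\min}(1-\gamma)}\bigr)^{m}\tfrac{1}{1-\gamma}$ that vanishes in neither $\alpha$ nor $k$. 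The missing idea is the one the paper builds its whole appendix around: choose the lower comparison system with the \emph{fixed} mode $\Pi_{Q^*}$ (using $\Pi_{Q_k^B}Q_k^B\ge\Pi_{Q^*}Q_k^B$), which makes its disturbance proportional to $Q_k^{\text{err}}=Q_k^A-Q_k^B$ rather than to $Q_k^B-Q^*$, and then observe that $Q_k^{\text{err}}$ obeys a self-contained, unforced switching dynamics sandwiched as $\Pi_{Q_k^B}Q_k^{\text{err}}\le\Pi_{Q_k^B}Q_k^A-\Pi_{Q_k^A}Q_k^B\le\Pi_{Q_k^A}Q_k^{\text{err}}$, so the inter-estimator error decays to the $O(\alpha^{1/2})$ floor \emph{independently} of whether either estimator is near $Q^*$ (Sections~\ref{sec:disturb}). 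That independent decay is what breaks the self-reference; without it the $k^4\rho^{k-4}$ transient and the stated $\alpha^{1/2}$ floor cannot be recovered by your route.
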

\noindent The proof is given in~\ref{sec:final-theorem}. 
The bound in~\eqref{eqn:final-theorem} consists of two terms with distinct
interpretations.
The second term decays exponentially fast as $k$ increases, since
$\rho\in(0,1)$, and therefore vanishes exponentially.
The first term represents a constant error term that depends on the
step size $\alpha$ and the minimum state--action visitation probability
$d_{\mathrm{min}}$.
By choosing a sufficiently small step size, this term can be made
arbitrarily small.
Moreover, $d_{\mathrm{min}}$ characterizes the level of exploration in the
learning process: under uniform exploration, $d_{\mathrm{min}}$ is large,
and it leads to a smaller error bound, whereas non-uniform or poor exploration
results in a smaller $d_{\mathrm{min}}$ and thus a larger error bound. The bound in~(\ref{eqn:final-theorem}) can be converted to more interpretable form presented below.
\begin{corollary}\label{cor:final-theorem}
For any $k\geq0$, we have the following error bound:
\begin{align}\label{eqn:final-theorem-cor}
\mathbb{E}[ \lVert Q_{k}^{A}-Q^{*} \rVert_{\infty}] &\leq 
\frac{120\alpha^{1/2}\lvert \mathcal{S}\times \mathcal{A}\rvert}{d_{\mathrm{min}}^{9/2}(1-\gamma)^{11/2}}+\frac{48\lvert \mathcal{S}\times \mathcal{A}\rvert^{3/2}}{(1-\gamma)}\frac{\rho^{-4}(-8)^{4}}{(\ln(\rho))^{4}}\rho^{\frac{-4}{\ln(\rho)}}\rho^{k/2}.
\end{align}
The same bound holds for $Q_{k}^{B}-Q^{*}$. 
\end{corollary}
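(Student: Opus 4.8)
The plan is to start from the bound already proved in Theorem~\ref{thm:final-theorem},
\[
\mathbb{E}[ \lVert Q_{k}^{A}-Q^{*} \rVert_{\infty}] \leq
\frac{120\alpha^{1/2}\lvert S\times A\rvert}{d_{\text{min}}^{9/2}(1-\gamma)^{11/2}}+\frac{48\rho^{k-4}k^{4}\lvert S\times A\rvert^{3/2}}{(1-\gamma)},
\]
and simply re-express the second (transient) term in a form whose decay in $k$ is manifestly geometric with rate $\rho^{1/2}$. The first term is unchanged, so all the work is in bounding $\rho^{k-4}k^{4}$ by a constant (independent of $k$) times $\rho^{k/2}$. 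Equivalently, I need $\rho^{k/2}k^{4}\le C$ for all $k\ge 0$, where $C$ depends only on $\rho$, and then absorb the leftover $\rho^{-4}$ and the constant.

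The key step is the elementary inequality: for any base $r\in(0,1)$ and exponent $p>0$, the function $k\mapsto r^{k}k^{p}$ on $[0,\infty)$ attains its maximum at $k^{\star}=-p/\ln r=p/\ln(1/r)$, and its maximal value is $(p/(e\ln(1/r)))^{p}$; more crudely, $r^{k}k^{p}\le (k^{\star})^{p} r^{k^{\star}}$ since we may bound $r^{k}k^{p}$ by its value at the maximizer. I would apply this with $r=\rho^{1/2}$ (so $\ln r=\tfrac12\ln\rho$) and $p=4$. Writing $\rho^{k-4}k^{4}=\rho^{-4}\cdot\rho^{k/2}\cdot(\rho^{k/2}k^{4})$ and substituting the bound $\rho^{k/2}k^{4}\le (k^{\star})^{4}\rho^{k^{\star}/2}$ with $k^{\star}=-8/\ln\rho$ gives
\[
\rho^{k-4}k^{4}\le \rho^{-4}\,\frac{(-8)^{4}}{(\ln\rho)^{4}}\,\rho^{-4/\ln\rho}\,\rho^{k/2},
\]
which is exactly the factor appearing in~(\ref{eqn:final-theorem-cor}) once multiplied through by $48\lvert S\times A\rvert^{3/2}/(1-\gamma)$. (Here $(-8)^4=8^4$ and $(\ln\rho)^4>0$ since $\rho\in(0,1)$, so all quantities are positive and well-defined; I should note $\rho\in(0,1)$ from Definition~\ref{def:d_max_and_d_min} under Assumption~\ref{asm:two} to justify $\ln\rho<0$ and the location of the maximizer.)

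I do not anticipate a genuine obstacle here — this is a cosmetic repackaging of Theorem~\ref{thm:final-theorem} rather than a new estimate. The only mild care needed is in verifying the single-variable calculus fact cleanly: differentiate $g(k)=r^{k}k^{p}$ to get $g'(k)=r^{k}k^{p-1}(p+k\ln r)$, observe that $g'$ changes sign from $+$ to $-$ at $k^{\star}=-p/\ln r>0$, hence $g(k)\le g(k^{\star})$ for all $k\ge 0$, and that $g(0)=0\le g(k^{\star})$ handles the boundary. Then plug $k^{\star}$ back in. Everything else is bookkeeping, and the first term of the Theorem bound carries over verbatim.
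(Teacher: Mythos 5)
Your proposal is correct and follows essentially the same route as the paper: both split $\rho^{k-4}k^{4}=\rho^{-4}\rho^{k/2}\bigl(k^{4}\rho^{k/2}\bigr)$ and bound $k^{4}\rho^{k/2}$ by its maximum over $x\ge 0$, attained at $x^{\star}=-8/\ln\rho$, yielding the factor $\frac{(-8)^{4}}{(\ln\rho)^{4}}\rho^{-4/\ln\rho}$. Your sign-change argument for $g'$ is a slightly cleaner way to certify the global maximum than the paper's second-derivative check, but the substance is identical.
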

\noindent The proof is given in~\ref{sec:final-corollaly}.
\subsubsection{Comparative convergence analysis}
We summarize in Table~\ref{table1} the sample complexities of representative double $Q$-learning and $Q$-learning algorithms, 
each derived under distinct assumptions and observation models. 
The comparison is organized along three key dimensions: 
(\textit{i}) the \textbf{sampling model}, which distinguishes between i.i.d.\ and non-i.i.d.\ data generation; 
(\textit{ii}) the \textbf{coverage condition}, which characterizes how sufficiently all state–action pairs are explored; and 
(\textit{iii}) the \textbf{step-size rule}, which determines whether the learning rate is constant or diminishing over time. 
Specifically, the coverage condition takes one of three forms: 
the \textit{cover-time} assumption, which requires that every state–action pair be visited at least once within a finite time window; 
the \textit{infinite-time covering} assumption, which ensures that every state--action pair is visited infinitely often over time; 
and assumes sampling from a stationary distribution with \textit{stochastic coverage},
meaning that each state–action pair has a strictly positive sampling probability. For completeness, asymptotic convergence results are also included to provide a broader perspective on the overall convergence landscape.

\vspace{2mm}
\noindent\textbf{Double \boldmath{$Q$}-learning.}
Earlier works such as~\cite{one,two} analyze the convergence properties of double $Q$-learning 
in a non-i.i.d. setting under cover-time assumptions. 
Specifically,~\cite{one} employs a polynomially decaying step-size. In contrast,~\cite{two} adopts a constant step-size. Our finite-time framework, by comparison, accommodates a general step-size $\alpha \in (0,1)$. Moreover, we instead assume an i.i.d. sampling with stochastic coverage.

\vspace{2mm}
\noindent\textbf{\boldmath{$Q$}-learning.}
For standard $Q$-learning,~\cite{chen2024lyapunov,li2020sample} focus on non-i.i.d.\ observation models with constant step-sizes, 
while~\cite{lim2024finite} adopts a similar non-i.i.d.\ setting but assumes a diminishing step-size rule. 
All three studies rely on the stochastic coverage assumption, ensuring that each state–action pair has a positive sampling probability. 
In contrast,~\cite{beck2012error,qu2020finite,even2003learning} also analyze non-i.i.d.\ sampling under cover-time coverage assumptions, 
where~\cite{beck2012error} employs a constant step-size, whereas~\cite{qu2020finite} and~\cite{even2003learning} adopt diminishing step-sizes. But~\cite{lee2024final} conducts an i.i.d. analysis that shares a similar system-theoretic foundation with our approach. 
Since these studies rely on different assumptions---such as constant step size versus diminishing step size, and cover-time or infinite-time covering versus stochastic coverage, and i.i.d. versus non-i.i.d. sampling---a 
direct numerical comparison among all methods is generally impractical. 
When compared to~\cite{lee2024final}, which follows a comparable i.i.d. and control-oriented analysis, 
our SDQ exhibits the same finite-time error bound order, scaling as 
$\mathcal{O}(|\mathcal{S}\!\times\!\mathcal{A}|^{3/2})$. 
The corresponding sample complexity, summarized in Table~\ref{table1} with 
$\tilde{\mathcal{O}}(\cdot)$ notation, 
represents the number of samples required to achieve an $\varepsilon$-accurate estimate of $Q^*$, 
derived from this finite-time bound.
This dependence arises from the cross-coupled structure of two interacting estimators, 
which introduces additional stochastic terms and higher-order dependence on $d_{\min}$ and $(1-\gamma)$.

\vspace{2mm}
\noindent\textbf{Discussion.}
Overall, the presented methods should be viewed as complementary rather than competing approaches. Each analysis is conducted under distinct assumptions and observation models, 
and thus emphasizes different aspects of the convergence behavior of $Q$-learning and double $Q$-learning. 
Our SDQ analysis does not aim to outperform existing analysis of $Q$-learning or double $Q$-learning in a theoretical sense, 
but rather to provide a unified interpretation based on a switching-system viewpoint 
and to establish finite-time expected error bounds within that framework. 
It should be noted that, under identical initialization ($Q_0^A = Q_0^B$), 
the SDQ update exactly reduces to the standard $Q$-learning algorithm, 
as discussed in Section~\ref{sec:MDQL_alg}. 
Hence, no theoretical improvement over $Q$-learning can be expected in this case. 
The contribution of this work lies not in achieving a tighter asymptotic rate, 
but in offering a generalized and control-theoretically interpretable framework 
that unifies $Q$-learning and double $Q$-learning within the same dynamical system formulation. 
Empirically, as presented in Section~\ref{sec:experiment}, 
the simultaneous update structure of SDQ tends to yield faster stabilization under random initialization, 
which supports the practical relevance and theoretical motivation of this study.

Furthermore, \Cref{thm:final-theorem} primarily focuses on finite-time estimation accuracy rather than bias analysis, the bias-reduction effect of SDQ arises implicitly from its cross-evaluation structure, each estimator uses the other’s greedy action as the target, thereby reducing the correlation between target selection and estimation noise, a mechanism analogous to that of standard double $Q$-learning.
Therefore, the proposed analysis should be regarded as a complementary and explanatory framework 
rather than a competing algorithmic enhancement.
The remaining parts of the paper are devoted to brief sketches of the proofs.

\begin{table}[h!]
\caption{Comparative analysis of several results: $t_{\rm mix}$ is the mixing time; $t_{\rm cover}$ is the cover time; $w\in(0,1)$ is a constant; $\tilde {\cal O}$ ignores polylogarithmic factors.}
\begin{center}
\renewcommand{\arraystretch}{1.4}
{\small
\begin{tabular}{c c c c}
\hline
\textbf{\small Method} & 
\textbf{\small Sample complexity} & 
\textbf{\small Step-size} & 
\textbf{\small Sampling / Coverage}\\
\hline
\multicolumn{4}{c}{\textbf{\small Simultaneous double $Q$-learning}}\\
\hline
Ours & 
$
\tilde{\mathcal{O}}\!\left(
\frac{|\mathcal{S}\times\mathcal{A}|^{2}}
{\varepsilon^{2}\,d_{\min}^{10}\,(1-\gamma)^{12}}
\right)$ 
& constant & i.i.d., stochastic \\

\hline
\multicolumn{4}{c}{\textbf{\small Double $Q$-learning}}\\
\hline

Lin et~al.~\cite{two} & 
$\tilde{\mathcal{O}}\!\left(
\frac{t_{\mathrm{cover}}}{(1-\gamma)^{7}\epsilon^{2}}
\right)$ 
& constant & non-i.i.d., cover-time \\

Xiong et~al.~\cite{one}  & 
$\tilde{\mathcal{O}}\!\left(
\left(\frac{t_{\mathrm{cover}}^4}{(1-\gamma)^6 \epsilon^2}\right)^{\!\frac{1}{\omega}}
+\left(\frac{t_{\mathrm{cover}}^2}{1-\gamma}\right)^{\!\frac{1}{1-\omega}}
\right)$ 
& diminishing & non-i.i.d., cover-time \\

Hasselt~\cite{hasselt2010double} &
-- (asymptotic convergence only) &
diminishing &
i.i.d., infinite-time covering \\

Weng et~al.~\cite{weng2020mean} &
-- (asymptotic convergence only) &
diminishing &
i.i.d., infinite-time covering \\

\hline
\multicolumn{4}{c}{\textbf{\small $Q$-learning}}\\
\hline

Lee et~al.~\cite{lee2024final} & 
$\tilde {\cal O}\!\left( {\frac{{\gamma ^2 d_{\max }^2 |{\cal S} \times {\cal A}|^2 }}{{\varepsilon ^2 d_{\min }^4 (1 - \gamma )^6 }}} \right)$ 
& constant & i.i.d., stochastic \\

Chen et~al.~\cite{chen2024lyapunov} & 
$\tilde {\cal O}\!\left(\frac{1}{d_{\min}^3(1-\gamma)^5\varepsilon^2}\right)$ 
& constant & non-i.i.d., stochastic \\

Li et~al.~\cite{li2020sample} & 
$\tilde {\cal O}\!\left( \frac{1}{d_{\min}(1-\gamma)^5 \varepsilon^2} + \frac{t_{\rm mix}}{d_{\min}(1-\gamma)}\right)$ 
& constant & non-i.i.d., stochastic \\

Lim et~al.~\cite{lim2024finite} & 
$\tilde{\mathcal{O}}\left(\frac{|\mathcal{S}\times\mathcal{A}|^{13} }{(1-\gamma)^{16}d_{\min}^{12}}\frac{1}{\epsilon^2} \right) $

& diminishing & non-i.i.d., stochastic \\

Beck et. al. \cite{beck2012error} & $\tilde {\cal O}\left( {\frac{t_{\rm cover}^3 |{\cal S}\times {\cal A}|}{(1-\gamma)^5 \varepsilon^2}} \right)$ & constant &  non-i.i.d., cover-time  \\

Qu et. al. \cite{qu2020finite} & $\tilde {\cal O}\left( {\frac{{{t_{{\rm{mix}}}}|{\cal S} \times {\cal A}{|^2}}}{{{{(1 - \gamma )}^5}{\varepsilon ^2}}}} \right)$  & diminishing & non-i.i.d., cover-time \\

Even-Dar et. al. \cite{even2003learning} & $\tilde {\cal O}\left( {\frac{{{{({t_{{\rm{cover}}}})}^{\frac{1}{{1 - \gamma }}}}}}{{{{(1 - \gamma )}^4}{\varepsilon ^2}}}} \right)$  & diminishing & non-i.i.d., cover-time\\

Tsitsiklis~\cite{tsitsiklis1994asynchronous} & -- (asymptotic convergence only)
& diminishing & non-i.i.d., infinite-time covering \\

Jaakkola~\cite{jaakkola1994convergence} & -- (asymptotic convergence only)
& diminishing & non-i.i.d., infinite-time covering \\

Borkar et~ al.~\cite{borkar2000ode} & -- (asymptotic convergence only)
& diminishing & non-i.i.d., synchronous update  \\

\hline
\end{tabular}}
\end{center}

\vspace{1mm}
\raggedright

\noindent
\parbox{0.95\linewidth}{
\footnotesize
\textbf{Notes.}
$t_{\mathrm{mix}}$: time required for a Markov chain to approach its stationary distribution (mixing time);
$t_{\mathrm{cover}}$: minimum time needed for all state–action pairs to be visited at least once (cover time);
\textit{stochastic coverage}: sampling from a stationary distribution where each $(s,a)$ has strictly positive probability;
\textit{infinite-time covering}: every $(s,a)$ pair is visited infinitely often;
\textit{synchronous update}: all state–action pairs are updated simultaneously at each iteration, thus no exploration assumption is required.
$\tilde{\mathcal{O}}(\cdot)$ notation hides polylogarithmic factors and, in some cases, implicit dependence on $|\mathcal{S}\times\mathcal{A}|$ when not explicitly stated.
}

\label{table1}
\end{table}

\section{Framework for convergence analysis of SDQ}\label{sec:framework}
Before presenting the technical details, we briefly outline the main structure of the finite-time analysis.
The central challenge in analyzing SDQ stems from the coupled and switching nature of the two estimators, which introduces additional affine terms and stochastic disturbances compared to standard $Q$-learning.
To address this challenge, we first model SDQ as a discrete-time switching system.
We then construct two auxiliary comparison systems—an upper comparison system and a lower comparison system—that respectively bound the original dynamics from above and below.
The analysis proceeds by first controlling the evolution of the estimator disagreement $Q_k^A - Q_k^B$ through a dedicated error system.
Once this disagreement is shown to contract over time, the lower comparison system effectively reduces to a stable linear stochastic system, enabling finite-time error bounds to be derived.
Finally, combining the bounds from the comparison systems yields the finite-time expected error guarantees for SDQ. A detailed realization of this analysis plan, including the specific comparison systems and error dynamics, is provided in Section~\ref{sec:overall_plans}.
\subsection{Switching system model}
In this subsection, we introduce a switching system model of SDQ in (\ref{eqn:modified_update}). First of all, using the notation introduced in Section~\ref{sec:ass_def}, the modified update in~(\ref{eqn:modified_update}) can be compactly written as
\begin{align}\label{double_q_ode}
Q_{k+1}^{A}=Q_{k}^{A}+\alpha_{k}(DR+\gamma DP\Pi_{Q_{k}^{B}} Q_{k}^{A}-DQ_{k}^{A}+ w_{k}^{A}), \nonumber \\
Q_{k+1}^{B}=Q_{k}^{B}+\alpha_{k}(DR+\gamma DP\Pi_{Q_{k}^{A}}Q_{k}^{B}-DQ_{k}^{B}+w_{k}^{B}),
\end{align}
where
\begin{align}\label{eqn:noise}
w_{k}^{A}&=(e_{a_{k}}\otimes e_{s_{k}})r_{k}\nonumber+\gamma(e_{a_{k}}\otimes e_{s_{k}})(e_{s_{k}'})^{T}\Pi_{Q_{k}^{B}} Q_{k}^{A}\nonumber-(e_{a_{k}}\otimes e_{s_{k}})(e_{a_{k}}\otimes e_{s_{k}})^{T}Q_{k}^{A}\nonumber\\&\quad-(DR+\gamma DP\Pi_{Q_{k}^{B}}Q_{k}^{A}-DQ_{k}^{A}),\nonumber\\
w_{k}^{B}&=(e_{a_{k}}\otimes e_{s_{k}})r_{k}\nonumber+\gamma(e_{a_{k}}\otimes e_{s_{k}})(e_{s_{k}'})^{T}\Pi_{Q_{k}^{A}} Q_{k}^{B}\nonumber-(e_{a_{k}}\otimes e_{s_{k}})(e_{a_{k}}\otimes e_{s_{k}})^{T}Q_{k}^{B}\nonumber\\&\quad-(DR+\gamma DP\Pi_{Q_{k}^{A}}Q_{k}^{B}-DQ_{k}^{B}).
\end{align}
Here, $s_k\in\mathcal{S}$ and $a_k\in\mathcal{A}$ denote the state and action
visited at iteration $k$, respectively, and $s_k'\in\mathcal{S}$ denotes
the subsequent state generated according to the transition probability
$P(\cdot \mid s_k,a_k)$.
Next, using the optimal Bellman equation $(\gamma DP\Pi_{Q^{*}}-D)Q^{*}+DR=0$ with ($\ref{double_q_ode}$), one can obtain 
\begin{align}\label{eqn:original}
Q_{k+1}^{A}-Q^{*}&=(I-\alpha D)(Q_{k}^{A}-Q^{*})\nonumber+\alpha D\{\gamma P\Pi_{Q_{k}^{B}}Q_{k}^{A}-\gamma P\Pi_{Q^{*}}Q^{*}\}\nonumber+\alpha w_{k}^{A},\nonumber\\Q_{k+1}^{B}-Q^{*}&=(I-\alpha D)(Q_{k}^{B}-Q^{*})+\alpha D\{\gamma P\Pi_{Q_{k}^{A}}Q_{k}^{B}-\gamma P\Pi_{Q^{*}}Q^{*}\}+\alpha w_{k}^{B},
\end{align}
which is a linear switching system with an extra affine terms, $\alpha D\{\gamma P\Pi_{Q_{k}^{B}}Q_{k}^{A}-\gamma P\Pi_{Q^{*}}Q^{*}\}$ and $\alpha D\{\gamma P\Pi_{Q_{k}^{A}}Q_{k}^{B}-\gamma P\Pi_{Q^{*}}Q^{*}\}$, and the stochastic noises, $w_{k}^{A}$ and $ w_{k}^{B}$ ~\citep{lee2023discrete}. The main difficulty in analysing the system arises from the extra affine term and the stochastic noise. Without these terms, finite-time analysis would be straightforward since the stability of the system matrix could be directly analyzed. However, with the affine term, the analysis becomes more challenging. To address this difficulty, we introduce the lower and upper comparison systems as in~\citep{lee2023discrete}, which enable easier analysis. 

\subsection{Upper comparison system }
Let us first consider the $\textit{upper comparison system}$
\begin{align}\label{eqn:ups}
Q_{k+1}^{A_{U}}-Q^{*}&=(I+\alpha\gamma DP\Pi_{Q_{k}^{B}}-\alpha D)(Q_{k}^{A_{U}}-Q^{*})\nonumber+\alpha w_{k}^{A},\quad Q_{0}^{A_{U}}-Q^{*} \in \mathbb{R}^{\lvert \mathcal{S} \rvert \lvert \mathcal{A}\rvert},\nonumber\\
Q_{k+1}^{B_{U}}-Q^{*}&=(I+\alpha\gamma DP\Pi_{Q_{k}^{A}}-\alpha D)(Q_{k}^{B_{U}}-Q^{*})+\alpha w_{k}^{B},\quad Q_{0}^{B_{U}}-Q^{*} \in \mathbb{R}^{\lvert \mathcal{S} \rvert \lvert \mathcal{A}\rvert}.
\end{align}
Here, $Q_k^{A_U}$ and $Q_k^{B_U}$ denote the state--action value iterates of the
upper comparison system associated with $Q_k^{A}$ and $Q_k^{B}$, respectively.
The above systems are switching systems, which have system matrices $I+\alpha\gamma DP\Pi_{Q_{k}^{A}}-\alpha D$ and $I+\alpha\gamma DP\Pi_{Q_{k}^{B}}-\alpha D$. These matrices switch according to the changes of $Q_{k}^{A}$ and $Q_{k}^{B}$. 
We can prove that the trajectory of the upper comparison system bounds that of the original system from above.
\begin{proposition}\label{prop:ucs}
Suppose that $Q_{0}^{A_U}-Q^{*}\geq Q_{0}^A-Q^{*}$ and $Q_{0}^{B_U}-Q^{*}\geq Q_{0}^B-Q^{*}$ hold, where $\geq$ is the element-wise inequality. Then, we have
\begin{align*}
Q_{k}^{A_{U}}-Q^{*}\geq  Q_{k}-Q^{*},\quad Q_{k}^{B_{U}}-Q^{*}\geq Q_{k}-Q^{*}.
\end{align*}
for all $k\geq 0$. 
\end{proposition}
\noindent The proof is given in~\ref{sec:prop-ucs}.
\subsection{Lower comparison system}
Let us consider the $\textit{lower comparison system}$
\begin{align}\label{eq:lower-comaprison-system}
Q_{k+1}^{A_{L}}-Q^{*}&=(I+\alpha\gamma DP\Pi_{Q^{*}}-\alpha D)(Q_{k}^{A_{L}}-Q^{*})\nonumber+\alpha\gamma DP(\Pi_{Q_{k}^{B}}-\Pi_{Q^{*}})(Q_{k}^{A}-Q_{k}^{B})\nonumber+\alpha w_{k}^{A},\quad Q_{0}^{A_{L}}-Q^{*} \in \mathbb{R}^{\lvert \mathcal{S} \rvert \lvert \mathcal{A}\rvert},\nonumber\\
Q_{k+1}^{B_{L}}-Q^{*}&=(I+\alpha\gamma DP\Pi_{Q^{*}}-\alpha D)(Q_{k}^{B_{L}}-Q^{*})+\alpha\gamma DP(\Pi_{Q_{k}^{*}}-\Pi_{Q_{k}^{A}})(Q_{k}^{A}-Q_{k}^{B})+\alpha w_{k}^{B},\quad Q_{0}^{B_{L}}-Q^{*} \in \mathbb{R}^{\lvert \mathcal{S} \rvert \lvert \mathcal{A}\rvert},
\end{align}
Here, $Q_k^{A_L}$ and $Q_k^{B_L}$ denote the state--action value iterates of the
lower comparison system associated with $Q_k^{A}$ and $Q_k^{B}$, respectively. The stochastic noises $w_{k}^{A}$ and $w_{k}^{B}$ are identical to the original system ($\ref{eqn:original}$). As before, we can prove that the trajectory of the lower comparison system bounds that of the original system from below.
\begin{proposition}\label{prop:lcs}
Suppose that $Q_{0}^{A_{L}}-Q^{*}\leq Q_{0}^{A}-Q^{*}$ and $Q_{0}^{B_{L}}-Q^{*}\leq Q_{0}^{B}-Q^{*}$ hold, where $\leq$ is the element-wise inequality. Then, we have
\begin{align*}
Q_{k}^{A_{L}}-Q^{*}\leq  Q_{k}^{A}-Q^{*},\quad
Q_{k}^{B_{L}}-Q^{*}\leq  Q_{k}^{B}-Q^{*},
\end{align*}
for all $k\geq 0$. 
\end{proposition}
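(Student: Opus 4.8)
The plan is to run a sample-path-wise (deterministic) comparison by induction on $k$, tracking the entry-wise gap $e_k^A := (Q_k^A-Q^*)-(Q_k^{A_L}-Q^*) = Q_k^A - Q_k^{A_L}$, and similarly $e_k^B := Q_k^B - Q_k^{B_L}$. Since the stochastic noises $w_k^A, w_k^B$ in~(\ref{eqn:original}) and~(\ref{eq:lower-comaprison-system}) are identical, they cancel in the difference, so the whole argument is deterministic given the observed transitions $(s_k,a_k,s_k',r_k)$; in particular the $e^A$ and $e^B$ recursions decouple, since the original iterates $Q_k^A,Q_k^B$ act only as an exogenous forcing.

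The key algebraic step is to rewrite the original recursion~(\ref{eqn:original}) so that its linear part coincides with that of the lower comparison system. Using the three-term splitting
\[
\Pi_{Q_k^B}Q_k^A - \Pi_{Q^*}Q^* = \Pi_{Q^*}(Q_k^A-Q^*) + (\Pi_{Q_k^B}-\Pi_{Q^*})(Q_k^A - Q_k^B) + (\Pi_{Q_k^B}-\Pi_{Q^*})Q_k^B ,
\]
the first term reproduces the system matrix $I + \alpha\gamma DP\Pi_{Q^*} - \alpha D$, the second reproduces exactly the affine forcing term already present in~(\ref{eq:lower-comaprison-system}), and the only leftover is $\alpha\gamma DP(\Pi_{Q_k^B}-\Pi_{Q^*})Q_k^B$. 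Subtracting the two recursions then gives the clean relation
\[
e_{k+1}^A = (I + \alpha\gamma DP\Pi_{Q^*} - \alpha D)\, e_k^A + \alpha\gamma DP(\Pi_{Q_k^B}-\Pi_{Q^*})Q_k^B ,
\]
and the analogous identity for $e_k^B$ with $\Pi_{Q_k^A}$ and $Q_k^A$ in place of $\Pi_{Q_k^B}$ and $Q_k^B$ (here one uses $(\Pi_{Q^*}-\Pi_{Q_k^A})(Q_k^A-Q_k^B) = (\Pi_{Q_k^A}-\Pi_{Q^*})(Q_k^B-Q_k^A)$ to match the notation in~(\ref{eq:lower-comaprison-system})).

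It then remains to verify two non-negativity facts. First, the matrix $I + \alpha\gamma DP\Pi_{Q^*} - \alpha D$ is entry-wise non-negative: $DP\Pi_{Q^*}$ is a product of entry-wise non-negative matrices, and $I-\alpha D$ is non-negative because $\alpha\in(0,1)$ and $d(s,a)\in(0,1)$ force $\alpha d(s,a)<1$ on the diagonal (\cref{asm:two}, \cref{def:d_max_and_d_min}). Second, the forcing term is entry-wise non-negative: by definition $\Pi_{Q_k^B}=\Pi^{\pi_{Q_k^B}}$ is greedy with respect to $Q_k^B$, so $(\Pi_{Q_k^B}Q_k^B)(s)=\max_a Q_k^B(s,a)\ge \sum_a \pi_{Q^*}(s,a)Q_k^B(s,a)=(\Pi_{Q^*}Q_k^B)(s)$ for every $s$, i.e.\ $(\Pi_{Q_k^B}-\Pi_{Q^*})Q_k^B\ge 0$, and $DP\ge 0$, $\alpha\gamma\ge 0$; likewise $(\Pi_{Q_k^A}-\Pi_{Q^*})Q_k^A\ge 0$ for the $B$-recursion. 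With these, the induction closes at once: $e_0^A\ge 0$ by hypothesis, and if $e_k^A\ge 0$ then $e_{k+1}^A$ is a non-negative matrix applied to a non-negative vector plus a non-negative vector, hence $e_{k+1}^A\ge 0$; the same for $e_k^B$. This yields $Q_k^{A_L}-Q^*\le Q_k^A-Q^*$ and $Q_k^{B_L}-Q^*\le Q_k^B-Q^*$ for all $k$, as claimed.

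The main obstacle is spotting the correct three-term splitting of $\Pi_{Q_k^B}Q_k^A-\Pi_{Q^*}Q^*$; once the leftover is isolated as $(\Pi_{Q_k^B}-\Pi_{Q^*})Q_k^B$, its sign is just the elementary greedy inequality $\max_a Q(s,a)\ge \sum_a \pi(s,a)Q(s,a)$, and the rest is bookkeeping. This parallels the proof of \cref{prop:ucs}, but note that the ``$A$ selects its target action from $B$'' (and vice versa) structure of SDQ is precisely what makes both the $A$- and $B$-gap recursions collapse to the same non-negative linear form — a feature the original double $Q$-learning update does not share.
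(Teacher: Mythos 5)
Your proof is correct and is essentially the paper's own argument: your three-term identity is algebraically the same step as the paper's rewriting $\Pi_{Q_k^B}Q_k^A=\Pi_{Q_k^B}Q_k^B+\Pi_{Q_k^B}(Q_k^A-Q_k^B)$ followed by the greedy bound $\Pi_{Q_k^B}Q_k^B\geq\Pi_{Q^*}Q_k^B$ (your leftover $(\Pi_{Q_k^B}-\Pi_{Q^*})Q_k^B\geq 0$ is exactly that inequality), and both proofs then close the induction using non-negativity of $I+\alpha\gamma DP\Pi_{Q^*}-\alpha D$. The only difference is presentational — you track the gap $e_k=Q_k-Q_k^{L}$ as a single non-negative linear recursion, whereas the paper chains the element-wise inequalities directly — which is a slightly cleaner bookkeeping of the same two ingredients.
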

\noindent The proof is given in~\ref{sec:prop-lcs}. The lower comparison system~(\ref{eq:lower-comaprison-system}) can be seen as a linear system with the states $ Q_{k}^{A_L}-Q^{*}$ and $Q_{k}^{B_L}-Q^{*}$ and the system matrix $I+\alpha\gamma DP\Pi_{Q^*}-\alpha D$. Moreover, it also includes the extra terms, $\alpha\gamma DP(\Pi_{Q_{k}^{B}}-\Pi_{Q^{*}})(Q_{k}^{A}-Q_{k}^{B})$ and $\alpha\gamma DP(\Pi_{Q_{k}^{*}}-\Pi_{Q_{k}^{A}})(Q_{k}^{A}-Q_{k}^{B})$, which can be seen as external disturbances. To derive a finite-time error bound, one needs to establish bounds on the error $Q_{k}^{A}-Q_{k}^{B}$ first. Therefore, in the next subsections, we introduce an error system. 
\begin{figure*}[ht]
\centering
  \includegraphics[scale=0.32]{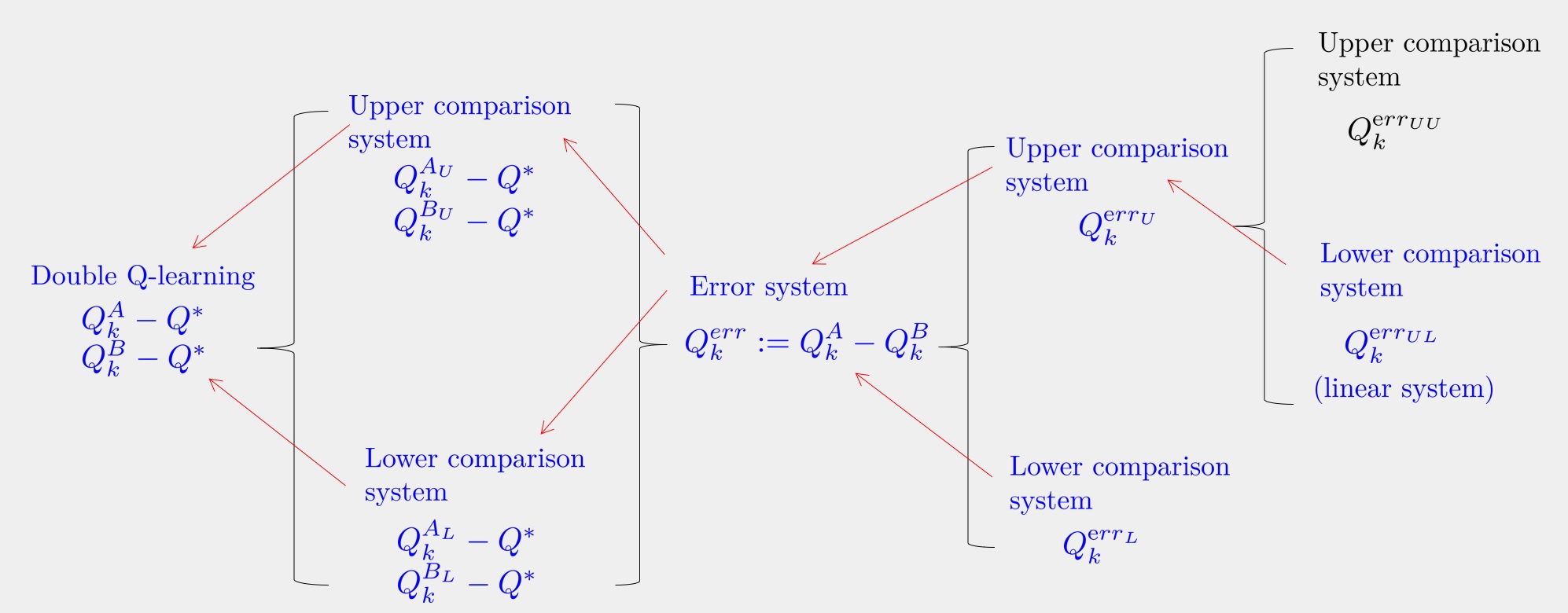}

  \caption{Overall flow of the proposed analysis} 
  \label{fig:diagram}
\end{figure*}

\subsection{Error system}
Let us consider the \textit{error system} with the state $Q_{k}^{\text{err}}\coloneqq Q_{k}^{A}-Q_{k}^{B}$ 
\begin{align}\label{eqn:error-system}
Q_{k+1}^{\text{err}}&=(I-\alpha D)Q_{k}^{\text{err}}+\alpha\gamma DP\Pi_{Q_{k}^{B}}Q_{k}^{A}-\alpha\gamma DP\Pi_{Q_{k}^{A}}Q_{k}^{B}+\alpha w_{k}^{A}-\alpha w_{k}^{B},\quad Q_{0}^{\text{err}}\in \mathbb{R}^{\lvert \mathcal{S} \rvert \lvert \mathcal{A}\rvert},
\end{align}
which can be obtained by subtracting the switching system model of $Q_{k}^{B}$ from that of $Q_{k}^{A}$ in (\ref{double_q_ode}).
The error system~(\ref{eqn:error-system}) can be seen as a linear system with the states $ Q_{k}^{\text{err}}$ and the system matrix $I-\alpha D$. Moreover, it includes extra affine term $\alpha\gamma DP\Pi_{Q_{k}^{B}}Q_{k}^{A}-\alpha\gamma DP\Pi_{Q_{k}^{A}}Q_{k}^{B}$. 

In the lower comparison system~(\ref{eq:lower-comaprison-system}), the extra terms, $\alpha\gamma DP(\Pi_{Q_{k}^{B}}-\Pi_{Q^{*}})Q_{k}^{\text{err}}$ and $\alpha\gamma DP(\Pi_{Q_{k}^{*}}-\Pi_{Q_{k}^{A}})Q_{k}^{\text{err}}$, make it hard to analyze the finite-time error bounds of the lower comparison system compared to the original $Q$-learning~\citep{lee2020unified,lee2023discrete}, where the lower comparison system is a linear system without the disturbance terms. To circumvent this difficulty, we will first prove that the error system $Q_{k}^{\text{err}}$ in the disturbance parts vanishes as $k \to \infty$. Intuitively, this implies that as the disturbance vanishes, and the lower comparison system converges to a pure stochastic linear system.

However, the error system in (\ref{eqn:error-system}) has the affine term $\alpha\gamma DP\Pi_{Q_{k}^{B}}Q_{k}^{A}-\alpha\gamma DP\Pi_{Q_{k}^{A}}Q_{k}^{B}$ similar to the original double $Q$-learning or $Q$-learning. Therefore, one can imagine that its convergence can be proved using similar techniques as in the $Q$-learning analysis~\citep{lee2020unified,lee2023discrete,lee2024final}. 
In particular, one can derive the upper and lower comparison systems of the error system, 
where these two auxiliary systems respectively provide upper and lower bounds on the evolution of the error trajectory. 
This allows the overall convergence to be established by showing that the true error remains confined between the two comparison systems, 
and the lower comparison system is linear. 
However, for the error system~(\ref{eqn:error-system}), similar ideas cannot be applied 
because the lower comparison system of the error system is a switching system. For this reason, we will use a different approach.

To this end, we will first consider an upper comparison system of the error system, 
and then derive a lower comparison system of the upper comparison system, which is linear. Let $Q_k^{\text{err}_U}$ denote the state of the upper comparison system, 
and let $A_{Q_k^{\text{err}_U}}$ be its corresponding system matrix that depends on $Q_k^{\text{err}_U}$. Conceptually, one could analyze the convergence of upper comparison system by adapting a standard autocorrelation-based method, 
which tracks the evolution of the second moment 
$\mathbb{E}[Q_k^{\mathrm{err}_U} (Q_k^{\mathrm{err}_U})^T]$ using a linear recursion as shown in Lemma~3 of Appendix. 
However the present upper comparison system forms a switching system
whose system matrix $A_{Q_k^{\mathrm{err}_U}}$ switches according to $Q_k^{\mathrm{err}_U}$ 
and depends probabilistically on its own state. Because of this dependence, taking expectation on both sides does not decouple the matrix and the state, 
and hence the simple linear recursion used in Lemma~3 
cannot be directly applied to obtain $\mathbb{E}[Q_k^{\mathrm{err}_U} (Q_k^{\mathrm{err}_U})^T]$. 
To handle this coupling effect rigorously, 
we retain the switching-system-based analysis for $Q_k^{\mathrm{err}_U}$, 
which provides a mathematically clear characterization of its convergence behavior. Then, we will consider a lower comparison system of the error system. 
Because the lower comparison system is also a switching system, 
we will derive a subtraction system, which can be obtained 
by subtracting the error lower comparison system from the error upper comparison system.

\section{Analysis process for convergence of SDQ}\label{sec:analysis_process}
\subsection{Overall plans}\label{sec:overall_plans}
The overall flow of the proposed analysis is given in Figure~\ref{fig:diagram}. 
The texts highlighted with blue indicate the dynamic systems we will deal with for our analysis. 
The red arrows represent the directions we will follow for the proof. The overall process is summarized as follows: \textbf{Step~1}: The finite-time error bound of $Q_{k}^{\text{err}_{UL}}$ is obtained by using its corresponding linear system structure. Then, based on the error bound on $Q_{k}^{\text{err}_{UL}}$, the finite-time error bound on $Q_{k}^{\text{err}_{U}}$ can be derived. \textbf{Step~2}: following similar lines as in Step~1, one can derive the error bound on $Q_{k}^{\text{err}}$ based on the error bound on $Q_{k}^{\text{err}_{U}}$ and $Q_{k}^{\text{err}_{U}}-Q_{k}^{\text{err}_{L}}$. \textbf{Step~3}: Using the error bound on $Q_{k}^{\text{err}}$ and the linear structures of $Q_{k}^{A_L} - Q^*$ and $Q_{k}^{B_L} - Q^*$, the finite-time error bounds on $Q_{k}^{A_L} - Q^*$ and $Q_{k}^{B_L} - Q^*$ can be derived. \textbf{Step~4}: By obtaining a subtraction system which can be obtained by subtracting the error lower comparison system from the error upper comparison system, the convergence of $Q_{k}^{A_U} - Q^*$ and $Q_{k}^{B_U} - Q^*$ can be shown.
\textbf{Step~5}: Using the previous results, we can  obtain a finite-time error bound on the iterates of SDQ. These steps will be detailed in Appendix.


\section{Conclusion}\label{sec:conclusion}
In this paper, we present a novel variant of double $Q$-learning, called SDQ, which mitigates the maximization bias of standard $Q$-learning by using two separate $Q$-estimators and eliminating the random selection step. By alternating the roles of the two estimators, SDQ offers a novel switching system interpretation. Empirical results indicate that SDQ converges faster than the original double $Q$-learning. Based on this representation, we derive new finite-time expected error bounds that complement existing results. Future work will focus on tightening the dimensional dependence of the theoretical bound by developing refined analytical techniques that account for the coupled structure of the two estimators. We also plan to extend SDQ to function approximation and adaptive settings to further enhance convergence and robustness.

\section*{Acknowledgement}
The work was supported by the Institute of Information Communications Technology Planning Evaluation (IITP) funded by the Korea government under Grant 2022-0-00469.

\newpage
\appendix
\renewcommand{\theassumption}{A.\arabic{assumption}}

\renewcommand{\thetheorem}{A.\arabic{theorem}}
\setcounter{theorem}{0}

\section{Convergence of stochastic linear system}\label{sec:auxiliary}
To aid in understanding the intricacies of the convergence of SDQ, as explained in \ref{sec:disturb} and \ref{remaining_part}, let us consider the following stochastic linear system, which offers a helpful conceptual framework:
\begin{align}\label{eqn:linear_system}
x_{k+1}=Ax_{k}+\alpha v_{k},\quad x_{0}\in\mathbb{R}^{n},\quad k\in\{0,1,...\},
\end{align}
where $x_{k}\in\mathbb{R}^{n}$ is the state, and $A$ is system matrix, and  $\alpha v_{k}$ is the stochastic noise with a constant $\alpha\in(0,1)$. Here, we will first investigate a finite-time error analysis of the state of~(\ref{eqn:linear_system}), and it will be used in the proof of SDQ.
To this end, let us assume that the noise energy of $v_{k}$ is bounded as $\mathbb{E}[v_{k}^{T}v_{k}]\leq V_{\text{max}}$ and $V_{\text{max}}>0$. Then, it can be proved that the maximum eigenvalue of $\mathbb{E}[v_{k}v_{k}^{T}]$ can be bounded by $V_{\text{max}}$.
\begin{lemma}[\citep{lee2024final}]\label{lem:maximum_eigen_value}
The maximum eigenvalue of $\mathbb{E}[v_{k}v_{k}^{T}]$ is bounded as 
\[
\lambda_{\mathrm{max}}(\mathbb{E}[v_{k}v_{k}^{T}])\leq V_{\mathrm{max}} 
\]
for all $k\geq0$, where $V_{\mathrm{max}}>0$ is from our assumption.
\end{lemma}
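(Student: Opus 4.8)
The plan is to deduce the bound directly from the Rayleigh–Ritz variational characterization of the largest eigenvalue of a symmetric matrix, combined with the positivity of a second-moment matrix. Write $M_k := \mathbb{E}[v_k v_k^T]$. First I would record that $M_k$ is symmetric, since $(v_k v_k^T)^T = v_k v_k^T$ and the expectation of a symmetric matrix is symmetric, and moreover positive semidefinite, because for every $x \in \mathbb{R}^n$ we have $x^T M_k x = \mathbb{E}[x^T v_k v_k^T x] = \mathbb{E}[(x^T v_k)^2] \geq 0$. Consequently all eigenvalues of $M_k$ are real and nonnegative, and $\lambda_{\max}(M_k) = \max_{\|x\|_2 = 1} x^T M_k x$.

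Next, let $x$ be a unit vector attaining this maximum. Using the standing hypothesis $\mathbb{E}[v_k v_k^T] \leq V_{\max}$ — read as the Loewner-order statement $V_{\max} I - \mathbb{E}[v_k v_k^T] \succeq 0$ — I would bound $x^T M_k x \leq x^T (V_{\max} I) x = V_{\max}\|x\|_2^2 = V_{\max}$. Chaining this with the variational identity of the previous paragraph yields $\lambda_{\max}(M_k) \leq V_{\max}$. Since the constant $V_{\max}$ does not depend on $k$, the bound holds uniformly over all $k \geq 0$, which is exactly the assertion. If instead the hypothesis is to be interpreted as an $L^2$-type bound of the form $\mathbb{E}[\|v_k\|_2^2] \leq V_{\max}$, the same chain works verbatim after first inserting the Cauchy–Schwarz estimate $(x^T v_k)^2 \leq \|x\|_2^2 \|v_k\|_2^2 = \|v_k\|_2^2$ before taking expectations.

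I do not anticipate a genuine obstacle here: the statement is an essentially immediate consequence of the variational formula for $\lambda_{\max}$ together with the nonnegativity of the quadratic form $x^T M_k x$. The only points requiring (minor) care are pinning down the precise meaning of the matrix-versus-scalar inequality in the hypothesis so that each step in the chain of inequalities is legitimate, and observing that uniformity in $k$ is automatic because the dominating constant $V_{\max}$ is $k$-independent.
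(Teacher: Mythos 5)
Your proposal is correct, and your instinct to pin down the meaning of the hypothesis was the right one: the assumption as written ($\mathbb{E}[v_k v_k^T]\leq V_{\max}$, a matrix compared to a scalar) is ambiguous, and the paper's own proof reveals that the intended reading is the scalar second-moment bound $\mathbb{E}[v_k^T v_k]\leq V_{\max}$ --- i.e.\ your second branch, not the Loewner-order one. Where you differ from the paper is in the mechanism: the paper's argument is the one-line chain $\lambda_{\max}(\mathbb{E}[v_k v_k^T])\leq \operatorname{tr}(\mathbb{E}[v_k v_k^T])=\mathbb{E}[\operatorname{tr}(v_k v_k^T)]=\mathbb{E}[v_k^T v_k]\leq V_{\max}$, using positive semidefiniteness only to justify $\lambda_{\max}\leq\operatorname{tr}$, whereas you go through the Rayleigh--Ritz variational characterization and then Cauchy--Schwarz, $(x^T v_k)^2\leq\|v_k\|_2^2$ for unit $x$, before taking expectations. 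The two routes are equally elementary and both land on the same quantity $\mathbb{E}[\|v_k\|_2^2]$; the trace route is marginally slicker since it needs no optimizing vector, while yours has the small advantage of exposing exactly which reading of the hypothesis is being used at the final step. Uniformity in $k$ is, as you note, automatic in either case.
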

\begin{proof}
The proof is completed by noting $\lambda_{\text{max}}(\mathbb{E}[v_{k}v_{k}^{T}])\leq\text{tr}(\mathbb{E}[v_{k}v_{k}^{T}])=\mathbb{E}[\text{tr}(v_{k}v_{k}^{T})]=\mathbb{E}[v_{k}^{T}v_{k}]\leq V_{\text{max}},$ where the last inequality comes from our assumption and the second equality uses the fact that the trace is a linear function. This completes the proof.
\end{proof}
\noindent Moreover, let us assume that the system matrix $A$ is also bounded. 

\begin{assumption}\label{asm:appendix_one}
The system matrix $A$ satisfies $\|A\|_{\infty} \le \rho$ for some constant $\rho \in (0,1)$.
\end{assumption}

\noindent As a next step, we investigate how the auto-correlation matrix $\mathbb{E}[x_{k}x_{k}^{T}]$ propagates over the time. Thus, one can consider the auto-correlation matrix of the state recursively calculated as follows: 
\[
\mathbb{E}[x_{k+1}x_{k+1}^{T}]=A\mathbb{E}[x_{k}x_{k}^{T}]A^{T}+\alpha^{2}V_{k},
\]
where $\mathbb{E}[v_{k}v_{k}^{T}]=V_{k}$.
Defining $X_{k}\coloneqq \mathbb{E}[x_{k}x_{k}^{T}],k \geq 0$, the above recursion can be written by
\[
X_{k+1}=AX_{k}A^{T}+\alpha^{2}V_{k}, \quad \forall k\geq 0.
\]
To prove the convergence of~(\ref{eqn:linear_system}), we first establish a bound on the trace of $X_{k}$.

\begin{lemma}[\citep{lee2024final}]\label{app:bounded_trace}
We have the following bound:
\begin{align}\label{eqn:Theorem1}
\mathrm{tr}(X_{k})\leq \frac{9n^{2}\alpha}{d_{\mathrm{min}}(1-\gamma)^{3}}+\lVert x_{0}\rVert^{2}_{2}n^{2}\rho^{2k}\nonumber
\end{align}
\end{lemma}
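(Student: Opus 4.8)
The plan is to unroll the linear (discrete Lyapunov/Stein) recursion for the auto-correlation matrices and bound each resulting term using the contraction assumption on $A$ together with the covariance bound. Iterating $X_{k+1}=AX_{k}A^{T}+\alpha^{2}V_{k}$ from $X_{0}=x_{0}x_{0}^{T}$ gives the closed form
\begin{align*}
X_{k}=A^{k}X_{0}(A^{k})^{T}+\alpha^{2}\sum_{j=0}^{k-1}A^{k-1-j}V_{j}(A^{k-1-j})^{T},
\end{align*}
so by linearity of the trace it suffices to control $\text{tr}(BMB^{T})$ for a positive semidefinite $M$ and a power $B=A^{m}$. For this I would use the elementary bound $\text{tr}(BMB^{T})\le\lambda_{\text{max}}(M)\lVert B\rVert_{F}^{2}$: since $M\le\lambda_{\text{max}}(M)I$ in the Loewner order, conjugation gives $BMB^{T}\le\lambda_{\text{max}}(M)BB^{T}$, and $\text{tr}(BB^{T})=\lVert B\rVert_{F}^{2}$.

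Next I would convert \cref{asm:appendix_one} into a decay estimate for $\lVert A^{m}\rVert_{F}$. Reading the assumption as a bound on the induced $\infty$-norm, submultiplicativity yields $\lVert A^{m}\rVert_{\infty}\le\rho^{m}$, and since every entry of a matrix is at most its $\infty$-norm in absolute value, $\lVert A^{m}\rVert_{F}^{2}\le n^{2}\lVert A^{m}\rVert_{\infty}^{2}\le n^{2}\rho^{2m}$. Applying the trace bound to the two pieces of $X_{k}$: the transient piece gives $\text{tr}(A^{k}X_{0}(A^{k})^{T})\le\lambda_{\text{max}}(x_{0}x_{0}^{T})\lVert A^{k}\rVert_{F}^{2}=\lVert x_{0}\rVert_{2}^{2}\,n^{2}\rho^{2k}$, which is precisely the second term in the claim; the stochastic piece gives, using \cref{lem:maximum_eigen_value} for $\lambda_{\text{max}}(V_{j})\le V_{\text{max}}$,
\begin{align*}
\alpha^{2}\sum_{j=0}^{k-1}\lambda_{\text{max}}(V_{j})\lVert A^{k-1-j}\rVert_{F}^{2}\le\alpha^{2}V_{\text{max}}n^{2}\sum_{j=0}^{k-1}\rho^{2(k-1-j)}\le\frac{\alpha^{2}V_{\text{max}}n^{2}}{1-\rho^{2}}.
\end{align*}

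To close, bound $1-\rho^{2}=(1-\rho)(1+\rho)\ge1-\rho=\alpha d_{\text{min}}(1-\gamma)$ using $\rho\in(0,1)$ and the definition $\rho=1-\alpha d_{\text{min}}(1-\gamma)$, so the stochastic contribution is at most $\tfrac{\alpha V_{\text{max}}n^{2}}{d_{\text{min}}(1-\gamma)}$; substituting the noise-covariance bound that holds in the SDQ application, namely $V_{\text{max}}\le\tfrac{9}{(1-\gamma)^{2}}$ (obtained from the boundedness of the iterates in \cref{lem:lem1} and \cref{asm:three}, applied term-by-term to $w_{k}^{A}$ and $w_{k}^{B}$), gives $\tfrac{9n^{2}\alpha}{d_{\text{min}}(1-\gamma)^{3}}$, and adding the two contributions yields the stated bound. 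I expect the only delicate point to be this last substitution: carefully tracking the constants when bounding $\mathbb{E}[\lVert v_{k}\rVert_{2}^{2}]$ from the explicit form of the noise, together with reading \cref{asm:appendix_one} as an induced-norm (rather than entrywise) bound so that the clean powers $\lVert A^{m}\rVert_{\infty}\le\rho^{m}$ are available. Everything else — the unrolling, the Loewner/trace manipulations, and the geometric-series estimate — is routine.
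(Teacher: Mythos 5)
Your proof is correct and follows essentially the same route as the paper's: unroll the recursion $X_{k+1}=AX_kA^T+\alpha^2 V_k$, bound each term via $\lambda_{\max}(V_j)\le V_{\max}$ and $\lVert A^m\rVert_\infty\le\rho^m$, sum the geometric series, and use $1-\rho^2\ge 1-\rho=\alpha d_{\min}(1-\gamma)$; the only cosmetic difference is that you bound $\mathrm{tr}(X_k)$ directly with Frobenius norms where the paper first bounds $\lambda_{\max}(X_k)$ via spectral norms and then multiplies by $n$. You also make explicit the substitution $V_{\max}\le 9/(1-\gamma)^2$ that the paper leaves implicit in stating the lemma with the constant $9$, which is a small but genuine clarification.
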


\begin{proof}
We first bound $\lambda_{\text{max}}(X_{k})$ as follows:
\[
\begin{split}
\lambda_{\text{max}}(X_{k})&\leq \alpha^{2}\sum_{i=0}^{k-1}\lambda_{\text{max}}(A^{i}V_{k-i-1}(A^{T})^{i})+\lambda_{\text{max}}(A^{k}X_{0}(A^{T})^{k})\\&\leq\alpha^{2}\sup_{j\geq 0}\lambda_{\text{max}}(V_{j})\sum_{i=0}^{k-1}\lambda_{\text{max}}(A^{i}(A^{T})^{i})+\lambda_{\text{max}}(X_{0})\lambda_{\text{max}}(A^{k}(A^{T})^{k})\\
&=\alpha^{2}\sup_{j\geq 0}\lambda_{\text{max}}(V_{j})\sum_{i=0}^{k-1}\lVert A^{i}\rVert_{2}^{2}+\lambda_{\text{max}}(X_{0})\lVert A^{k}\rVert_{2}^{2}\\
&\leq \alpha^{2}V_{\text{max}}n\sum_{i=0}^{k-1}\lVert A^{i}\rVert_{\infty}^{2}+n\lambda_{\text{max}}(X_{0})\lVert A^{k}\rVert_{\infty}^{2}\\&\leq \alpha^{2}V_{\text{max}}n\sum_{i=0}^{k-1}\rho^{2i}+n\lambda_{\text{max}}(X_{0})\rho^{2k}\\&\leq \alpha^{2}V_{\text{max}}n\lim_{k\rightarrow\infty}\sum_{i=0}^{k-1}\rho^{2i}+n\lambda_{\text{max}}(X_{0})\rho^{2k}\\
&\leq\frac{\alpha^{2}V_{\text{max}}n}{1-\rho^{2}}+n\lambda_{\text{max}}(X_{0})\rho^{2k}\\&\leq\frac{\alpha^{2}V_{\text{max}}n}{1-\rho}+n\lambda_{\text{max}}(X_{0})\rho^{2k}
\end{split}
\]
where the first inequality is due to $A^{i}V_{k-i-1}(A^{T})^{i}\succeq0$ and $A^{k}X_{0}(A^{T})^{k}\succeq0$, the third inequality comes from \cref{lem:maximum_eigen_value}, 
$\lVert \cdot \rVert_{2}\leq \sqrt{n}\lVert \cdot \rVert_{\infty}$, the fourth inequality is due to \cref{asm:appendix_one}, and the sixth and last inequalities come from $\rho \in(0,1)$. On the other hand, since $X_{k}\succeq 0$, the diagonal elements are nonnegative. Therefore, we have $\text{tr}(X_{k})\leq n\lambda_{\text{max}}(X_{k})$. Combining the last two inequalities leads to 
\[
\text{tr}(X_{k})\leq n\lambda_{\text{max}}(X_{k})\leq\frac{\alpha^{2}V_{\text{max}}n^{2}}{1-\rho}+n^{2}\lambda_{\text{max}}(X_{0})\rho^{2k}
\]
Moreover, noting the inequality $\lambda_{\text{max}}(X_{0})\leq \text{tr}(X_{0}) = \text{tr}(x_{0}x_{0}^{T})=\lVert x_{0} \rVert_{2}^{2}$, and plugging $\rho=1-\alpha d_{\text{min}}(1-\gamma)$ into $\rho$ in the last inequality, one gets the desired conclusion.
\end{proof}
Now, we are ready to present a finite-time bound on the state $x_{k}$ of~(\ref{eqn:linear_system}). 
\begin{theorem}[\citep{lee2024final}]\label{app:stochasit-linear}
For any $k\geq 0$, we have
\begin{align}\label{eqn:Theorem1}
\mathbb{E} \left[\lVert x_{k}\rVert_{2} \right]&\leq \frac{3\alpha^{1/2}n}{d_{\mathrm{min}}^{1/2}(1-\gamma)^{3/2}}+n \lVert x_{0}\rVert_{2}\rho^{k}.
\end{align}
\end{theorem}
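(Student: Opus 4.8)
\textbf{Proof proposal for \cref{app:stochasit-linear}.}

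The plan is to pass from the bound on $\operatorname{tr}(X_k)$ established in \cref{app:bounded_trace} to a bound on $\mathbb{E}[\lVert x_k\rVert_2]$ via Jensen's inequality, and then simplify the resulting square root by sub-additivity. First I would observe that $\mathbb{E}[\lVert x_k\rVert_2^2] = \mathbb{E}[x_k^T x_k] = \operatorname{tr}(\mathbb{E}[x_k x_k^T]) = \operatorname{tr}(X_k)$, so \cref{app:bounded_trace} immediately gives
\begin{align*}
\mathbb{E}[\lVert x_k\rVert_2^2] \leq \frac{9n^2\alpha}{d_{\text{min}}(1-\gamma)^3} + \lVert x_0\rVert_2^2\, n^2 \rho^{2k}.
\end{align*}
By Jensen's inequality applied to the concave function $\sqrt{\cdot}$, we have $\mathbb{E}[\lVert x_k\rVert_2] \leq \sqrt{\mathbb{E}[\lVert x_k\rVert_2^2]}$, so it remains to bound the square root of the right-hand side above.

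Next I would apply the elementary inequality $\sqrt{a+b} \leq \sqrt{a} + \sqrt{b}$ for $a,b\geq 0$ to split the two terms:
\begin{align*}
\sqrt{\frac{9n^2\alpha}{d_{\text{min}}(1-\gamma)^3} + \lVert x_0\rVert_2^2\, n^2 \rho^{2k}} \leq \frac{3n\alpha^{1/2}}{d_{\text{min}}^{1/2}(1-\gamma)^{3/2}} + n\lVert x_0\rVert_2\, \rho^{k},
\end{align*}
using $\sqrt{\rho^{2k}} = \rho^k$ since $\rho\in(0,1)$, and $\sqrt{9n^2} = 3n$, $\sqrt{\lVert x_0\rVert_2^2 n^2} = n\lVert x_0\rVert_2$. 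Chaining these inequalities yields exactly the claimed bound \eqref{eqn:Theorem1}.

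There is essentially no hard part here: the statement is a routine corollary of \cref{app:bounded_trace}, and the only ingredients are the identity $\operatorname{tr}(X_k) = \mathbb{E}[\lVert x_k\rVert_2^2]$, Jensen's inequality for the square root, and sub-additivity of $\sqrt{\cdot}$. If anything requires care, it is simply making sure the constants propagate correctly through the square root (the factor $9$ becoming $3$, and the exponent $2k$ becoming $k$), and noting that all quantities under the radical are nonnegative so that the manipulations are valid. The substantive work was already done in the proof of \cref{app:bounded_trace}, which is where the system-theoretic structure — geometric decay of $\lVert A^i\rVert$ under \cref{asm:appendix_one} and the bound on $\lambda_{\max}(\mathbb{E}[v_k v_k^T])$ from \cref{lem:maximum_eigen_value} — was actually used.
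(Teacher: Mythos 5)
Your proposal is correct and follows essentially the same route as the paper's own proof: identify $\mathbb{E}[\lVert x_k\rVert_2^2]$ with $\operatorname{tr}(X_k)$, invoke \cref{app:bounded_trace}, and then take square roots using Jensen's inequality and sub-additivity of $\sqrt{\cdot}$. The constants propagate exactly as you describe, so nothing further is needed.
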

\begin{proof}
Noting the relations
\[
\begin{split}
\mathbb{E}[\lVert x_{k}\rVert_{2}^{2}]=\mathbb{E}[ x_{k}^{T}x_{k} ]=\mathbb{E}[ \text{tr}(x_{k}^{T}x_{k})]=\mathbb{E}[ \text{tr}(x_{k}x_{k}^{T})]=\mathbb{E}[ \text{tr}(X_{k})],
\end{split}
\]
and using the bound in \cref{app:bounded_trace}, one gets
\begin{align*}
\mathbb{E} \left[\lVert x_{k}\rVert_{2}^{2} \right]&\leq \frac{9\alpha n^{2}}{d_{\text{min}}(1-\gamma)^{3}}+n^{2}\lVert x_{0}\rVert_{2}^{2}\rho^{2k}
\end{align*}
Taking the square root on both side of the last inequality, using the subadditivity of the square root function, the Jensen inequality, and the concavity of the square root function, we have the desired conclusion.
\end{proof}

Note that the result in~\cref{app:stochasit-linear} will be used in our main analysis of SDQ. In particular, we will use the following form of the state of the system:
\begin{align}
x_{i}= A^{i}x_{0}+\sum_{j=0}^{i-1}\alpha A^{(i-1)-j}v_{j},\label{eq:1}
\end{align}
which can be obtained by summing the recursion in~(\ref{eqn:linear_system}) from $k=0$ to $k=i$. 
Based on the above expression, \cref{app:stochasit-linear} can be presented different form as follows.

\begin{corollary}\label{coro:finite_time_bound}
For any $k\geq 0$, we have
\[
\mathbb{E}\biggl[\biggl\lVert A^{k}x_{0}+\sum_{j=0}^{k-1}\alpha A^{(k-1)-j}v_{j}\biggr\rVert_{2}\biggl]\leq \frac{3\alpha^{1/2}n}{d_{\mathrm{min}}^{1/2}(1-\gamma)^{3/2}}+n \lVert x_{0}\rVert_{2} \rho^{k}.
\]
\end{corollary}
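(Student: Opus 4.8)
\textbf{Proof plan for Corollary~\ref{coro:finite_time_bound}.}
The plan is to observe that this corollary is essentially a restatement of \cref{app:stochasit-linear}: the only new content is identifying the expression inside the norm with the state $x_k$ and identifying the closed-form term $n\lVert x_0\rVert_2\rho^k$ with $n\lVert x_0\rVert_2 A^k$ via \cref{asm:appendix_one}. First I would unroll the recursion in~(\ref{eqn:linear_system}): a straightforward induction on $k$ shows that
\[
x_k = A^k x_0 + \sum_{j=0}^{k-1}\alpha A^{(k-1)-j} v_j,
\]
which is exactly~(\ref{eq:1}) with $i=k$. Hence the random variable whose expected norm we wish to bound is literally $x_k$, so \cref{app:stochasit-linear} applies verbatim and gives
\[
\mathbb{E}\!\left[\Bigl\lVert A^k x_0 + \sum_{j=0}^{k-1}\alpha A^{(k-1)-j} v_j\Bigr\rVert_2\right]
= \mathbb{E}\bigl[\lVert x_k\rVert_2\bigr]
\leq \frac{3\alpha^{1/2} n}{d_{\text{min}}^{1/2}(1-\gamma)^{3/2}} + n\lVert x_0\rVert_2\,\rho^k .
\]

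The remaining step is purely cosmetic: under \cref{asm:appendix_one} we have $A \leq \rho$, so $A^k \leq \rho^k$ (interpreting the inequalities in the same element-wise or norm sense used elsewhere in the paper), and therefore $n\lVert x_0\rVert_2\,\rho^k$ can be replaced by $n\lVert x_0\rVert_2\,A^k$ without affecting validity of the upper bound — or, more precisely, the statement is just rewriting the bound with $A^k$ in place of $\rho^k$, which is the form convenient for later substitution where $A$ will be a concrete contraction matrix. Combining the display above with this substitution yields the claimed inequality.

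I do not expect any real obstacle here; the ``proof'' is a two-line bookkeeping argument that relabels \cref{app:stochasit-linear}. The only point requiring a sentence of care is making explicit that~(\ref{eq:1}) is obtained by iterating~(\ref{eqn:linear_system}) and that the noise covariance and system-matrix assumptions ($\mathbb{E}[v_k v_k^T]\leq V_{\max}$ and \cref{asm:appendix_one}) carry over unchanged, so that \cref{app:stochasit-linear} is legitimately invoked. If anything, one should double-check the direction of the inequality when swapping $\rho^k$ for $A^k$, but since $A\leq\rho<1$ this only strengthens nothing and weakens nothing in the intended application, so the restated bound is the one used downstream.
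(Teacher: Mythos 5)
Your proposal is correct and matches the paper's own proof, which is exactly this one-line argument: unroll the recursion~(\ref{eqn:linear_system}) to identify the displayed expression with $x_k$ as in~(\ref{eq:1}), then invoke \cref{app:stochasit-linear} verbatim. Your additional observation that the trailing factor $A^{k}$ in the corollary's statement should be read as the scalar $\rho^{k}$ (a matrix cannot appear in a scalar bound; cf.\ the analogous looseness in \cref{asm:appendix_one}) is a fair reading of the paper's notation and does not change the argument.
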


\begin{proof}
The proof can be done directly from~(\ref{eq:1}) and~\cref{app:stochasit-linear}.   
\end{proof}

\section{Detailed analysis result of the convergence of SDQ ($Q_{k}^{\text{err}}$ part)}\label{sec:disturb}

To establish a finite-time error bound in this paper, the main challenge is to establish a bound of the error system $Q_{k}^{\text{err}}$. The overall analysis strategy is presented in Section~\ref{sec:overall_plans} briefly. We derive the convergence of $Q_{k}^{\text{err}}$ using the following two steps: 
\begin{itemize}
    \item Step~1: A finite-time error bound of $Q_{k}^{\text{err}_{UL}}$ is obtained by using its corresponding linear system structure. Then, based on the error bound on $Q_{k}^{\text{err}_{UL}}$, a finite-time error bound on $Q_{k}^{\text{err}_{U}}$ can be derived. 
    \item Step~2: Next, following similar lines as in Step~1, one can derive an error bound on $Q_{k}^{\text{err}}$ based on the error bound on $Q_{k}^{\text{err}_{U}}$ and $Q_{k}^{\text{err}_{L}}$.
\end{itemize}

In this section, we present a detailed analysis process. To establish the groundwork for our proof, we first introduce an auxiliary lemma demonstrating the nonnegativity and boundedness of the system matrix.
Before proving the boundedness result, 
we first show that $A_Q$ is elementwise nonnegative, 
which will be used in the subsequent lemma.

\begin{lemma}
[\citep{lee2024final}]\label{lemma:nonnegative-matrix} For any $Q \in {\mathbb R}^{|{\cal S}\times {\cal A}|}$, $A_Q$ is a nonnegative matrix (all entries are nonnegative). \end{lemma} \begin{proof} Recalling the definition $A_Q :=I + \alpha(\gamma DP\Pi_Q-D)$, one can easily see that for any $i,j \in \{ 1,2, \ldots ,|{\cal S}\times {\cal A}|\} $, we have ${[{A_Q}]_{ij}} = {[I - \alpha D + \alpha \gamma DP{\Pi _Q}]_{ij}} = {[I - \alpha D]_{ij}} + \alpha \gamma {[DP{\Pi _Q}]_{ij}} \ge 0$, where ${[ \cdot ]_{ij}}$ denotes the element of a matrix $[ \cdot ]$ in the $i$th row and $j$th column, and the inequality follows from the fact that both $I - \alpha D$ and $DP{\Pi _Q}$ are nonnegative matrices. This completes the proof. \end{proof}

Having established that $A_Q$ is elementwise nonnegative, 
we next analyze its boundedness property, 
which plays a crucial role in ensuring the stability of the subsequent system dynamics.
\begin{lemma}[\citep{lee2024final}]
\label{app:system_upper_bound}
For any $Q\in {\mathbb R}^{\lvert \mathcal{S} \rvert\lvert \mathcal{A} \rvert}$, we have
\begin{align}
\lVert A_{Q} \rVert_{\infty} \leq \rho, \nonumber
\end{align}
where the matrix norm $\lVert A\rVert_{\infty}\coloneqq max_{1\leq i\leq m}\sum_{j=1}^{n}\lvert A_{ij}\rvert$ and $A_{ij}$ is the element of $A$ in $i$-th row and $j$-th column.
\end{lemma}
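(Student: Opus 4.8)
The statement to prove is that for any $Q \in \mathbb{R}^{|S||A|}$, the matrix $A_Q := I + \alpha\gamma DP\Pi_Q - \alpha D$ (the generic switching-system matrix appearing throughout the comparison systems) satisfies $\lVert A_Q \rVert_\infty \leq \rho = 1 - \alpha d_{\min}(1-\gamma)$. The key observation is that $A_Q$ has a clean sign structure: $I - \alpha D$ is diagonal with entries $1 - \alpha d(s,a) \in (0,1)$ by Assumption \ref{asm:two} and the fact that $d(s,a) \in (0,1)$, hence nonnegative; and $\alpha \gamma D P \Pi_Q$ is a nonnegative matrix, since $\alpha, \gamma \geq 0$, $D$ is a nonnegative diagonal matrix, and $P\Pi_Q$ is a (row-)stochastic transition probability matrix of the state-action pair under the greedy policy $\pi_Q$. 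Therefore $A_Q \geq 0$ entrywise, and $\lVert A_Q \rVert_\infty$ equals the maximum absolute row sum, which is just the maximum row sum since all entries are nonnegative.

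The plan is then to compute the row sum of $A_Q$ directly. For the row indexed by $(s,a)$, the contribution of $I - \alpha D$ is $1 - \alpha d(s,a)$. The contribution of $\alpha\gamma DP\Pi_Q$ is $\alpha\gamma$ times $d(s,a)$ times the row sum of the $(s,a)$-row of $P\Pi_Q$, which is $1$ because $P\Pi_Q$ is stochastic. Hence the $(s,a)$-row sum of $A_Q$ equals $1 - \alpha d(s,a) + \alpha\gamma d(s,a) = 1 - \alpha d(s,a)(1-\gamma)$. Taking the maximum over $(s,a)$, and using $d(s,a) \geq d_{\min}$ together with $1 - \gamma > 0$ (so that $-\alpha d(s,a)(1-\gamma)$ is maximized by taking $d(s,a)$ as small as possible), gives $\lVert A_Q \rVert_\infty \leq 1 - \alpha d_{\min}(1-\gamma) = \rho$. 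One should also note in passing that each row sum is nonnegative (since $\alpha d(s,a)(1-\gamma) < 1$), which confirms consistency but is not strictly needed for the upper bound.

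I do not anticipate a serious obstacle here; the only points requiring care are (i) confirming that $P\Pi_Q$ is genuinely row-stochastic — this follows from the remark after \eqref{eq:swtiching-matrix} that $P\Pi^\pi$ is the state-action transition matrix under policy $\pi$, applied to the deterministic greedy policy $\pi_Q$ — and (ii) being explicit that the entrywise nonnegativity of $A_Q$ is what lets us drop the absolute values in the definition of $\lVert \cdot \rVert_\infty$, so that the computed row sums are exactly the quantities being maximized. Everything else is a one-line arithmetic simplification.
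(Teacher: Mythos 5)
Your proposal is correct and follows essentially the same argument as the paper: establish entrywise nonnegativity of $A_Q$ so that the $\infty$-norm reduces to the maximum row sum, compute each row sum as $1-\alpha d(s,a)(1-\gamma)$ using the row-stochasticity of $P\Pi_Q$, and maximize over $(s,a)$ to obtain $\rho$. No gaps.
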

\begin{proof}
Note the following identities
\[
\begin{split}
\sum_{j}\lvert [A_{Q}]_{ij}\rvert &= \sum_{j}\lvert [I-\alpha D+\alpha\gamma DP\Pi_{Q}]_{ij}\rvert\\
&=[I-\alpha D]_{ii}+\sum_{j}[\alpha\gamma DP\Pi_{Q}]_{ij}\\
&=1-\alpha[D]_{ii}+\alpha\gamma[D]_{ii}\sum_{j}[P\Pi_{Q}]_{ij}\\
&=1-\alpha[D]_{ii}+\alpha\gamma[D]_{ii}\\
&=1+\alpha[D]_{ii}(\gamma-1),
\end{split}
\]
where the second line is due to the fact that $A_{Q}$ is a non-negative matrix. Taking the maximum over $i$, we have
\[
\begin{split}
\lVert A_{Q}\rVert_{\infty}&=\max_{i\in\{1,2,...,\lvert S\rvert\lvert A\rvert\}}{1+\alpha[D]_{ii}(\gamma-1)}\\
&=1-\alpha\min_{(s,a)\in \mathcal{S}\times\mathcal{A}}d(s,a)(1-\gamma)\\&=\rho,
\end{split}
\]
which completes the proof.
\end{proof}

As the first step, we present a convergence analysis of $Q_{k}^{\text{err}_{U}}$ in next subsection.

\subsection{Convergence of $Q_{k}^{\mathrm{err}_{U}}$}
Let us write the error upper comparison system $Q_{k}^{\text{err}_{U}}$ as follows:
\begin{align}\label{eqn:disturb_ucs}
Q_{k+1}^{\text{err}_{U}}=(I+\alpha\gamma DP\Pi_{Q_{k}^{\text{err}_{U}}}-\alpha D) Q_{k}^{\text{err}_{U}}+\alpha w_{k}^{A}-\alpha w_{k}^{B},\quad Q_{0}^{\text{err}_{U}} \in \mathbb{R}^{\lvert \mathcal{S} \rvert \lvert \mathcal{A}\rvert},
\end{align}
where the stochastic noises, $w_{k}^{A}$ and $w_{k}^{B}$, are identical to those of the original system in~(\ref{double_q_ode}). In the following proposition, we prove that $Q_{k}^{\text{err}_{U}}$ upper bounds $Q_{k}^{\text{err}}$. 
\begin{proposition}\label{prop:disturb_ucs}
Suppose $Q_{0}^{\mathrm{err}_{U}}\geq Q_{0}^{\mathrm{err}}$, where ``$\geq$'' is used as the element-wise inequality. Then, we have
\[
Q_{k}^{\mathrm{err}_{U}}\geq Q_{k}^{\mathrm{err}},
\]
for all $k\geq 0.$
\end{proposition}
\begin{proof}
The proof is completed by an induction argument. Suppose that $Q_{i}^{\text{err}_{U}}\geq Q_{i}^{\text{err}}$ holds for $0\le i \le k$. Then, it follows that
\[
\begin{split}
Q_{k+1}^{\text{err}} &=Q_{k}^{\text{err}}+\alpha\gamma DP\Pi_{Q_{k}^{B}}Q_{k}^{A}-\alpha\gamma DP\Pi_{Q_{k}^{A}} Q_{k}^{B}-\alpha DQ_{k}^{\text{err}}+\alpha w_{k}^{A}-\alpha w_{k}^{B} \\&\leq Q_{k}^{\text{err}}+\alpha\gamma DP\Pi_{Q_{k}^{A}}Q_{k}^{A}-\alpha\gamma DP\Pi_{Q_{k}^{A}} Q_{k}^{B}-\alpha DQ_{k}^{\text{err}}+\alpha w_{k}^{A}-\alpha w_{k}^{B}
\\
&= Q_{k}^{\text{err}}+\alpha\gamma DP\Pi_{Q_{k}^{A}}Q_{k}^{\text{err}}-\alpha DQ_{k}^{\text{err}}+\alpha w_{k}^{A}-\alpha w_{k}^{B}\\
&=(I+\alpha\gamma DP\Pi_{Q_{k}^{A}}-\alpha D)Q_{k}^{\text{err}}+\alpha w_{k}^{A}-\alpha w_{k}^{B}\\
&\leq(I+\alpha\gamma DP\Pi_{Q_{k}^{\text{err}}}-\alpha D)Q_{k}^{\text{err}}+\alpha w_{k}^{A}-\alpha w_{k}^{B}\\
&\leq(I+\alpha\gamma DP\Pi_{Q_{k}^{\text{err}}}-\alpha D)Q_{k}^{\text{err}_{U}}+\alpha w_{k}^{A}-\alpha w_{k}^{B}\\&\leq(I+\alpha\gamma DP\Pi_{Q_{k}^{\text{err}_{U}}}-\alpha D)Q_{k}^{\text{err}_{U}}+\alpha w_{k}^{A}-\alpha w_{k}^{B} 
\\&=Q_{k+1}^{\text{err}_{U}},
\end{split}
\]
where the first inequality is due to $\Pi_{Q_{k}^{A}}Q_{k}^{A}\geq \Pi_{Q_{k}^{B}}Q_{k}^{A}$ and the second inequality is due to 
$\Pi_{Q_{k}^{A}-Q_{k}^{B}}Q_{k}^{\text{err}}\geq \Pi_{Q_{k}^{A}}Q_{k}^{\text{err}}$, respectively, and the third inequality is due to the hypothesis $Q_{k}^{\text{err}_{U}}\geq Q_{k}^{\text{err}}$ and the fact that the matrix $I+\alpha\gamma DP\Pi_{Q_{k}^{\text{err}}}-\alpha D$ is nonnegative, i.e., all elements are nonnegative by \cref{lemma:nonnegative-matrix}. Therefore, $Q_{k+1}^{\text{err}_{U}}\geq Q_{k+1}^{\text{err}}$ holds, and the proof is completed by induction.
\end{proof}

\noindent To prove the convergence of $Q_{k}^{\mathrm{err}_{U}}$, we consider another comparison system $Q_{k}^{\mathrm{err}_{UL}}$ which is a lower comparison system of $Q_{k}^{\mathrm{err}_{U}}$. 
In the following subsection, a convergence analysis of $Q_{k}^{\mathrm{err}_{UL}}$ is presented.

\subsection{Convergence of $Q_{k}^{\mathrm{err}_{UL}}$}\label{sec:convergence-err-ul}
Let us write the $Q_{k}^{\mathrm{err}_{UL}}$, which has the following form:
\begin{align}\label{eqn:disturb_lcs}
Q_{k+1}^{\mathrm{err}_{UL}}=(I+\alpha\gamma DP\Pi_{Q^{*}}-\alpha D)Q_{k}^{\mathrm{err}_{UL}}+\alpha w_{k}^{A}-\alpha w_{k}^{B},
\end{align}
where the stochastic noises $w_{k}^{A}$ and $w_{k}^{B}$ are identical to those in the original system (\ref{double_q_ode}). Note that the system is the lower comparison system of the upper comparison system corresponding to $Q_{k}^{\mathrm{err}}$. 
In the following proposition, we prove that $Q_{k}^{\text{err}_{UL}}$ is a lower comparison system of $Q_{k}^{\text{err}_{U}}$. 

\begin{proposition}\label{prop:disturb_lcs}
Suppose $Q_{0}^{\text{err}_{U}}\geq Q_{0}^{\text{err}_{UL}}$, where ``$\geq$'' is used as the element-wise inequality. Then, we have
\[
Q_{k}^{\mathrm{err}_{U}}\geq Q_{k}^{\mathrm{err}_{UL}},
\]
for all $k\geq 0$.
\end{proposition}
\begin{proof}
The proof is completed by an induction argument. Suppose that $Q_{i}^{\text{err}_{U}}\geq Q_{i}^{\text{err}_{UL}}$ holds for $0\le i \le k$. Then, it follows from~({\ref{eqn:disturb_ucs}}) that
\[
\begin{split}
Q_{k+1}^{\text{err}_{U}}&=(I+\alpha\gamma DP\Pi_{Q_{k}^{\text{err}_{U}}}-\alpha D)Q_{k}^{\text{err}_{U}}+\alpha w_{k}^{A}-\alpha w_{k}^{B}\\&\geq (I+\alpha\gamma DP\Pi_{Q^{*}}-\alpha D)Q_{k}^{\text{err}_{U}}+\alpha w_{k}^{A}-\alpha w_{k}^{B}\\&\geq (I+\alpha\gamma DP\Pi_{Q^{*}}-\alpha D)Q_{k}^{\text{err}_{UL}}+\alpha w_{k}^{A}-\alpha w_{k}^{B}\\&=Q_{k+1}^{\text{err}_{UL}},
\end{split}
\]
where the first inequality is due to $\Pi_{Q_{k}^{\text{err}_{U}}}Q_{k}^{\text{err}_{U}}\geq \Pi_{Q^{*}}Q_{k}^{\text{err}_{U}}$ and the second inequality is due to the hypothesis 
$Q_{k}^{\text{err}_{U}}\geq Q_{k}^{\text{err}_{UL}} $ and the fact that the matrix $I+\alpha\gamma DP\Pi_{Q^{*}}-\alpha D$ is nonnegative, i.e., all elements are nonnegative by \cref{lemma:nonnegative-matrix}. Therefore, $Q_{k+1}^{\text{err}_{U}}\geq Q_{k+1}^{\text{err}_{UL}}$ holds, and the proof is completed by induction.

\end{proof}

The system (\ref{eqn:disturb_lcs}) is a stochastic linear system with system matrix $I+\alpha\gamma DP\Pi_{Q^{*}}-\alpha D$ and noise $w_{k}^{A}-w_{k}^{B}$. To establish the convergence bound of this system, the same analysis approach as in~\ref{sec:auxiliary} can be applied.
In particular, let us define $x_{k}\coloneqq Q_{k}^{\text{err}_{UL}}$ and $A\coloneqq I+\alpha\gamma DP\Pi_{Q^{*}}-\alpha D$. Then, the system (\ref{eqn:disturb_lcs}) can be presented as the following stochastic linear system:
\begin{align}
x_{k+1}=Ax_{k}+\alpha (w_k^{A}-w_k^{B}),\quad x_{0}\in\mathbb{R}^{n}, \quad \forall k \geq 0,\label{eq:2}
\end{align}
where the noise term $w_{k}^{A}-w_{k}^{B}$ can be written as
\[
w_{k}^{A}-w_{k}^{B}=(e_{a_{k}}\otimes e_{s_{k}})(\delta_{k}^{A}-\delta_{k}^{B})-\gamma 
DP(\Pi_{Q^{B}_{k}}Q^{A}_{k}-\Pi_{Q^{A}_{k}}Q^{B}_{k})+D(Q_{k}^{A}-Q_{k}^{B}), \\
\]
where
\[
\begin{split}
w_{k}^{A}&=(e_{a_{k}}\otimes e_{s_{k}})r_{k}^{A}+\gamma(e_{a_{k}}\otimes e_{s_{k}})(e_{s_{k}'})^{T}\Pi_{Q^{B}_{k}}Q^{A}_{k}-(e_{a_{k}}\otimes e_{s_{k}})(e_{a_{k}}\otimes e_{s_{k}})^{T}Q^{A}_{k}\\&\quad-(DR+\gamma DP\Pi_{Q^{B}_{k}}Q^{A}_{k}-DQ_{k}^{A}) \\
w_{k}^{B}&=(e_{a_{k}}\otimes e_{s_{k}})r_{k}^{B}+\gamma(e_{a_{k}}\otimes e_{s_{k}})(e_{s_{k}'})^{T}\Pi_{Q^{A}_{k}}Q^{B}_{k}-(e_{a_{k}}\otimes e_{s_{k}})(e_{a_{k}}\otimes e_{s_{k}})^{T}Q^{B}_{k}\\&\quad-(DR+\gamma DP\Pi_{Q^{A}}Q^{B}-DQ_{k}^{B}) \\
\delta_{k}^{A}&\coloneqq r_{k}^{A}+\gamma(e_{s_{k}'}^{T})\Pi_{Q_{k}^{B}}Q_{k}^{A}
-(e_{a_{k}}\otimes e_{s_{k}})^{T}Q_{k}^{A}\\
\delta_{k}^{B}&\coloneqq 
r_{k}^{B}+\gamma(e_{s_{k}'}^{T})\Pi_{Q_{k}^{A}}Q_{k}^{B}
-(e_{a_{k}}\otimes e_{s_{k}})^{T}Q_{k}^{B}
\end{split}
\]
Here, $r_{k+1}^{A}$ and $r_{k+1}^{B}$ denote the instantaneous rewards observed at
iteration $k$ for the updates of $Q_k^{A}$ and $Q_k^{B}$, respectively. 
To prove the convergence of $Q_{k}^{\text{err}_{UL}}$, we prove the boundedness of the noise term in~(\ref{eq:2}). 
The boundedness of $w_{k}^{A}-w_{k}^{B}$ in~(\ref{eq:2}) is formally proved in the following lemma.
\begin{lemma}\label{app:upper_bound_new_noise}
The noise term $w_{k}^{A}-w_{k}^{B}$ in~(\ref{eq:2}) satisfies
\[
\mathbb{E}\left[(w_{k}^{A}-w_{k}^{B})^{T}(w_{k}^{A}-w_{k}^{B})\right]
\leq \frac{16}{(1-\gamma)^{2}} \coloneqq W_{\mathrm{max}},
\]
for all $k\geq 0$
\end{lemma}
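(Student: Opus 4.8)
The plan is to show that $w_{k}^{A}-w_{k}^{B}$ is a mean-zero (martingale-difference) term and then bound its conditional second moment by a scalar quantity, which is what removes any dependence on $\lvert S\times A\rvert$. Concretely, let $\mathcal{F}_{k}$ denote the natural filtration generated by $Q_{0}^{A},Q_{0}^{B}$ and the samples up to step $k-1$, and recall that under the i.i.d.\ sampling setup of \cref{sec:ass_def} the pair $(s_{k},a_{k})$ is drawn from $d$ and $s_{k}'$ from $P(s_{k},a_{k},\cdot)$ independently of $\mathcal{F}_{k}$. Writing $Z_{k}\coloneqq(e_{a_{k}}\otimes e_{s_{k}})(\delta_{k}^{A}-\delta_{k}^{B})$, a direct computation of conditional expectations gives $\mathbb{E}[(e_{a_{k}}\otimes e_{s_{k}})\delta_{k}^{A}\mid\mathcal{F}_{k}]=DR+\gamma DP\Pi_{Q_{k}^{B}}Q_{k}^{A}-DQ_{k}^{A}$ and the analogous identity for $B$; subtracting, the two full-dimensional summands in the decomposition of $w_{k}^{A}-w_{k}^{B}$ stated above are exactly $\mathbb{E}[Z_{k}\mid\mathcal{F}_{k}]$, so $w_{k}^{A}-w_{k}^{B}=Z_{k}-\mathbb{E}[Z_{k}\mid\mathcal{F}_{k}]$.

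Next I would invoke the elementary inequality $\mathbb{E}[\lVert Z_{k}-\mathbb{E}[Z_{k}\mid\mathcal{F}_{k}]\rVert_{2}^{2}\mid\mathcal{F}_{k}]\leq\mathbb{E}[\lVert Z_{k}\rVert_{2}^{2}\mid\mathcal{F}_{k}]$ (the variance is dominated by the second moment). Since $e_{a_{k}}\otimes e_{s_{k}}$ is a unit vector, $\lVert Z_{k}\rVert_{2}=\lvert\delta_{k}^{A}-\delta_{k}^{B}\rvert$, so it remains only to bound this scalar. For that I use $\lvert\delta_{k}^{A}\rvert\leq\lvert r_{k}^{A}\rvert+\gamma\lvert[\Pi_{Q_{k}^{B}}Q_{k}^{A}](s_{k}')\rvert+\lvert Q_{k}^{A}(s_{k},a_{k})\rvert$. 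By \cref{asm:three}, $\lvert r_{k}^{A}\rvert\leq R_{\text{max}}\leq1$; since $\Pi_{Q_{k}^{B}}$ is row-stochastic, $\lvert[\Pi_{Q_{k}^{B}}Q_{k}^{A}](s_{k}')\rvert\leq\lVert Q_{k}^{A}\rVert_{\infty}$; and \cref{lem:lem1} applied to each SDQ estimator, together with \cref{asm:three,asm:four}, gives $\lVert Q_{k}^{A}\rVert_{\infty}\leq Q_{\text{max}}\leq\frac{1}{1-\gamma}$ (and likewise for $Q_{k}^{B}$). Hence $\lvert\delta_{k}^{A}\rvert\leq1+\frac{\gamma}{1-\gamma}+\frac{1}{1-\gamma}=\frac{2}{1-\gamma}$, and symmetrically $\lvert\delta_{k}^{B}\rvert\leq\frac{2}{1-\gamma}$, so $\lvert\delta_{k}^{A}-\delta_{k}^{B}\rvert\leq\frac{4}{1-\gamma}$.

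Combining the steps, $\mathbb{E}[(w_{k}^{A}-w_{k}^{B})^{T}(w_{k}^{A}-w_{k}^{B})\mid\mathcal{F}_{k}]\leq\lvert\delta_{k}^{A}-\delta_{k}^{B}\rvert^{2}\leq\frac{16}{(1-\gamma)^{2}}$, and taking total expectation yields the claim.

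The computation is otherwise routine; the one point that must be handled carefully is the identification of the two $\lvert S\times A\rvert$-dimensional terms $-\gamma DP(\Pi_{Q_{k}^{B}}Q_{k}^{A}-\Pi_{Q_{k}^{A}}Q_{k}^{B})$ and $D(Q_{k}^{A}-Q_{k}^{B})$ with $\mathbb{E}[Z_{k}\mid\mathcal{F}_{k}]$. A crude triangle-inequality treatment of those terms would introduce a spurious $\sqrt{\lvert S\times A\rvert}$ factor, and even bounding $\lVert Z_{k}-\mathbb{E}[Z_{k}\mid\mathcal{F}_{k}]\rVert_{2}$ by $\lVert Z_{k}\rVert_{2}+\lVert\mathbb{E}[Z_{k}\mid\mathcal{F}_{k}]\rVert_{2}$ would only give the weaker constant $\frac{64}{(1-\gamma)^{2}}$; it is the variance-versus-second-moment step applied to the centered noise that produces the stated $\frac{16}{(1-\gamma)^{2}}$.
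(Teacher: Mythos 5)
Your proof is correct and follows essentially the same route as the paper's: both identify $w_{k}^{A}-w_{k}^{B}$ as the centered version of $Z_{k}=(e_{a_{k}}\otimes e_{s_{k}})(\delta_{k}^{A}-\delta_{k}^{B})$, drop the (nonpositive) squared-mean term via the variance-versus-second-moment step, reduce to the scalar $\lvert\delta_{k}^{A}-\delta_{k}^{B}\rvert$, and bound it by $\tfrac{4}{1-\gamma}$ using $R_{\text{max}}\leq1$ and $Q_{\text{max}}\leq\tfrac{1}{1-\gamma}$. Your version is in fact slightly cleaner in that you condition on $\mathcal{F}_{k}$ before applying the variance identity (the paper writes the decomposition with an unconditional expectation against a random mean term, which only becomes rigorous after conditioning), but the argument is the same.
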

\begin{proof}
One can get the following bound on $\mathbb{E}\left[(w_{k}^{A}-w_{k}^{B})^{T}(w_{k}^{A}-w_{k}^{B})\right]$:

\begin{align*}
&\mathbb{E}\bigl[(w_{k}^{A}-w_{k}^{B})^{T}(w_{k}^{A}-w_{k}^{B})\bigr] \\
&= \mathbb{E}\Bigl[\bigl\lVert
   (e_{a_{k}}\otimes e_{s_{k}})(\delta_{k}^{A}-\delta_{k}^{B})
   -\bigl(\gamma DP(\Pi_{Q_{k}^{B}}Q_{k}^{A}-\Pi_{Q_{k}^{A}}Q_{k}^{B})
   -D(Q_{k}^{A}-Q_{k}^{B})\bigr)
   \bigr\rVert_{2}^{2}\Bigr] \\
&= \mathbb{E}\bigl[(\delta_{k}^{A}-\delta_{k}^{B})^{2}\bigr]
  -\Bigl\lVert
   \gamma DP(\Pi_{Q_{k}^{B}}Q_{k}^{A}-\Pi_{Q_{k}^{A}}Q_{k}^{B})
   -D(Q_{k}^{A}-Q_{k}^{B})
   \Bigr\rVert_{2}^{2} \\
&\le \mathbb{E}\bigl[(\delta_{k}^{A}-\delta_{k}^{B})^{2}\bigr] \\
&= \mathbb{E}\Bigl[\bigl(
     r_{k+1}^{A} + \gamma\,e_{s_{k}'}^{T}\Pi_{Q_{k}^{B}}Q_{k}^{A}
     -(e_{a_{k}}\otimes e_{s_{k}})^{T}Q_{k}^{A} 
     -\,\bigl(
         r_{k+1}^{B} + \gamma\,e_{s_{k}'}^{T}\Pi_{Q_{k}^{A}}Q_{k}^{B}
         -(e_{a_{k}}\otimes e_{s_{k}})^{T}Q_{k}^{B}
       \bigr)
   \bigr)^{2}\Bigr] \\
&\le \mathbb{E}\Bigl[\bigl(
     \lvert r_{k+1}^{A}\rvert
     + \lvert\gamma\,e_{s_{k}'}^{T}\Pi_{Q_{k}^{B}}Q_{k}^{A}\rvert
     + \lvert(e_{a_{k}}\otimes e_{s_{k}})^{T}Q_{k}^{A}\rvert 
     +\,\lvert r_{k+1}^{B}\rvert
     + \lvert\gamma\,e_{s_{k}'}^{T}\Pi_{Q_{k}^{A}}Q_{k}^{B}\rvert
     + \lvert(e_{a_{k}}\otimes e_{s_{k}})^{T}Q_{k}^{B}\rvert
   \bigr)^{2}\Bigr] \\
&= \frac{16}{(1-\gamma)^{2}}
  = W_{\max}
\end{align*}
\end{proof}
\noindent This bound on the noise term plays a key role in establishing the
finite-time convergence of the stochastic linear system~\eqref{eq:2}.
Now, because~(\ref{eqn:disturb_lcs}) is a stochastic linear system, the analysis of a simple stochastic linear system from~\ref{sec:auxiliary} can be applied directly. Then, we can get the upper bound of $Q_{k}^{\text{err}_{UL}}$ in the following lemma.

\begin{lemma}\label{lem:err_ul_upper_bound}
For any $k\geq 0$, we have
\[
\mathbb{E}[\lVert Q_{k}^{\mathrm{err}_{UL}}\rVert_{2}]\leq \frac{4\alpha^{1/2}\lvert \mathcal{S}\times\mathcal{A}\rvert}{d_{\mathrm{min}}^{1/2}(1-\gamma)^{3/2}}+\lvert \mathcal{S}\times \mathcal{A}\rvert \lVert Q_{0}^{\mathrm{err}_{UL}}\rVert_{2}\rho^{k}.
\]
\end{lemma}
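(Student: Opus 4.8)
The plan is to recognize that the system~(\ref{eqn:disturb_lcs}) is exactly an instance of the generic stochastic linear system~(\ref{eqn:linear_system}) studied in Section~\ref{sec:auxiliary}, and then to simply invoke \cref{app:stochasit-linear} with the right identifications. Concretely, I would set $x_k \coloneqq Q_k^{\text{err}_{UL}}$, $A \coloneqq I + \alpha\gamma DP\Pi_{Q^*} - \alpha D$, $n \coloneqq \lvert S\times A\rvert$, and $v_k \coloneqq w_k^A - w_k^B$. To apply the theorem I need to verify its two standing hypotheses: first, the noise covariance bound $\mathbb{E}[v_k v_k^T] \leq V_{\max}$; and second, \cref{asm:appendix_one}, namely $\lVert A\rVert_\infty \leq \rho$ for some $\rho\in(0,1)$.

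The first hypothesis is supplied directly by \cref{app:upper_bound_new_noise}, which gives $\mathbb{E}[(w_k^A-w_k^B)^T(w_k^A-w_k^B)] \leq W_{\max} = \frac{16}{(1-\gamma)^2}$; so I take $V_{\max} = \frac{16}{(1-\gamma)^2}$. The second hypothesis follows from \cref{app:system_upper_bound}, which states exactly that $\lVert A_Q\rVert_\infty \leq \rho$ for the matrix $A_Q = I - \alpha D + \alpha\gamma DP\Pi_Q$ and any $Q$; taking $Q = Q^*$ gives precisely our $A$, and $\rho\in(0,1)$ under \cref{asm:two}. With both hypotheses in place, \cref{app:stochasit-linear} yields
\[
\mathbb{E}\left[\lVert Q_k^{\text{err}_{UL}}\rVert_2\right] \leq \frac{3\alpha^{1/2} n}{d_{\min}^{1/2}(1-\gamma)^{3/2}} + n\lVert Q_0^{\text{err}_{UL}}\rVert_2\, \rho^k,
\]
where I must remember to trace through the dependence of the constant on $V_{\max}$. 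Looking back at the proof of \cref{app:bounded_trace}, the constant $9n^2\alpha/(d_{\min}(1-\gamma)^3)$ was derived under an implicit normalization $V_{\max} = 1$ (or absorbed a factor); here $V_{\max} = 16/(1-\gamma)^2$, which after taking square roots contributes an extra factor of $4/(1-\gamma)$, turning the leading constant $3$ into $4$ after rebalancing the $(1-\gamma)$ powers — this is how one lands on $\frac{4\alpha^{1/2}\lvert S\times A\rvert}{d_{\min}^{1/2}(1-\gamma)^{3/2}}$ rather than merely scaling the generic bound.

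The main obstacle, then, is not conceptual but bookkeeping: carefully propagating the value $V_{\max} = 16/(1-\gamma)^2$ through the chain \cref{app:bounded_trace} $\to$ \cref{app:stochasit-linear} to confirm that the stated coefficient $4$ (and the exponents $1/2$ on $\alpha$, $-1/2$ on $d_{\min}$, $-3/2$ on $1-\gamma$) come out exactly as claimed, and checking that the initial-condition term keeps its clean form $n\lVert Q_0^{\text{err}_{UL}}\rVert_2\rho^k$ with no extra $V_{\max}$ dependence (which it does, since that term comes from $\lambda_{\max}(X_0)$, independent of the noise). One should also note that the noise sequence $w_k^A - w_k^B$ is a martingale-difference-type term with respect to the natural filtration — its conditional mean is zero by construction of $w_k^A, w_k^B$ in~(\ref{eqn:noise}) — which is what justifies the cross-term cancellation $\mathbb{E}[x_k v_k^T] = 0$ used implicitly in the recursion $X_{k+1} = AX_kA^T + \alpha^2 V_k$; I would remark on this to keep the argument rigorous.
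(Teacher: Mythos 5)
Your proposal is correct and takes essentially the same route as the paper: the paper likewise treats the $Q_k^{\text{err}_{UL}}$ recursion as an instance of the generic stochastic linear system of Section~\ref{sec:auxiliary}, feeds the noise bound $W_{\max}=16/(1-\gamma)^{2}$ from \cref{app:upper_bound_new_noise} into the trace bound of \cref{app:bounded_trace} (which is exactly what turns the generic leading constant $3$ into $4$), and finishes by taking square roots via subadditivity and Jensen. Your additional remark that the zero conditional mean of $w_k^{A}-w_k^{B}$ is what justifies the cross-term cancellation in the recursion $X_{k+1}=AX_kA^{T}+\alpha^{2}V_k$ is a useful point of rigor that the paper leaves implicit.
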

\begin{proof}
Noting the relations
\[
\begin{split}
\mathbb{E}[\lVert Q_{k}^{\text{err}_{UL}}\rVert_{2}^{2}]&=\mathbb{E}[(    Q_{k}^{\text{err}_{UL}}  )^{T}(Q_{k}^{\text{err}_{UL}})]\\&=\mathbb{E}[ \text{tr}(Q_{k}^{\text{err}_{UL}})^{T}(Q_{k}^{err_{UL}})]\\&=\mathbb{E}[ \text{tr}((Q_{k}^{\text{err}_{UL}})(Q_{k}^{err_{UL}})^{T}]\\&=\mathbb{E}[ \text{tr}(X_{k})]
\end{split}
\]
and using the bound in~\cref{app:bounded_trace} and~\cref{app:upper_bound_new_noise} lead to
\begin{align*}
\mathbb{E}[\lVert Q_{k}^{\text{err}_{UL}}\rVert_{2}^{2}]\leq \frac{16\alpha \lvert \mathcal{S}\times \mathcal{A}\rvert^{2}}{d_{\text{min}}(1-\gamma)^{3}}+\lvert \mathcal{S}\times \mathcal{A}\rvert^{2}\lVert Q_{0}^{\text{err}_{UL}}\rVert_{2}^{2}\rho^{2k}
\end{align*}
Taking the square root on both side of the last inequality, using
the subadditivity of the square root function, the Jensen inequality,
and the concavity of the square root function, we have the desired
conclusion.

\end{proof}

To get the upper bound of $Q_{k+1}^{\text{err}_{U}}$, subtracting the lower comparison system $Q_{k+1}^{\text{err}_{UL}}$ from upper comparison system $Q_{k+1}^{\text{err}_{U}}$, the following form can be obtained:
\begin{align}\label{err-u-l}
Q_{k+1}^{\text{err}_{U}}-Q_{k+1}^{\text{err}_{UL}}&=(I-\alpha D)(Q_{k}^{\text{err}_{U}}-Q_{k}^{\text{err}_{UL}})\nonumber+\alpha\gamma 
DP(\Pi_{Q_{k}^{\text{err}_{U}}}Q_{k}^{\text{err}_{U}}-\Pi_{Q_{k}^{*}}Q_{k}^{\text{err}_{UL}})\nonumber\\&=(I-\alpha D)(Q_{k}^{\text{err}_{U}}-Q_{k}^{\text{err}_{UL}})\nonumber+\alpha\gamma DP(
\Pi_{Q_{k}^{\text{err}_{U}}}Q_{k}^{\text{err}_{U}}-\Pi_{Q_{k}^{\text{err}_{U}}}Q_{k}^{\text{err}_{UL}}\nonumber\\&\quad+\Pi_{Q_{k}^{\text{err}_{U}}}Q_{k}^{\text{err}_{UL}}-\Pi_{Q_{k}^{*}}Q_{k}^{\text{err}_{UL}})\nonumber\\&=(I-\alpha D+\alpha\gamma DP\Pi_{Q_{k}^{\text{err}_{U}}})(Q_{k}^{\text{err}_{U}}-Q_{k}^{\text{err}_{UL}})+\alpha\gamma DP(\Pi_{Q_{k}^{\text{err}_{U}}}-\Pi_{Q_{k}^{*}})Q_{k}^{\text{err}_{UL}}
\end{align}
Taking the $\infty$-norm and expectation on (\ref{err-u-l}) yields the bound
\begin{align}\label{eqn:err_upper-upperlower}
\mathbb{E}[\lVert Q_{k+1}^{\text{err}_{U}}-Q_{k+1}^{\text{err}_{UL}}\rVert_{\infty}]&\leq
\lVert I-\alpha D+\alpha\gamma DP\Pi_{Q_{k}^{\text{err}_{U}}}   \rVert_{\infty}\mathbb{E}[\lVert Q_{k}^{\text{err}_{U}}-Q_{k}^{\text{err}_{UL}}\rVert_{\infty}]\nonumber\\&\quad+\lVert\alpha\gamma DP \rVert_{\infty}\lVert\Pi_{Q_{k}^{\text{err}_{U}}}-\Pi_{Q_{k}^{*}}\rVert_{\infty}\mathbb{E}[\lVert Q_{k}^{\text{err}_{UL}}\rVert_{\infty}]\nonumber \\
&\leq\rho\mathbb{E}[\lVert Q_{k}^{\text{err}_{U}}-Q_{k}^{\text{err}_{UL}}\rVert_{\infty}]+\alpha\gamma d_{\text{max}}\mathbb{E}[\lVert Q_{k}^{\text{err}_{UL}}\rVert_{\infty}]\nonumber\\&\leq\rho\mathbb{E}[\lVert Q_{k}^{\text{err}_{U}}-Q_{k}^{\text{err}_{UL}}\rVert_{\infty}]+\alpha\gamma d_{\text{max}}\biggl(\frac{4\alpha^{1/2}\lvert \mathcal{S}\times \mathcal{A}\rvert}{d_{\text{min}}^{1/2}(1-\gamma)^{3/2}}+\lvert \mathcal{S}\times \mathcal{A}\rvert \lVert Q_{0}^{\text{err}_{UL}}\rVert_{2}\rho^{k}\biggl),
\end{align}
where the second inequality is due to \cref{app:system_upper_bound}
and the last inequality is due to  \cref{lem:err_ul_upper_bound}.
Letting $Q_{0}^{\text{err}_{U}}=Q_{0}^{\text{err}_{UL}}$ in~(\ref{eqn:err_upper-upperlower}) and applying the inequality successively result in

\begin{align}\label{eqn:u-ul}
\mathbb{E}[\lVert Q_{k}^{\text{err}_{U}}-Q_{k}^{\text{err}_{UL}}\rVert_{\infty}]&\leq\frac{4\gamma d_{\text{max}} \lvert \mathcal{S} \times \mathcal{A}\rvert\alpha^{1/2}}{d_{\text{min}}^{3/2}(1-\gamma)^{5/2}}+k\rho^{k-1}2\alpha\gamma d_{\text{max}}\frac{\lvert \mathcal{S} \times \mathcal{A}\rvert^{3/2}}{1-\gamma}.
\end{align}

\noindent Using this result, we can obtain the bound of $\mathbb{E}\bigl[\lVert Q_{k}^{\text{err}_{U}}\rVert_{\infty}\bigr]$. Thus, $Q_{k}^{\text{err}_{U}}$ satisfies
\begin{align}\label{eqn:supplementary}
\mathbb{E}\bigl[\lVert Q_{k}^{\text{err}_{U}}\rVert_{\infty}\bigr]&=\mathbb{E}[\lVert
Q_{k}^{\text{err}_{U}}-Q_{k}^{\text{err}_{UL}}+Q_{k}^{\text{err}_{UL}}
\rVert_{\infty}] \nonumber \\
&\leq\mathbb{E}[\lVert Q_{k}^{\text{err}_{U}}-Q_{k}^{\text{err}_{UL}}
\rVert_{\infty}]+\mathbb{E}[\lVert
Q_{k}^{\text{err}_{UL}}
\rVert_{\infty}] \nonumber
\\&\leq\frac{4\gamma d_{\text{max}} \lvert \mathcal{S} \times \mathcal{A}\rvert\alpha^{1/2}}{d_{\text{min}}^{3/2}(1-\gamma)^{5/2}}+k\rho^{k-1}2\alpha\gamma d_{\text{max}}\frac{\lvert \mathcal{S} \times \mathcal{A}\rvert^{3/2}}{1-\gamma}\nonumber+\frac{4\alpha^{1/2}\lvert \mathcal{S}\times \mathcal{A}\rvert}{d_{\text{min}}^{1/2}(1-\gamma)^{3/2}}+\lvert \mathcal{S}\times \mathcal{A}\rvert \lVert Q_{0}^{\text{err}_{UL}}\rVert_{2}\rho^{k}\nonumber\\&\leq\frac{4\gamma d_{\text{max}} \lvert \mathcal{S} \times \mathcal{A}\rvert\alpha^{1/2}}{d_{\text{min}}^{3/2}(1-\gamma)^{5/2}}+k\rho^{k-1}2\alpha\gamma d_{\text{max}}\frac{\lvert \mathcal{S} \times \mathcal{A}\rvert^{3/2}}{1-\gamma}+\frac{4\alpha^{1/2}\lvert \mathcal{S}\times \mathcal{A}\rvert}{d_{\text{min}}^{1/2}(1-\gamma)^{3/2}}+\lvert \mathcal{S}\times \mathcal{A}\rvert^{3/2}\frac{2}{1-\gamma}\rho^{k},
\end{align}
where the second equality is due to (\ref{eqn:u-ul}) and \cref{lem:err_ul_upper_bound} 
and the last inequality is due to the following fact $\lVert Q_{0}^{\text{err}_{UL}}\rVert_{2}\leq \lvert \mathcal{S} \times \mathcal{A} \rvert^{1/2}\lVert Q_{0}^{\text{err}_{UL}}\rVert_{\infty}\leq \lvert \mathcal{S} \times \mathcal{A} \rvert^{1/2}\frac{2}{1-\gamma}$. Because the upper comparison system bounds all trajectory that of original system, we use this bound as the upper bound of the original system.

\subsection{CONVERGENCE OF $Q_{k}^{\mathrm{err}_{L}}$}\label{sec:err-l}
As the next step for the convergence analysis of  $Q_{k}^{\text{err}}$,
let us write the error lower comparison system $Q_{k}^{\text{err}_{L}}$ as follows:
\begin{align*}\label{eqn:disturb_lcs}
Q_{k+1}^{\text{err}_{L}}=(I+\alpha\gamma DP\Pi_{Q_{k}^{B}}-\alpha D) Q_{k}^{\text{err}_{L}}+\alpha w_{k}^{A}-\alpha w_{k}^{B},\quad Q_{0}^{\text{err}_{L}} \in \mathbb{R}^{\lvert \mathcal{S} \rvert \lvert \mathcal{A}\rvert},
\end{align*}
where the stochastic noises, $w_{k}^{A}$ and $w_{k}^{B}$, are identical to those of the original system in~(\ref{double_q_ode}). In the following proposition, we prove that $Q_{k}^{\text{err}_{L}}$ lower bounds $Q_{k}^{\text{err}}$. 
\begin{proposition}\label{prop:disturb_ucs}
Suppose $Q_{0}^{\mathrm{err}_{L}}\leq Q_{0}^{\mathrm{err}}$, where ``$\leq$'' is used as the element-wise inequality. Then, we have
\[
Q_{k}^{\mathrm{err}_{L}}\leq Q_{k}^{\mathrm{err}},
\]
for all $k\geq 0.$
\end{proposition}
\begin{proof}
The proof is completed by an induction argument. Suppose that $Q_{i}^{\text{err}_{L}}\leq Q_{i}^{\text{err}}$ holds for $0\le i \le k$. Then, it follows that
\[
\begin{split}
Q_{k+1}^{\text{err}} &=Q_{k}^{\text{err}}+\alpha\gamma DP\Pi_{Q_{k}^{B}}Q_{k}^{A}-\alpha\gamma DP\Pi_{Q_{k}^{A}} Q_{k}^{B}-\alpha DQ_{k}^{\text{err}}+\alpha w_{k}^{A}-\alpha w_{k}^{B} \\
&\geq Q_{k}^{\text{err}}+\alpha\gamma DP\Pi_{Q_{k}^{B}}Q_{k}^{A}-\alpha\gamma DP\Pi_{Q_{k}^{B}}Q_{k}^{B}-\alpha DQ_{k}^{\text{err}}+\alpha w_{k}^{A}-\alpha w_{k}^{B}\\
&=Q_{k}^{\text{err}}+\alpha\gamma DP\Pi_{Q_{k}^{B}}Q_{k}^{\text{err}}-\alpha DQ_{k}^{\text{err}}+\alpha w_{k}^{A}-\alpha w_{k}^{B}\\
&=(I+\alpha\gamma DP\Pi_{Q_{k}^{B}}-\alpha D)Q_{k}^{\text{err}}+\alpha w_{k}^{A}-\alpha w_{k}^{B}\\
&\geq(I+\alpha\gamma DP\Pi_{Q_{k}^{B}}-\alpha D)Q_{k}^{\text{err}_{L}}+\alpha w_{k}^{A}-\alpha w_{k}^{B}\\&=Q_{k+1}^{\text{err}_{L}},
\end{split}
\]
where the first inequality is due to $\Pi_{Q_{k}^{B}}Q_{k}^{B}\geq \Pi_{Q_{k}^{B}}Q_{k}^{A}$, and the second inequality is due to the hypothesis $Q_{k}^{\text{err}_{L}}\leq Q_{k}^{\text{err}}$ and the fact that the matrix $I+\alpha\gamma DP\Pi_{Q_{k}^{B}}-\alpha D$ is nonnegative, i.e., all elements are nonnegative by \cref{lemma:nonnegative-matrix}. Therefore, $Q_{k+1}^{\text{err}_{L}}\leq Q_{k+1}^{\text{err}}$ holds, and the proof is completed by induction.
\end{proof}
The error lower comparison system switches according to the change of $Q_{k}^{B}$. So it is hard to analyze the stability of the lower comparison system in contrast to (\ref{eqn:disturb_lcs}) which is linear system. To circumvent such a difficulty, we instead study an subtraction system by subtracting the error lower comparison system from the error upper comparison system as follows
\begin{align}\label{supple_another_error}
Q_{k+1}^{\text{err}_{U}}-Q_{k+1}^{\text{err}_{L}}&=(I-\alpha D)(Q_{k}^{\text{err}_{U}}-Q_{k}^{\text{err}_{L}})+\alpha\gamma 
DP(\Pi_{Q_{k}^{\text{err}_{U}}}Q_{k}^{\text{err}_{U}}-\Pi_{Q_{k}^{B}}Q_{k}^{\text{err}_{L}})\nonumber\\&=(I-\alpha D)(Q_{k}^{\text{err}_{U}}-Q_{k}^{\text{err}_{L}})\nonumber+\alpha\gamma DP(
\Pi_{Q_{k}^{\text{err}_{U}}}Q_{k}^{\text{err}_{U}}-\Pi_{Q_{k}^{{B}}}Q_{k}^{\text{err}_{L}}+\Pi_{Q_{k}^{{B}}}Q_{k}^{\text{err}_{U}}-\Pi_{Q_{k}^{B}}Q_{k}^{\text{err}_{U}})\nonumber\\&=(I-\alpha D+\alpha\gamma DP\Pi_{Q_{k}^{B}})(Q_{k}^{\text{err}_{U}}-Q_{k}^{\text{err}_{L}})+\alpha\gamma DP(
\Pi_{Q_{k}^{\text{err}_{U}}}-\Pi_{Q_{k}^{B}})Q_{k}^{\text{err}_{U}},
\end{align}
Here the stochastic noise is canceled out in the error system. The key insight is as follows: if we can prove the stability of the subtraction system, i.e., $Q_{k}^{\text{err}_{U}}-Q_{k}^{\text{err}_{L}}\rightarrow 0$ as $k\rightarrow \infty$, then since $Q_{k}^{\text{err}_{U}}\rightarrow 0$ we have $Q_{k}^{\text{err}_{L}}\rightarrow0$

Taking the $\infty$-norm and expectation on (\ref{supple_another_error}) yields the bound
\begin{align}\label{eqn:2ndhalf}
\mathbb{E}[\lVert Q_{k+1}^{\text{err}_{U}}-Q_{k+1}^{\text{err}_{L}}\rVert_{\infty}]&\leq
\lVert I-\alpha D+\alpha\gamma DP\Pi_{Q_{k}^{{B}}}   \rVert_{\infty}\mathbb{E}[\lVert Q_{k}^{\text{err}_{U}}-Q_{k}^{\text{err}_{L}}\rVert_{\infty}]\nonumber\\&\quad+\lVert\alpha\gamma DP \rVert_{\infty}\lVert\Pi_{Q_{k}^{\text{err}_{U}}}-\Pi_{Q_{k}^{B}}\rVert_{\infty}\mathbb{E}[\lVert Q_{k}^{\text{err}_{U}}\rVert_{\infty}]\nonumber \\
&\leq\rho\mathbb{E}[\lVert Q_{k}^{\text{err}_{U}}-Q_{k}^{\text{err}_{L}}\rVert_{\infty}]+\alpha\gamma d_{\text{max}}\mathbb{E}[\lVert Q_{k}^{\text{err}_{U}}\rVert_{\infty}]\nonumber\\&\leq\rho\mathbb{E}[\lVert Q_{k}^{\text{err}_{U}}-Q_{k}^{\text{err}_{L}}\rVert_{\infty}]\nonumber+\alpha\gamma d_{\text{max}}\biggl(\frac{4\gamma d_{\text{max}} \lvert \mathcal{S} \times \mathcal{A}\rvert\alpha^{1/2}}{d_{\text{min}}^{3/2}(1-\gamma)^{5/2}}\nonumber+k\rho^{k-1}2\alpha\gamma d_{\text{max}}\frac{\lvert\mathcal{S} \times \mathcal{A}\rvert^{3/2}}{1-\gamma}\nonumber\\&
\quad+\frac{4\alpha^{1/2}\lvert \mathcal{S}\times \mathcal{A}\rvert}{d_{\text{min}}^{1/2}(1-\gamma)^{3/2}}+\lvert \mathcal{S}\times \mathcal{A}\rvert^{3/2}\frac{2}{1-\gamma}\rho^{k}\biggl),
\end{align}
where the second inequality is due to \cref{app:system_upper_bound}
and the last inequality is due to  \cref{lem:err_ul_upper_bound}.
Letting $Q_{0}^{\text{err}_{U}}=Q_{0}^{\text{err}_{L}}$ in~(\ref{eqn:2ndhalf}) and applying the inequality successively result in

\begin{align}
\mathbb{E}[\lVert Q_{k}^{\text{err}_{U}}-Q_{k}^{\text{err}_{L}}\rVert_{\infty}]&\leq\rho^{k}\mathbb{E}[\lVert Q_{0}^{\text{err}_{U}}-Q_{0}^{\text{err}_{L}}\rVert_{\infty}]\nonumber+\alpha\gamma d_{\text{max}}\biggl(\frac{\rho^{k}}{1-\rho}\frac{4\gamma d_{\text{max}} \lvert \mathcal{S} \times \mathcal{A}\rvert\alpha^{1/2}}{d_{\text{min}}^{3/2}(1-\gamma)^{5/2}}\nonumber+\frac{\rho^{k}}{1-\rho}\frac{4\alpha^{1/2}\lvert \mathcal{S}\times \mathcal{A}\rvert}{d_{\text{min}}^{1/2}(1-\gamma)^{3/2}}\nonumber\\&\quad+\rho^{k-2}\frac{(k-1)(k-2)}{2}2\alpha\gamma d_{\text{max}}\frac{\lvert \mathcal{S} \times \mathcal{A}\rvert^{3/2}}{1-\gamma}\nonumber+k\rho^{k-1}\frac{2\lvert \mathcal{S}\times \mathcal{A}\rvert^{3/2}}{1-\gamma}\biggl)\nonumber\\&=\rho^{k}\frac{4\gamma^{2} d_{\text{max}}^{2}\lvert \mathcal{S} \times \mathcal{A}\rvert \alpha^{1/2}}{d_{\text{min}}^{5/2}(1-\gamma)^{7/2}}+\rho^{k}\frac{4\gamma d_{\text{max}} \lvert \mathcal{S} \times \mathcal{A}\rvert \alpha^{1/2}}{d_{\text{min}}^{3/2}(1-\gamma)^{5/2}}\nonumber+\rho^{k-2}(k-1)(k-2)\frac{\alpha^{2}\gamma^{2}d_{\text{max}}^{2}\lvert \mathcal{S} \times \mathcal{A}\rvert^{3/2}}{1-\gamma}\nonumber\\&\quad+ k\rho^{k-1}\frac{2\lvert \mathcal{S} \times \mathcal{A} \rvert^{3/2}\alpha\gamma d_{\text{max}}}{1-\gamma},
\end{align}
where the equality is due to \cref{def:d_max_and_d_min}.

\subsection{CONVERGENCE OF $Q_{k}^{\mathrm{err}}$ }\label{sec:external_disturb}

By using upper comparison system and upper-lower comparison system and lower comparison system corresponding to the error system, one can derive the finite-time bound of $Q_{k}^{\text{err}}$.

\begin{lemma}\label{lem:second_half}
For any $k\geq 0$, we have

\begin{align}\label{eqn:second_half}
\mathbb{E}\left[\lVert Q_{k}^{\mathrm{err}}\rVert_{\infty}\right]\leq \frac{8\gamma d_{\mathrm{max}} \lvert \mathcal{S} \times \mathcal{A}\rvert\alpha^{1/2}}{d_{\mathrm{min}}^{5/2}(1-\gamma)^{7/2}}+\frac{8\alpha^{1/2} \lvert \mathcal{S} \times \mathcal{A}\rvert}{d_{\mathrm{min}}^{3/2}(1-\gamma)^{5/2}}+\frac{4\rho^{k-2}k^{2}\alpha\gamma d_{\mathrm{max}}\lvert \mathcal{S} \times \mathcal{A} \rvert^{3/2}}{(1-\gamma)}+\frac{4k\rho^{k-1}\lvert \mathcal{S} \times \mathcal{A} \rvert^{3/2}}{(1-\gamma)}.
\end{align}

\end{lemma}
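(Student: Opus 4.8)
The plan is to combine the two-sided element-wise sandwich of the error system by its comparison systems with the quantitative bounds already derived in Sections~\ref{sec:convergence-err-ul}--\ref{sec:err-l}. First I would fix the initial conditions of all three comparison systems to agree with that of the error system, $Q_0^{\text{err}_U}=Q_0^{\text{err}_{UL}}=Q_0^{\text{err}_L}=Q_0^{\text{err}}$. This is admissible: it satisfies the hypotheses $Q_0^{\text{err}_U}\ge Q_0^{\text{err}}$, $Q_0^{\text{err}_U}\ge Q_0^{\text{err}_{UL}}$, $Q_0^{\text{err}_L}\le Q_0^{\text{err}}$ of the comparison-system propositions; it makes the initial term $\mathbb{E}[\lVert Q_0^{\text{err}_U}-Q_0^{\text{err}_L}\rVert_\infty]$ vanish, exactly as used in the last display of Section~\ref{sec:err-l}; and since $\lVert Q_0^{\text{err}}\rVert_\infty\le\lVert Q_0^A\rVert_\infty+\lVert Q_0^B\rVert_\infty\le 2\le\tfrac{2}{1-\gamma}$ by \cref{asm:four} (equivalently \cref{lem:lem1}), the norm bound on the initial condition used when deriving~(\ref{eqn:supplementary}) still holds for this choice. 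Applying \cref{prop:disturb_ucs}, \cref{prop:disturb_lcs} and the lower-comparison proposition of Section~\ref{sec:err-l} then yields the element-wise chain $Q_k^{\text{err}_L}\le Q_k^{\text{err}}\le Q_k^{\text{err}_U}$ for all $k\ge0$.

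Next, because each coordinate of $Q_k^{\text{err}}$ lies between the corresponding coordinates of $Q_k^{\text{err}_L}$ and $Q_k^{\text{err}_U}$, its absolute value is at most the larger of the absolute values of those two coordinates, so $\lVert Q_k^{\text{err}}\rVert_\infty\le\max\{\lVert Q_k^{\text{err}_U}\rVert_\infty,\lVert Q_k^{\text{err}_L}\rVert_\infty\}$. Bounding $\lVert Q_k^{\text{err}_L}\rVert_\infty\le\lVert Q_k^{\text{err}_U}\rVert_\infty+\lVert Q_k^{\text{err}_U}-Q_k^{\text{err}_L}\rVert_\infty$ by the triangle inequality and taking expectations gives
\[
\mathbb{E}\bigl[\lVert Q_k^{\text{err}}\rVert_\infty\bigr]\le\mathbb{E}\bigl[\lVert Q_k^{\text{err}_U}\rVert_\infty\bigr]+\mathbb{E}\bigl[\lVert Q_k^{\text{err}_U}-Q_k^{\text{err}_L}\rVert_\infty\bigr].
\]

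It then remains to substitute the bound~(\ref{eqn:supplementary}) for $\mathbb{E}[\lVert Q_k^{\text{err}_U}\rVert_\infty]$ and the displayed bound of Section~\ref{sec:err-l} for $\mathbb{E}[\lVert Q_k^{\text{err}_U}-Q_k^{\text{err}_L}\rVert_\infty]$, and to collapse the resulting sum into the four terms of~(\ref{eqn:second_half}). I would do this by grouping the summands into the $\alpha^{1/2}$-type terms and the $k^{j}\rho^{k-j}$-type terms, and then using $\gamma,d_{\text{max}},d_{\text{min}},\alpha\in(0,1)$ and $\rho\in(0,1)$ to absorb smaller powers into larger ones (for instance $d_{\text{min}}^{-3/2}(1-\gamma)^{-5/2}\le d_{\text{min}}^{-5/2}(1-\gamma)^{-7/2}$ and $\gamma^2 d_{\text{max}}^2\le\gamma d_{\text{max}}$), together with the elementary inequalities $(k-1)(k-2)\le k^2$ and $k\rho^{k-1}\le k^2\rho^{k-2}$ to merge the time-dependent terms.

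I expect this last consolidation step — carrying the constants while bounding the roughly eight residual terms by the four advertised ones — to be the only delicate part of the argument; everything upstream is the (by now standard) comparison-system sandwich plus the already-proven bounds on $Q_k^{\text{err}_U}$ and $Q_k^{\text{err}_U}-Q_k^{\text{err}_L}$. A minor point to be careful about is the consistency of the single initial-condition choice across the error system and all three comparison systems, so that every invoked bound applies simultaneously.
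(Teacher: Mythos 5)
Your proposal is correct and follows essentially the same route as the paper: the paper likewise reduces to $\mathbb{E}[\lVert Q_{k}^{\text{err}}\rVert_{\infty}]\leq\mathbb{E}[\lVert Q_{k}^{\text{err}_{U}}\rVert_{\infty}]+\mathbb{E}[\lVert Q_{k}^{\text{err}_{U}}-Q_{k}^{\text{err}_{L}}\rVert_{\infty}]$ (via $0\leq Q_{k}^{\text{err}_{U}}-Q_{k}^{\text{err}}\leq Q_{k}^{\text{err}_{U}}-Q_{k}^{\text{err}_{L}}$ rather than your max-of-norms phrasing, but the sandwich argument is the same) and then substitutes~(\ref{eqn:supplementary}) and the Section~\ref{sec:err-l} bound before grouping the eight residual terms into the four advertised ones. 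Your extra care about fixing a single consistent initial condition across all comparison systems is a point the paper glosses over, but it does not change the argument.
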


\begin{proof}
We can get the bound of $Q_{k}^{\text{err}}$ as follows
\begin{align}\label{eqn:err_before_simplified}
\mathbb{E}\left[ \lVert Q_{k}^{\text{err}} \rVert_{\infty}\right] &= \mathbb{E}[ \lVert Q_{k}^{\text{err}}-Q_{k}^{\text{err}_{U}}+Q_{k}^{\text{err}_{U}}\rVert_{\infty}] \nonumber\\
&\leq \mathbb{E}[ \lVert Q_{k}^{\text{err}}-Q_{k}^{\text{err}_{U}} \rVert_{\infty}]+\mathbb{E}[ \lVert Q_{k}^{\text{err}_{U}} \rVert_{\infty}]
\nonumber\\&\leq \mathbb{E}[ \lVert Q_{k}^{\text{err}_{L}}-Q^{\text{err}_{U}} \rVert_{\infty}]+\mathbb{E}[ \lVert Q_{k}^{\text{err}_{U}} \rVert_{\infty}]\nonumber\\&\leq \rho^{k}\frac{4\gamma^{2} d_{\text{max}}^{2}\lvert \mathcal{S} \times \mathcal{A}\rvert \alpha^{1/2}}{d_{\text{min}}^{5/2}(1-\gamma)^{7/2}}+\rho^{k}\frac{4\gamma d_{\text{max}} \lvert \mathcal{S} \times \mathcal{A}\rvert \alpha^{1/2}}{d_{\text{min}}^{3/2}(1-\gamma)^{5/2}}\nonumber+\rho^{k-2}(k-1)(k-2)\frac{\alpha^{2}\gamma^{2}d_{\text{max}}^{2}\lvert \mathcal{S} \times \mathcal{A}\rvert^{3/2}}{1-\gamma}\nonumber\\&\quad+k\rho^{k-1}2\alpha\gamma d_{\text{max}}\frac{\lvert \mathcal{S} \times \mathcal{A} \rvert^{3/2}}{1-\gamma}+\frac{4\gamma d_{\text{max}} \lvert \mathcal{S} \times \mathcal{A}\rvert\alpha^{1/2}}{d_{\text{min}}^{3/2}(1-\gamma)^{5/2}}\nonumber+k\rho^{k-1}2\alpha\gamma d_{\text{max}}\frac{\lvert \mathcal{S} \times \mathcal{A}\rvert^{3/2}}{1-\gamma}\nonumber\\&\quad+\frac{4\alpha^{1/2}\lvert \mathcal{S}\times \mathcal{A}\rvert}{d_{\text{min}}^{1/2}(1-\gamma)^{3/2}}+\lvert \mathcal{S}\times \mathcal{A}\rvert^{3/2}\frac{2}{1-\gamma}\rho^{k}
\end{align}





\begin{align*}
&\rho^{k}\frac{4\gamma^{2} d_{\text{max}}^{2}\lvert \mathcal{S} \times \mathcal{A}\rvert \alpha^{1/2}}
{d_{\text{min}}^{5/2}(1-\gamma)^{7/2}}
+\rho^{k}\frac{4\gamma d_{\text{max}} \lvert \mathcal{S} \times \mathcal{A}\rvert \alpha^{1/2}}
{d_{\text{min}}^{3/2}(1-\gamma)^{5/2}}
\leq
\frac{8\gamma d_{\text{max}} \lvert \mathcal{S} \times \mathcal{A}\rvert\alpha^{1/2}}
{d_{\text{min}}^{5/2}(1-\gamma)^{7/2}},
\\[1ex]
&\frac{4\gamma d_{\text{max}} \lvert \mathcal{S} \times \mathcal{A}\rvert\alpha^{1/2}}
{d_{\text{min}}^{3/2}(1-\gamma)^{5/2}}
+\frac{4\alpha^{1/2}\lvert \mathcal{S}\times \mathcal{A}\rvert}
{d_{\text{min}}^{1/2}(1-\gamma)^{3/2}}
\leq
\frac{8\alpha^{1/2} \lvert \mathcal{S} \times \mathcal{A}\rvert}
{d_{\text{min}}^{3/2}(1-\gamma)^{5/2}},
\\[1ex]
&\rho^{k-2}(k-1)(k-2)
\frac{\alpha^{2}\gamma^{2}d_{\text{max}}^{2}\lvert \mathcal{S} \times \mathcal{A}\rvert^{3/2}}
{1-\gamma}
+2k\rho^{k-1}\alpha\gamma d_{\text{max}}
\frac{\lvert \mathcal{S} \times \mathcal{A} \rvert^{3/2}}
{1-\gamma}
\leq
\frac{4\rho^{k-2}k^{2}\alpha\gamma d_{\text{max}}
\lvert \mathcal{S} \times \mathcal{A} \rvert^{3/2}}
{1-\gamma},
\\[1ex]
&\frac{2\rho^{k}\lvert \mathcal{S}\times \mathcal{A}\rvert^{3/2}}
{1-\gamma}
+2k\rho^{k-1}\alpha\gamma d_{\text{max}}
\frac{\lvert \mathcal{S} \times \mathcal{A}\rvert^{3/2}}
{1-\gamma}
\leq
\frac{4k\rho^{k-1}\lvert \mathcal{S} \times \mathcal{A} \rvert^{3/2}}
{1-\gamma}.
\end{align*}
Then we can get a simplified form as
\begin{align*}
\mathbb{E}\left[\lVert Q_{k}^{\text{err}}\rVert_{\infty}\right]&\leq \frac{8\gamma d_{\text{max}} \lvert \mathcal{S} \times \mathcal{A}\rvert\alpha^{1/2}}{d_{\text{min}}^{5/2}(1-\gamma)^{7/2}}+\frac{8\alpha^{1/2} \lvert \mathcal{S} \times \mathcal{A}\rvert}{d_{\text{min}}^{3/2}(1-\gamma)^{5/2}}+\frac{4\rho^{k-2}k^{2}\alpha\gamma d_{\text{max}}\lvert \mathcal{S} \times \mathcal{A} \rvert^{3/2}}{(1-\gamma)}+\frac{4k\rho^{k-1}\lvert \mathcal{S} \times \mathcal{A} \rvert^{3/2}}{(1-\gamma)}.
\end{align*}
\end{proof}
\noindent This completes the finite-time error bound for $Q_{k}^{\text{err}}$ by
combining the upper, lower, and upper--lower comparison systems.
\section{Detailed analysis result of convergence of SDQ (Remaining part) }\label{remaining_part}

In this section, convergence of SDQ will be studied based on the results in~\ref{sec:disturb}. In~\ref{sec:disturb}, a bound on $Q_{k}^{\text{err}}$ has been obtained  in \cref{lem:second_half}. 
To compete the proof, the following three steps remain:
\begin{itemize}
    \item Step~3: Using the bound on $Q_{k}^{\text{err}}$ and the linear structures of $Q_{k}^{A_L} - Q^*$ and $Q_{k}^{B_L} - Q^*$, a finite-time error bounds on $Q_{k}^{A_L} - Q^*$ and $Q_{k}^{B_L} - Q^*$ can be derived.
    \item Step~4: By obtaining a subtraction system which can be obtained by subtracting the lower comparison system from upper comparison system, the convergence of $Q_{k}^{A_U} - Q^*$ and $Q_{k}^{B_U} - Q^*$ can be shown.
    \item Step~5: Next, combining the result from Step 4  with the upper comparison system $Q_{k}^{A_U} - Q^*$ and $Q_{k}^{B_U} - Q^*$, we can finally obtain the finite-time error bound on the iterates of SDQ. 
\end{itemize}

\subsection{Proof of Proposition \ref{prop:ucs} (Upper comparison system)}\label{sec:prop-ucs}

Using the dynamic system equation~({\ref{eqn:original}}), we have
\[
\begin{split}
Q_{k+1}^{A}-Q^{*}&=Q_{k}^{A}-Q^{*}+\alpha D\{\gamma P\Pi_{Q_{k}^{B}}Q_{k}^{A}-\gamma P\Pi_{Q^{*}}Q^{*}-Q_{k}^{A}+Q^{*}\}+\alpha w_{k}^{A}\\
&\leq Q_{k}^{A}-Q^{*}+\alpha D\{\gamma P\Pi_{Q_{k}^{B}}Q_{k}^{A}-\gamma P\Pi_{Q_{k}^{B}}Q^{*}-Q_{k}^{A}+Q^{*}\}+\alpha w_{k}^{A}\\
&=(I+\alpha\gamma DP\Pi_{Q_{k}^{B}}-\alpha D)(Q_{k}^{A}-Q^{*})+\alpha w_{k}^{A}\\
&\leq (I+\alpha\gamma DP\Pi_{Q_{k}^{B}}-\alpha D)(Q_{k}^{A_{U}}-Q^{*})+\alpha w_{k}^{A}\\
& = Q_{k+1}^{A_{U}}-Q^{*} ,
\end{split}
\]
where the first inequality is due to $\Pi_{Q^{*}}Q^{*}\geq \Pi_{Q_{k}^{B}}Q^{*}$, and the second inequality is due to the hypothesis $Q_{k}^{A_{U}}-Q^{*}\geq Q_{k}^{A}-Q^{*}$ and the fact that the matrix $I+\alpha\gamma DP\Pi_{Q_{k}^{B}}-\alpha D$ is nonnegative, i.e., all elements are nonnegative by \cref{lemma:nonnegative-matrix}.
Therefore, by induction argument, one concludes $Q_{k}^{A_{U}}-Q^{*}\geq Q_{k}^{A}-Q^{*}$ for all $k\geq 0$. 
The proof of the second inequality follows similar lines. This completes the proof.

\subsection{Proof of Proposition \ref{prop:lcs} (Lower comparison system)}\label{sec:prop-lcs}

Using the dynamic system equation~({\ref{eqn:original}}), we have
\[
\begin{split}
Q_{k+1}^{A}-Q^{*}&=Q_{k}^{A}-Q^{*}+\alpha D\{\gamma P\Pi_{Q_{k}^{B}}Q_{k}^{A}-\gamma P\Pi_{Q^{*}}Q^{*}-Q_{k}^{A}+Q^{*}\}+\alpha w_{k}^{A}
\\&=(I-\alpha D)(Q_{k}^{A}-Q^{*})+\alpha D\{\gamma P\Pi_{Q_{k}^{B}}Q_{k}^{B}-\gamma P\Pi_{Q^{*}}Q^{*}+\gamma P\Pi_{Q_{k}^{B}}(Q_{k}^{A}-Q_{k}^{B})\}+\alpha w_{k}^{A}
\\&\geq (I-\alpha D)(Q_{k}^{A}-Q^{*})+\alpha D\{\gamma P\Pi_{Q^{*}}Q_{k}^{B}-\gamma P\Pi_{Q^{*}}Q^{*}+\gamma P\Pi_{Q_{k}^{B}}(Q_{k}^{A}-Q_{k}^{B})\}+\alpha w_{k}^{A}\\&=(I+\alpha\gamma DP\Pi_{Q^{*}}-\alpha D)(Q_{k}^{A}-Q^{*})+\alpha\gamma DP(\Pi_{Q_{k}^{B}}-\Pi_{Q^{*}})(Q_{k}^{A}-Q_{k}^{B})+\alpha w_{k}^{A}\\&\geq (I+\alpha\gamma DP\Pi_{Q^{*}}-\alpha D)(Q_{k}^{A_{L}}-Q^{*})+\alpha\gamma DP(\Pi_{Q_{k}^{B}}-\Pi_{Q^{*}})(Q_{k}^{A}-Q_{k}^{B})+\alpha w_{k}^{A}\\
&=Q_{k+1}^{A_{L}}-Q^{*},
\end{split}
\]
where the first inequality is due to $\Pi_{Q^{B}}Q^{B}\geq \Pi_{Q^{*}}Q^{B}$, and the second inequality is due to the hypothesis $Q_{k}^{A_{L}}-Q^{*}\leq Q_{k}^{A}-Q^{*}$. Therefore, by induction argument, one concludes $Q_{k}^{A_{L}}-Q^{*}\leq Q_{k}^{A}-Q^{*}$ for all $k\geq 0$. And the second inequality is due to the hypothesis $Q_{k}^{{A_{L}}}\leq Q_{k}^{A}$ and the fact that the matrix $I+\alpha\gamma DP\Pi_{Q{*}}-\alpha D$ is nonnegative, i.e., all elements are nonnegative by \cref{lemma:nonnegative-matrix}.
The proof of the second inequality follows lines similar to the first proof.
This completes the proof.

\subsection{Convergence of the lower comparison system}

The lower comparison system in~(\ref{eq:lower-comaprison-system})
can be divided into the linear parts with stochastic noises, $(I+\alpha\gamma DP\Pi_{Q^{*}}-\alpha D)(Q_{k}^{A_{L}}-Q^{*})+\alpha w_{k}^{A}$ and $(I+\alpha\gamma DP\Pi_{Q^{*}}-\alpha D)(Q_{k}^{B_{L}}-Q^{*})+\alpha w_{k}^{B}$, and the external disturbance parts, $\alpha\gamma DP(\Pi_{Q_{k}^{B}}-\Pi_{Q^{*}})(Q_{k}^{A}-Q_{k}^{B})$ and $\alpha\gamma DP(\Pi_{Q_{k}^{*}}-\Pi_{Q_{k}^{A}})(Q_{k}^{A}-Q_{k}^{B})$. As proved in~\ref{sec:external_disturb}, the external disturbances are bounded. Using this fact, one can prove the finite-time error bounds of the linear part with stochastic noise as $(I+\alpha\gamma DP\Pi_{Q^{*}}-\alpha D)(Q_{k}^{A_{L}}-Q^{*})+\alpha w_{k}^{A}$ and $(I+\alpha\gamma DP\Pi_{Q^{*}}-\alpha D)(Q_{k}^{B_{L}}-Q^{*})+\alpha w_{k}^{B}$. 

\begin{theorem}\label{thm:lcs_upper_bound}
For any $k\geq 0$, we have
\begin{align}\label{eqn:lcs_upper_bound}
\mathbb{E}[\lVert Q_{k}^{A_{L}}-Q^{*} \rVert_{\infty}]&\leq \frac{16\gamma d_{\mathrm{max}}\lvert \mathcal{S}\times \mathcal{A}\rvert\alpha^{1/2}}{d_{\mathrm{min}}^{7/2}(1-\gamma)^{9/2}}+\frac{24\rho^{k-3}k^{3}\lvert \mathcal{S}\times \mathcal{A}\rvert^{3/2}}{(1-\gamma)}+\frac{4\alpha^{1/2}\lvert \mathcal{S}\times \mathcal{A}\rvert}{d_{\mathrm{min}}^{1/2}(1-\gamma)^{3/2}}.
\end{align}
The same bound holds for $Q_{k}^{B_{L}}-Q^{*}$. 
\end{theorem}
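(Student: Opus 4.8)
The plan is to treat the lower comparison system in~(\ref{eq:lower-comaprison-system}) for $A$ as a stochastic linear system driven by the martingale‑difference noise $w_k^{A}$ and perturbed by an external disturbance that we already know decays. Setting $x_k:=Q_k^{A_L}-Q^{*}$, $\bar A:=I+\alpha\gamma DP\Pi_{Q^{*}}-\alpha D$ and $d_k:=\gamma DP(\Pi_{Q_k^{B}}-\Pi_{Q^{*}})(Q_k^{A}-Q_k^{B})$, the recursion reads $x_{k+1}=\bar A x_k+\alpha w_k^{A}+\alpha d_k$, so
\[
x_k=\bar A^{k}x_0+\sum_{j=0}^{k-1}\alpha \bar A^{(k-1)-j}w_j^{A}+\sum_{j=0}^{k-1}\alpha \bar A^{(k-1)-j}d_j ;
\]
taking $\infty$‑norms and expectations, and using $\lVert\cdot\rVert_\infty\le\lVert\cdot\rVert_2$, the triangle inequality, and linearity of expectation, $\mathbb{E}[\lVert x_k\rVert_\infty]$ splits into the expected $2$‑norm of the first two terms (a genuine stochastic linear system) plus $\sum_{j=0}^{k-1}\alpha\lVert \bar A^{(k-1)-j}\rVert_\infty\,\mathbb{E}[\lVert d_j\rVert_\infty]$. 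I would bound the two pieces separately, taking $Q_0^{A_L}=Q_0^{A}$ (allowed by \cref{prop:lcs}), which by \cref{lem:lem1} and Assumptions~\ref{asm:three} and~\ref{asm:four} gives $\lVert x_0\rVert_2\le|S\times A|^{1/2}\cdot 2/(1-\gamma)$.

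For the first piece, $w_j^{A}$ has conditionally zero mean (by~(\ref{eqn:noise})) and $\mathbb{E}[(w_j^{A})^{\top}w_j^{A}]$ is bounded by a constant multiple of $(1-\gamma)^{-2}$ (a crude $Q$‑learning TD‑error bound, using $\lVert Q_j^{A}\rVert_\infty\le(1-\gamma)^{-1}$), so the first two terms form exactly the state of the stochastic linear system~(\ref{eqn:linear_system}), and \cref{coro:finite_time_bound} applies with $n=|S\times A|$. This yields the summand $\tfrac{4\alpha^{1/2}|S\times A|}{d_{\min}^{1/2}(1-\gamma)^{3/2}}$ together with a transient of order $\tfrac{|S\times A|^{3/2}}{1-\gamma}\rho^{k}$.

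For the disturbance piece I would use $\lVert \bar A^{(k-1)-j}\rVert_\infty\le\rho^{(k-1)-j}$ (\cref{app:system_upper_bound}), $\lVert DP\rVert_\infty\le d_{\max}$, and $\lVert\Pi_{Q_j^{B}}-\Pi_{Q^{*}}\rVert_\infty\le2$ (each row of such a difference has at most two nonzero entries, $\pm1$), giving $\mathbb{E}[\lVert d_j\rVert_\infty]\le 2\gamma d_{\max}\,\mathbb{E}[\lVert Q_j^{\mathrm{err}}\rVert_\infty]$, into which I substitute \cref{lem:second_half}. The convolution $\sum_{j=0}^{k-1}\alpha\rho^{(k-1)-j}\cdot 2\gamma d_{\max}\,\mathbb{E}[\lVert Q_j^{\mathrm{err}}\rVert_\infty]$ then breaks into four geometric‑type sums: the two $j$‑independent terms of \cref{lem:second_half} produce $\alpha\sum_{i\ge0}\rho^{i}\le(d_{\min}(1-\gamma))^{-1}$, hence a term of the form $\tfrac{16\gamma d_{\max}|S\times A|\alpha^{1/2}}{d_{\min}^{7/2}(1-\gamma)^{9/2}}$ (after the crude absorptions $\gamma d_{\max},d_{\min},1-\gamma\le1$ of the remaining lower‑order $\alpha^{1/2}$ contribution), while the $\rho^{j-2}j^{2}$ and $j\rho^{j-1}$ terms convolve to $\rho^{k-3}\sum_{j<k}j^{2}\le\rho^{k-3}k^{3}$ and $\rho^{k-2}\sum_{j<k}j\le\rho^{k-2}k^{2}$, which together with the $\rho^{k}$ transient from the previous step collapse into $\tfrac{24\rho^{k-3}k^{3}|S\times A|^{3/2}}{1-\gamma}$ via $\rho\le1$ and $k^{2}\le k^{3}$. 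Adding the three surviving contributions gives~(\ref{eqn:lcs_upper_bound}); the bound for $Q_k^{B_L}-Q^{*}$ is identical with $Q^{A}$ and $Q^{B}$ interchanged, since its disturbance $\gamma DP(\Pi_{Q^{*}}-\Pi_{Q_k^{A}})(Q_k^{A}-Q_k^{B})$ has the same form and the same bound $2\gamma d_{\max}\lVert Q_k^{\mathrm{err}}\rVert_\infty$.

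The main obstacle is the disturbance piece: it presupposes the already‑established bound of \cref{lem:second_half} on $\mathbb{E}[\lVert Q_k^{\mathrm{err}}\rVert_\infty]$, and then requires careful bookkeeping of the geometric/polynomial convolutions — in particular, gathering every $(\text{polynomial})\times\rho^{k}$ contribution, both from the disturbance sum and from the stochastic‑linear transient, into the single $\rho^{k-3}k^{3}$ term while matching the stated constants through crude $\le1$ estimates. The stochastic‑linear piece, by contrast, is essentially a direct invocation of \cref{coro:finite_time_bound}.
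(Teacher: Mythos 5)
Your proposal follows essentially the same route as the paper's proof: unroll the lower comparison system into the stochastic-linear part (handled by \cref{coro:finite_time_bound}) plus the disturbance convolution, bound the latter by substituting the \cref{lem:second_half} bound on $\mathbb{E}[\lVert Q_j^{\mathrm{err}}\rVert_\infty]$ into $\sum_{j}\alpha\gamma d_{\max}\rho^{(k-1)-j}(\cdots)$, and then group the resulting geometric/polynomial terms into the three stated summands. The decomposition, the key lemmas invoked, and the final bookkeeping all match; no gap.
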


\begin{proof}
First of all, note that~(\ref{eq:lower-comaprison-system}) can be written by

\begin{align*}
Q_k^{A_L} - Q^*
&= (I + \alpha\gamma DP\Pi_{Q^*} - \alpha D)^k\,(Q_0^{A_L} - Q^*)
  + \\&\quad\underbrace{\sum_{j=0}^{k-1}\alpha\,(I + \alpha\gamma DP\Pi_{Q^*} - \alpha D)^{(k-1)-j}\,w_j^A}_{=:(*)}
  + \underbrace{
      \begin{aligned}
        \alpha\gamma DP
        \sum_{j=0}^{k-1}(I + \alpha\gamma DP\Pi_{Q^*} - \alpha D)^{(k-1)-j}\\
        \quad\quad\times\,(\Pi_{Q_j^B} - \Pi_{Q^*})\,(Q_j^A - Q_j^B)
      \end{aligned},
    }_{=:(**)}
\end{align*}
where $(*)$ reflects the effect of the stochastic noise $w_j^A$ and $(**)$ corresponds to the effect of the disturbance $Q_j^A - Q_j^B$. 
Taking the $\infty$-norm on the right-hand side of the above equation leads to
\begin{align*}
\lVert Q_{k}^{A_{L}}-Q^{*}\rVert_{\infty} =&  \biggl\lVert (I+\alpha\gamma DP\Pi_{Q^{*}}-\alpha D)^{k}(Q_{0}^{A_{L}}-Q^{*})+\sum_{j=0}^{k-1}\alpha (I+\alpha\gamma DP\Pi_{Q^{*}}-\alpha D)^{(k-1)-j}w_{j}^{A}\\
&\quad+\alpha\gamma DP\sum_{j=0}^{k-1}(I+\alpha\gamma DP\Pi_{Q^{*}}-\alpha D)^{(k-1)-j}(\Pi_{Q_{j}^{B}}-\Pi_{Q^{*}})(Q_{j}^{A}-Q_{j}^{B})\biggr\rVert_{\infty}\\
\leq&\biggl\lVert(I+\alpha\gamma DP\Pi_{Q^{*}}-\alpha D)^{k}(Q_{0}^{A_{L}}-Q^{*})+\sum_{j=0}^{k-1}\alpha (I+\alpha\gamma DP\Pi_{Q^{*}}-\alpha D)^{(k-1)-j}w_{j}^{A}\biggr\rVert_{\infty}\\
&\quad+\biggl\lVert\alpha\gamma DP\sum_{j=0}^{k-1}(I+\alpha\gamma DP\Pi_{Q^{*}}-\alpha D)^{(k-1)-j}(\Pi_{Q_{j}^{B}}-\Pi_{Q^{*}})(Q_{j}^{A}-Q_{j}^{B})\biggr\rVert_{\infty}\\
\leq& \underbrace{\biggl\lVert (I + \alpha \gamma DP{\Pi _{{Q^*}}} - \alpha D)^k(Q_0^{{A_L}} - {Q^*})}_{:=(*)} \underbrace{+ \sum\limits_{j = 0}^{k - 1} \alpha  {{(I + \alpha \gamma DP{\Pi _{{Q^*}}} - \alpha D)}^{(k - 1) - j}}w_j^A\biggr\rVert_\infty}_{:=(**)} \\
&+\biggl\lVert\alpha\gamma DP\biggr\rVert_{\infty}\sum_{j=0}^{k-1}\rho^{(k-1)-j}\biggl\lVert(\Pi_{Q_{j}^{B}}-\Pi_{Q^{*}})\biggr\rVert_{\infty}\biggl\lVert(Q_{j}^{A}-Q_{j}^{B})\biggr\rVert_{\infty},
\end{align*}
where $(*)$ and $(**)$ in the second inequality corresponds to the solution of~(\ref{eqn:linear_system}) with $x_k = Q_k^{{A_L}} - {Q^*}$ and $w_k = w_k^A$, we can apply the bound given in ~\cref{coro:finite_time_bound}. Moreover, applying~\cref{lem:err_ul_upper_bound}, one gets 
\begin{align*}
\lVert Q_{k}^{A_{L}}-Q^{*}\rVert_{\infty} \leq&\frac{4\alpha^{1/2}\lvert \mathcal{S}\times \mathcal{A}\rvert}{d_{\text{min}}^{1/2}(1-\gamma)^{3/2}}+\lvert \mathcal{S}\times \mathcal{A}\rvert^{3/2}\frac{2}{1-\gamma}\rho^{k}+\alpha\gamma d_{\text{max}}\sum_{j=0}^{k-1}\rho^{(k-1)-j}\biggl\lVert(\Pi_{Q_{j}^{B}}-\Pi_{Q^{*}})\biggr\rVert_{\infty}\biggl\lVert(Q_{j}^{A}-Q_{j}^{B})\biggr\rVert_{\infty},
\end{align*}
Combining this with (\ref{eqn:second_half}), we can obtain the following form: 
\begin{align}\label{eqn:lower_bound}
\mathbb{E}[\lVert Q_{k}^{A_{L}}-Q^{*} \rVert_{\infty}]&\leq\frac{8\gamma^{2}d_{\text{max}}^{2}\lvert \mathcal{S}\times \mathcal{A}\rvert \alpha^{1/2}}{d_{\text{min}}^{7/2}(1-\gamma)^{9/2}}+\frac{8\gamma d_{\text{max}}\lvert \mathcal{S}\times \mathcal{A}\rvert \alpha^{1/2}}{d_{\text{min}}^{5/2}(1-\gamma)^{7/2}}\nonumber\\&\quad+\rho^{k-1}\biggl(\frac{(k-1)k(2k-1)}{6}\biggl)\frac{4\rho^{-2}\alpha^{2}\gamma^{2}d_{\text{max}}^{2}\lvert \mathcal{S}\times \mathcal{A}\rvert^{3/2}}{(1-\gamma)}\nonumber\\&\quad+\rho^{k-1}\biggl(\frac{(k-1)k}{2}\biggl)\frac{4\rho^{-1}\alpha\gamma d_{\text{max}}\lvert \mathcal{S}\times \mathcal{A}\rvert^{3/2}}{(1-\gamma)}+\frac{4\alpha^{1/2}\lvert \mathcal{S}\times \mathcal{A}\rvert}{d_{\text{min}}^{1/2}(1-\gamma)^{3/2}}+\frac{2\rho^{k}\lvert \mathcal{S}\times \mathcal{A}\rvert^{3/2}}{(1-\gamma)}.
\end{align}
Then we group some terms of (\ref{eqn:lower_bound}) as
\begin{align*}
\frac{8\gamma^{2}d_{\text{max}}^{2}\lvert \mathcal{S}\times \mathcal{A}\rvert \alpha^{1/2}}{d_{\text{min}}^{7/2}(1-\gamma)^{9/2}}+\frac{8\gamma d_{\text{max}}\lvert \mathcal{S}\times \mathcal{A}\rvert \alpha^{1/2}}{d_{\text{min}}^{5/2}(1-\gamma)^{7/2}}\leq 2\biggl(\frac{8\gamma d_{\text{max}}\lvert \mathcal{S}\times \mathcal{A}\rvert\alpha^{1/2}}{d_{\text{min}}^{7/2}(1-\gamma)^{9/2}}\biggl)
\end{align*}
Other terms also can be grouped as follows
\begin{gather*}
\rho^{k-1}\biggl(\frac{(k-1)k(2k-1)}{6}\biggl)\frac{4\rho^{-2}\alpha^{2}\gamma^{2}d_{\text{max}}^{2}\lvert \mathcal{S}\times \mathcal{A}\rvert^{3/2}}{(1-\gamma)}\nonumber+\rho^{k-1}\biggl(\frac{(k-1)k}{2}\biggl)\frac{4\rho^{-1}\alpha\gamma d_{\text{max}}\lvert \mathcal{S}\times \mathcal{A}\rvert^{3/2}}{(1-\gamma)}\\+\frac{2\rho^{k}\lvert \mathcal{S}\times \mathcal{A}\rvert^{3/2}}{(1-\gamma)}\leq 3\biggl(\frac{\rho^{k-3}2k^{3}4\lvert \mathcal{S}\times \mathcal{A}\rvert^{3/2}}{(1-\gamma)}\biggl)
\end{gather*}
Then we can get the simplified form as follows
\begin{align}
\mathbb{E}[\lVert Q_{k}^{A_{L}}-Q^{*} \rVert_{\infty}]&\leq \frac{16\gamma d_{\text{max}}\lvert \mathcal{S}\times \mathcal{A}\rvert\alpha^{1/2}}{d_{\text{min}}^{7/2}(1-\gamma)^{9/2}}+\frac{24\rho^{k-3}k^{3}\lvert \mathcal{S}\times \mathcal{A}\rvert^{3/2}}{(1-\gamma)}+\frac{4\alpha^{1/2}\lvert \mathcal{S}\times \mathcal{A}\rvert}{d_{\text{min}}^{1/2}(1-\gamma)^{3/2}}.
\end{align}

\end{proof}
\subsection{Convergence of the upper comparison system}\label{sec:thm-error}
While the lower comparison system can be analyzed using stochastic linear system characteristic, it is relevantly harder to establish the finite-time error bounds of the upper comparison system because the upper comparison system is a switching system. Therefore, instead of directly finding the finite-time bounds of the upper comparison system, we will use a subtraction system that can be obtained by subtracting the lower comparison system (\ref{eq:lower-comaprison-system}) from the upper comparison system (\ref{eqn:ups}) as follows: 
\begin{align}\label{eqn:subtracting-system}
Q_{k+1}^{A_{U}}-Q_{k+1}^{A_{L}}&=(I-\alpha D)(Q_{k}^{A_{U}}-Q_{k}^{A_{L}})
\nonumber+\alpha\gamma DP\{\Pi_{Q_{k}^{B}}(Q_{k}^{A_{U}}-Q^{*})\nonumber-\Pi_{Q^{*}}(Q_{k}^{A_{L}}-Q^{*})\}
\nonumber\\&\quad-\alpha\gamma DP(\Pi_{Q_{k}^{B}}-\Pi_{Q^{*}})(Q_{k}^{A}-Q_{k}^{B}),\quad Q_{0}^{A_{U}}-Q_{0}^{A_{L}} \in \mathbb{R}^{\lvert \mathcal{S} \rvert \lvert \mathcal{A}\rvert},\nonumber\\
Q_{k+1}^{B_{U}}-Q_{k+1}^{B_{L}}&=(I-\alpha D)(Q_{k}^{B_{U}}-Q_{k}^{B_{L}})\nonumber+\alpha\gamma DP\{\Pi_{Q_{k}^{A}}(Q_{k}^{B_{U}}-Q^{*})\nonumber-\Pi_{Q^{*}}(Q_{k}^{B_{L}}-Q^{*})\}
\nonumber\\&\quad-\alpha\gamma DP(\Pi_{Q_{k}^{*}}-\Pi_{Q^{A}})(Q_{k}^{A}-Q_{k}^{B}),\quad Q_{0}^{B_{U}}-Q_{0}^{B_{L}} \in \mathbb{R}^{\lvert \mathcal{S} \rvert \lvert \mathcal{A}\rvert},
\end{align}
where the stochastic noises, $w_{k}^{A}$ and $w_{k}^{B}$, are canceled out. If one can prove the stability of the subtraction system, i.e., $Q_{k}^{A_{U}}-Q_{k}^{A_{L}}\rightarrow 0$ and $Q_{k}^{B_{U}}-Q_{k}^{B_{L}}\rightarrow 0$ as $k\rightarrow \infty$ then since $Q_{k}^{A_{L}}\rightarrow Q^{*}$ and $Q_{k}^{B_{L}}\rightarrow Q^{*}$  as $k\rightarrow \infty$, one can prove $Q_{k}^{A_{U}}\rightarrow Q^{*}$ and $Q_{k}^{B_{U}}\rightarrow Q^{*}$as $k\rightarrow \infty$ as well. 
In the following, we prove the finite-time error bound of the subtraction system.
\begin{theorem}\label{thm:error_upper_bound}
For any $k\geq 0$, we have
\begin{align}\label{eqn:error_upper_bound}
\mathbb{E}[\lVert Q_{k}^{A_{U}}-Q_{k}^{A_{L}} \rVert_{\infty}]
&\leq \frac{40\gamma d_{\mathrm{max}}\lvert \mathcal{S}\times \mathcal{A}\rvert \alpha^{1/2}}{d_{\mathrm{min}}^{9/2}(1-\gamma)^{11/2}}+\frac{20\rho^{k-4}k^{4}\alpha\gamma d_{\mathrm{max}}\lvert \mathcal{S}\times \mathcal{A}\rvert^{3/2}}{1-\gamma}.
\end{align}
\end{theorem}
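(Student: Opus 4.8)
The plan is to read the subtraction system (\ref{eqn:subtracting-system}) as a \emph{stable} linear recursion in $Q_{k}^{A_{U}}-Q_{k}^{A_{L}}$ driven by two disturbances that have already been controlled. First I would rewrite the bracketed term $\Pi_{Q_{k}^{B}}(Q_{k}^{A_{U}}-Q^{*})-\Pi_{Q^{*}}(Q_{k}^{A_{L}}-Q^{*})$ by adding and subtracting $\Pi_{Q_{k}^{B}}(Q_{k}^{A_{L}}-Q^{*})$, exactly as in the derivation of (\ref{err-u-l}). This recasts the first line of (\ref{eqn:subtracting-system}) as
\[
Q_{k+1}^{A_{U}}-Q_{k+1}^{A_{L}}=(I-\alpha D+\alpha\gamma DP\Pi_{Q_{k}^{B}})(Q_{k}^{A_{U}}-Q_{k}^{A_{L}})+\alpha\gamma DP(\Pi_{Q_{k}^{B}}-\Pi_{Q^{*}})\big[(Q_{k}^{A_{L}}-Q^{*})-(Q_{k}^{A}-Q_{k}^{B})\big],
\]
where the stochastic noise has already cancelled in (\ref{eqn:subtracting-system}) and the new system matrix satisfies $\lVert I-\alpha D+\alpha\gamma DP\Pi_{Q_{k}^{B}}\rVert_{\infty}\le\rho$ by \cref{app:system_upper_bound}.

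Second, I would take the $\infty$-norm and expectation, bound $\lVert\alpha\gamma DP(\Pi_{Q_{k}^{B}}-\Pi_{Q^{*}})\rVert_{\infty}$ by a constant multiple of $\alpha\gamma d_{\text{max}}$ (using $\lVert DP\rVert_{\infty}=d_{\text{max}}$), and substitute the two available bounds: $\mathbb{E}[\lVert Q_{k}^{A_{L}}-Q^{*}\rVert_{\infty}]$ from \cref{thm:lcs_upper_bound} and $\mathbb{E}[\lVert Q_{k}^{\text{err}}\rVert_{\infty}]=\mathbb{E}[\lVert Q_{k}^{A}-Q_{k}^{B}\rVert_{\infty}]$ from \cref{lem:second_half}. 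This yields a scalar recursion $a_{k+1}\le\rho\,a_{k}+b_{k}$, where $a_{k}:=\mathbb{E}[\lVert Q_{k}^{A_{U}}-Q_{k}^{A_{L}}\rVert_{\infty}]$ and $b_{k}$ is an $\alpha\gamma d_{\text{max}}$ multiple of constant terms of order $\alpha^{1/2}/(d_{\text{min}}^{7/2}(1-\gamma)^{9/2})$ and smaller, plus transient terms of order $\rho^{k-3}k^{3}$, $\rho^{k-2}k^{2}$ and $\rho^{k-1}k$.

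Third, since the comparison systems are initialized together ($Q_{0}^{A_{U}}=Q_{0}^{A_{L}}$, hence $a_{0}=0$), I would unroll $a_{k}\le\sum_{i=0}^{k-1}\rho^{k-1-i}b_{i}$. The constant part of $b_{i}$ sums geometrically and contributes the factor $\tfrac{1}{1-\rho}=\tfrac{1}{\alpha d_{\text{min}}(1-\gamma)}$, which — after the $\alpha\gamma d_{\text{max}}$ prefactor and the $\alpha^{1/2}/(d_{\text{min}}^{7/2}(1-\gamma)^{9/2})$ already present — produces the first term $\sim\alpha^{1/2}\lvert S\times A\rvert/(d_{\text{min}}^{9/2}(1-\gamma)^{11/2})$. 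Each transient part is convolved as $\sum_{i=0}^{k-1}\rho^{k-1-i}\rho^{i-j}i^{j}=\rho^{k-1-j}\sum_{i=0}^{k-1}i^{j}\le\rho^{k-1-j}k^{j+1}$, so its degree in $k$ rises by one and the dominant $k^{3}$ term becomes $\rho^{k-4}k^{4}$. Finally I would use $\gamma\le1$ and $d_{\text{max}}\le1$ (\cref{def:d_max_and_d_min}) to fold higher powers of $\gamma d_{\text{max}}$ into a single one, merge the several constant terms into one and the several $\rho^{k-j}k^{j}$ terms into one (using $j\le4$), and collect the numerical constants to reach the claimed coefficients $40$ and $20$. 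The argument for $Q_{k}^{B_{U}}-Q_{k}^{B_{L}}$ is identical with the roles of $A$ and $B$ interchanged.

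The main obstacle here is bookkeeping rather than concept: the add-and-subtract trick and the $\rho$-contraction are routine once the pattern from the error system is in hand, but the disturbance $b_{k}$ inherits \emph{every} term of \cref{thm:lcs_upper_bound} and \cref{lem:second_half} — several constant terms and several $\rho^{k-j}k^{j}$ terms with $j\le3$ — and one must carefully track how the convolution with $\rho^{k-1-i}$ amplifies each (the constants picking up $1/(1-\rho)$, the transients gaining one power of $k$) and then regroup everything into the compact two-term bound. Extra care is needed so that the polynomial-times-geometric sums are dominated cleanly (e.g.\ $\sum_{i=0}^{k-1}i^{j}\le k^{j+1}$), ensuring the final factors are exactly $\rho^{k-4}$ and $k^{4}$; and one should note that the worst constant term is the one carrying the additional $1/(d_{\text{min}}(1-\gamma))$ from $1/(1-\rho)$, which is precisely what sets the $d_{\text{min}}^{9/2}(1-\gamma)^{11/2}$ denominator.
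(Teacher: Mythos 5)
Your proposal matches the paper's proof in all essentials: the same add-and-subtract of $\Pi_{Q_{k}^{B}}(Q_{k}^{A_{L}}-Q^{*})$ to expose the $\rho$-contractive system matrix $I-\alpha D+\alpha\gamma DP\Pi_{Q_{k}^{B}}$, the same identification of the two disturbances as $(Q_{k}^{A_{L}}-Q^{*})$ and $(Q_{k}^{A}-Q_{k}^{B})$ controlled by \cref{thm:lcs_upper_bound} and \cref{lem:second_half}, and the same unrolling from $Q_{0}^{A_{U}}=Q_{0}^{A_{L}}$ with geometric summation of the constant terms and degree-raising convolution of the $\rho^{k-j}k^{j}$ terms before regrouping into the two-term bound. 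The only cosmetic difference is that you fold the two disturbances into a single bracket sharing the factor $\Pi_{Q_{k}^{B}}-\Pi_{Q^{*}}$, while the paper tracks them separately.
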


\begin{proof}
The upper bound of $Q_{k+1}^{A_{U}}-Q_{k+1}^{A_{L}}$ can be presented as following using~(\ref{eqn:subtracting-system})
\begin{align}\label{ucs_upper_bound}
Q_{k+1}^{A_{U}}-Q_{k+1}^{A_{L}} &=(I-\alpha D)(Q_{k}^{A_{U}}-Q_{k}^{A_{L}})
+\alpha\gamma DP\{\Pi_{Q_{k}^{B}}(Q_{k}^{A_{U}}-Q^{*})\nonumber\\&\quad-\Pi_{Q^{*}}(Q_{k}^{A_{L}}-Q^{*})\}
\nonumber-\alpha\gamma DP(\Pi_{Q_{k}^{B}}-\Pi_{Q^{*}})(Q_{k}^{A}-Q_{k}^{B})
\nonumber\\&= (I-\alpha D)(Q_{k}^{A_{U}}-Q_{k}^{A_{L}})
+\alpha\gamma DP\{\Pi_{Q_{k}^{B}}(Q_{k}^{A_{U}}-Q^{*})\nonumber\\&\quad+\Pi_{Q_{k}^{B}}(Q_{k}^{A_{L}}-Q^{*})\nonumber-\Pi_{Q_{k}^{B}}(Q_{k}^{A_{L}}-Q^{*})-\Pi_{Q^{*}}(Q_{k}^{A_{L}}-Q^{*})\}\nonumber\\&\quad-\alpha\gamma DP(\Pi_{Q_{k}^{B}}-\Pi_{Q^{*}})(Q_{k}^{A}-Q_{k}^{B})\nonumber\\&=(I-\alpha D)(Q_{k}^{A_{U}}-Q_{k}^{A_{L}})+\alpha\gamma DP\Pi_{Q_{k}^{B}}(Q_{k}^{A_{U}}-Q_{k}^{A_{L}})\nonumber+\alpha\gamma DP(Q_{k}^{A_{L}}-Q^{*})(\Pi_{Q_{k}^{B}}-\Pi_{Q^{*}})\nonumber\\&\quad-\alpha\gamma DP(\Pi_{Q_{k}^{B}}-\Pi_{Q^{*}})(Q_{k}^{A}-Q_{k}^{B})\nonumber\\&=(I+\alpha\gamma DP\Pi_{Q_{k}^{B}}-\alpha D)(Q_{k}^{A_{U}}-Q_{k}^{A_{L}})\nonumber+\alpha\gamma DP(Q_{k}^{A_{L}}-Q^{*})(\Pi_{Q_{k}^{B}}-\Pi_{Q^{*}})\nonumber\\&\quad-\alpha\gamma DP(\Pi_{Q_{k}^{B}}-\Pi_{Q^{*}})(Q_{k}^{A}-Q_{k}^{B})
\end{align}
Taking the $\infty$-norm on (\ref{ucs_upper_bound}) and applying the inequality successively result in 
\begin{align}\label{ucs_upper_bound_compact_form}
\lVert Q_{k+1}^{A_{U}}-Q_{k+1}^{A_{L}} \rVert_{\infty}&\leq\lVert I+\alpha\gamma DP\Pi_{Q_{k}^{B}}-\alpha D\rVert_{\infty}\lVert Q_{k}^{A_{U}}-Q_{k}^{A_{L}} \rVert_{\infty}\nonumber\\&\quad+\biggl\lVert\alpha\gamma DP\biggr\rVert_{\infty}\sum_{j=0}^{k-1}\rho^{(k-1)-j}\biggl\lVert(\Pi_{Q_{j}^{B}}-\Pi_{Q^{*}})\biggr\rVert_{\infty}\biggl\lVert(Q_{j}^{A_{L}}-Q^{*})\biggr\rVert_{\infty}\nonumber\\&\quad+
\biggl\lVert\alpha\gamma DP\biggr\rVert_{\infty}\sum_{j=0}^{k-1}\rho^{(k-1)-j}\biggl\lVert(\Pi_{Q_{j}^{B}}-\Pi_{Q^{*}})\biggr\rVert_{\infty}\biggl\lVert(Q_{j}^{A}-Q_{j}^{B})\biggr\rVert_{\infty}
\end{align}
Assuming $Q_{0}^{A_{U}}=Q_{0}^{A_{L}}$ and taking expectation of (\ref{ucs_upper_bound_compact_form}) lead to
\begin{align}\label{eqn:subtract_system_bound}
\mathbb{E}[\lVert Q_{k}^{A_{U}}-Q_{k}^{A_{L}} \rVert_{\infty}]
&\leq \frac{8\gamma^{3}d_{\text{max}}^{3}\lvert \mathcal{S}\times \mathcal{A}\rvert \alpha^{1/2}}{d_{\text{min}}^{9/2}(1-\gamma)^{11/2}}+\frac{8\gamma^{2}d_{\text{max}}^{2}\lvert \mathcal{S}\times \mathcal{A}\rvert \alpha^{1/2}}{d_{\text{min}}^{7/2}(1-\gamma)^{9/2}}\nonumber+\frac{4\gamma d_{\text{max}}\lvert\mathcal{S}\times \mathcal{A}\rvert \alpha^{1/2}}{d_{\text{min}}^{3/2}(1-\gamma)^{5/2}}\nonumber\\&\quad+\rho^{k-1}k\frac{2\gamma d_{\text{max}}\lvert \mathcal{S}\times \mathcal{A}\rvert^{3/2} \alpha}{(1-\gamma)}\nonumber+\rho^{k-1}\frac{(k-1)^{2}k(k-2)}{2}\frac{1}{6}\frac{4\rho^{-3}\alpha^{3}\gamma d_{\text{max}}^{3}\lvert \mathcal{S}\times \mathcal{A}\rvert^{3/2}}{(1-\gamma)}\nonumber\\&\quad+\rho^{k-1}\frac{4\rho^{-2}\alpha^{2}\gamma^{2}d_{\text{max}}^{2}\lvert \mathcal{S}\times \mathcal{A}\rvert^{3/2}}{1-\gamma}\frac{1}{2}\frac{k(k-1)(k-2)}{3}\nonumber\\&\quad+\frac{8\gamma^{2}d_{\text{max}}^{2}\lvert \mathcal{S}\times \mathcal{A}\rvert \alpha^{1/2}}{d_{\text{min}}^{7/2}(1-\gamma)^{9/2}}+\frac{8\gamma d_{\text{max}}\lvert\mathcal{S}\times \mathcal{A}\rvert \alpha^{1/2}}{d_{\text{min}}^{5/2}(1-\gamma)^{7/2}}\nonumber\\&\quad+\rho^{k-1}\biggl(\frac{(k-1)k(2k-1)}{6}\biggl)\frac{4\rho^{-2}\alpha^{2}\gamma^{2}d_{\text{max}}^{2}\lvert \mathcal{S}\times \mathcal{A}\rvert^{3/2}}{(1-\gamma)}\nonumber\\&\quad+\rho^{k-1}\biggl(\frac{(k-1)k}{2}\biggl)\frac{4\rho^{-1}\alpha\gamma d_{\text{max}}\lvert \mathcal{S}\times \mathcal{A}\rvert^{3/2}}{(1-\gamma)}
\end{align}
We group some terms of (\ref{eqn:subtract_system_bound}) as follows
\begin{align*}
&\frac{8\gamma^{3}d_{\text{max}}^{3}\lvert \mathcal{S}\times \mathcal{A}\rvert \alpha^{1/2}}
      {d_{\text{min}}^{9/2}(1-\gamma)^{11/2}}
+\frac{8\gamma^{2}d_{\text{max}}^{2}\lvert \mathcal{S}\times \mathcal{A}\rvert \alpha^{1/2}}
      {d_{\text{min}}^{7/2}(1-\gamma)^{9/2}}
+\frac{4\gamma d_{\text{max}}\lvert \mathcal{S}\times \mathcal{A}\rvert \alpha^{1/2}}
      {d_{\text{min}}^{3/2}(1-\gamma)^{5/2}
}
+\frac{8\gamma^{2}d_{\text{max}}^{2}\lvert \mathcal{S}\times \mathcal{A}\rvert \alpha^{1/2}}
      {d_{\text{min}}^{7/2}(1-\gamma)^{9/2}}
\\
&\quad
+\frac{8\gamma d_{\text{max}}\lvert \mathcal{S}\times \mathcal{A}\rvert \alpha^{1/2}}
      {d_{\text{min}}^{5/2}(1-\gamma)^{7/2}}
\\
&\leq
5\biggl(
\frac{8\gamma d_{\text{max}}\lvert \mathcal{S}\times \mathcal{A}\rvert \alpha^{1/2}}
     {d_{\text{min}}^{9/2}(1-\gamma)^{11/2}}
\biggr).
\end{align*}
Also we group other remaining terms as follows
\begin{align*}
&\rho^{k-1} k
 \frac{2\gamma d_{\text{max}}\lvert \mathcal{S}\times \mathcal{A}\rvert^{3/2}\alpha}
      {1-\gamma}
+ \rho^{k-1}
  \frac{(k-1)^{2}k(k-2)}{12}
  \frac{4\rho^{-3}\alpha^{3}\gamma d_{\text{max}}^{3}
        \lvert \mathcal{S}\times \mathcal{A}\rvert^{3/2}}
       {1-\gamma}
\\
&\quad
+ \rho^{k-1}
  \frac{k(k-1)(k-2)}{6}
  \frac{4\rho^{-2}\alpha^{2}\gamma^{2}d_{\text{max}}^{2}
        \lvert \mathcal{S}\times \mathcal{A}\rvert^{3/2}}
       {1-\gamma}
+ \rho^{k-1}
  \frac{(k-1)k(2k-1)}{6}
  \frac{4\rho^{-2}\alpha^{2}\gamma^{2}d_{\text{max}}^{2}
        \lvert \mathcal{S}\times \mathcal{A}\rvert^{3/2}}
       {1-\gamma}
\\
&\quad
+ \rho^{k-1}
  \frac{(k-1)k}{2}
  \frac{4\rho^{-1}\alpha\gamma d_{\text{max}}
        \lvert \mathcal{S}\times \mathcal{A}\rvert^{3/2}}
       {1-\gamma}
\\
&\leq
5\biggl(
\frac{4\rho^{k-4}k^{4}\alpha\gamma d_{\text{max}}
      \lvert \mathcal{S}\times \mathcal{A}\rvert^{3/2}}
     {1-\gamma}
\biggr).
\end{align*}
Then, we can get the following simplified form
\begin{align}
\mathbb{E}[\lVert Q_{k}^{A_{U}}-Q_{k}^{A_{L}} \rVert_{\infty}]
&\leq \frac{40\gamma d_{\text{max}}\lvert \mathcal{S}\times \mathcal{A}\rvert \alpha^{1/2}}{d_{\text{min}}^{9/2}(1-\gamma)^{11/2}}+\frac{20\rho^{k-4}k^{4}\alpha\gamma d_{\text{max}}\lvert \mathcal{S}\times \mathcal{A}\rvert^{3/2}}{1-\gamma}.
\end{align}

\end{proof}
\subsection{Proof of Theorem \ref{thm:final-theorem} (Finite-time error bound of SDQ)}
\label{sec:final-theorem}
We can use the fact
\[
\begin{split}
\mathbb{E}\left[ \lVert Q_{k}^{A}-Q^{*} \rVert_{\infty}\right] &= \mathbb{E}[ \lVert Q_{k}^{A}-Q_{k}^{A_{L}}+Q_{k}^{A_{L}}-Q_{k}^{*}\rVert_{\infty}] \\
&\leq \mathbb{E}[ \lVert Q_{k}^{A_{L}}-Q^{*} \rVert_{\infty}]+\mathbb{E}[ \lVert Q_{k}^{A}-Q_{k}^{A_{L}} \rVert_{\infty}]\\
&\leq \mathbb{E}[ \lVert Q_{k}^{A_{L}}-Q^{*} \rVert_{\infty}]+\mathbb{E}[ \lVert Q_{k}^{A_{U}}-Q_{k}^{A_{L}} \rVert_{\infty}] 
\end{split}
\]
The second inequality is due to $Q_{k}^{A_{U}}-Q_{k}^{A_{L}} \geq 
Q_{k}^{A}-Q_{k}^{A_{L}}.$ This can be inferred from the fact that the lower comparison system and upper comparison system sandwich the original system as $Q_{k}^{L}-Q^{*}\leq Q_{k}-Q^{*}\leq Q_{k}^{U}-Q^{*}$.
Then we can rewrite the equation as
\[
\begin{split}
\mathbb{E}[ \lVert Q_{k}^{A}-Q^{*} \rVert_{\infty}] &\leq \mathbb{E}[ \lVert Q_{k}^{A_{L}}-Q^{*} \rVert_{\infty}]+\mathbb{E}[ \lVert Q_{k}^{A_{U}}-Q_{k}^{A_{L}} \rVert_{\infty}] 
\end{split}
\]
Combining this inequality with  ($\ref{eqn:lcs_upper_bound}$),  ($\ref{eqn:error_upper_bound}$) yields the following result:

\begin{align}\label{eqn:before_simplify}
\mathbb{E}[ \lVert Q_{k}^{A}-Q^{*} \rVert_{\infty}] &\leq \frac{16\gamma d_{\text{max}}\lvert \mathcal{S}\times \mathcal{A}\rvert\alpha^{1/2}}{d_{\text{min}}^{7/2}(1-\gamma)^{9/2}}+\frac{24\rho^{k-3}k^{3}\lvert \mathcal{S}\times \mathcal{A}\rvert^{3/2}}{(1-\gamma)}+\frac{4\alpha^{1/2}\lvert \mathcal{S}\times \mathcal{A}\rvert}{d_{\text{min}}^{1/2}(1-\gamma)^{3/2}}\nonumber\\&\quad+\frac{40\gamma d_{\text{max}}\lvert \mathcal{S}\times \mathcal{A}\rvert \alpha^{1/2}}{d_{\text{min}}^{9/2}(1-\gamma)^{11/2}}+\frac{20\rho^{k-4}k^{4}\alpha\gamma d_{\text{max}}\lvert \mathcal{S}\times \mathcal{A}\rvert^{3/2}}{1-\gamma} 
\end{align}
We can group some terms of (\ref{eqn:before_simplify}) as follows

\begin{align*}
&\frac{16\gamma\,d_{\max}\lvert \mathcal{S}\times \mathcal{A}\rvert\,\alpha^{1/2}}
      {d_{\min}^{7/2}(1-\gamma)^{9/2}}
 +\frac{40\gamma\,d_{\max}\lvert \mathcal{S}\times \mathcal{A}\rvert\,\alpha^{1/2}}
       {d_{\min}^{9/2}(1-\gamma)^{11/2}}
+\;\frac{4\,\alpha^{1/2}\lvert \mathcal{S}\times \mathcal{A}\rvert}
         {d_{\min}^{1/2}(1-\gamma)^{3/2}}
\;\le\;
3\biggl(\frac{40\,\lvert \mathcal{S}\times \mathcal{A}\rvert\,\alpha^{1/2}}
            {d_{\min}^{9/2}(1-\gamma)^{11/2}}\biggr).
\end{align*}
Other remaining terms can be grouped as follows
\begin{align*}
\frac{24\rho^{k-3}k^{3}\lvert \mathcal{S}\times \mathcal{A}\rvert^{3/2}}{(1-\gamma)}+\frac{20\rho^{k-4}k^{4}\alpha\gamma d_{\text{max}}\lvert \mathcal{S}\times \mathcal{A}\rvert^{3/2}}{1-\gamma}\leq 2\biggl(\frac{24\rho^{k-4}k^{4}\lvert \mathcal{S}\times \mathcal{A}\rvert^{3/2}}{(1-\gamma)}\biggl).
\end{align*}
Finally, we can get the finite-time error bound of SDQ
\begin{align*}
\mathbb{E}[ \lVert Q_{k}^{A}-Q^{*} \rVert_{\infty}] &\leq \frac{120\alpha^{1/2}\lvert \mathcal{S}\times \mathcal{A}\rvert}{d_{\text{min}}^{9/2}(1-\gamma)^{11/2}}+\frac{48\rho^{k-4}k^{4}\lvert \mathcal{S}\times \mathcal{A}\rvert^{3/2}}{(1-\gamma)}.
\end{align*}

\subsection{Proof of \cref{cor:final-theorem} (Finite-time error bound of SDQ)}\label{sec:final-corollaly}
We focus on the term
\[
\begin{split}
k^{4}\rho^{k-4}=\rho^{-4}\rho^{k/2}k^{4}\rho^{k/2}
\end{split}
\]
Let $f(x)=x^{4}\rho^{x/2}$. Solving the first-order optimality condition
\[
\frac{df(x)}{dx}=\frac{d}{dx}x^{4}\rho^{x/2}=4x^{3}\rho^{x/2}+x^{4}\frac{1}{2}\rho^{x/2}\ln{(\rho)}=0
\]
we have that its stationary points are $x=\frac{-8}{\ln{(\rho)}}$ and $x=0$. The corresponding function values are 
\[
f\biggl(\frac{-8}{\ln{(\rho)}}\biggl)=\frac{(-8)^{4}}{(\ln{(\rho)})^{4}}\rho^{\frac{-4}{\ln{(\rho)}}},\quad f(0)=0
\]
Moreover, solving the second-order optimality condition
\begin{align*}
\frac{d^{2}f(x)}{dx^{2}}&=\frac{d}{dx}\biggl(4x^{3}\rho^{x/2}+x^{4}\frac{1}{2}\rho^{x/2}\ln{(\rho)}\biggl)=12x^{2}\rho^{x/2}+4x^{3}\rho^{x/2}\ln{\rho}+\frac{1}{4}x^{4}\rho^{x/2}((\ln(\rho)))^{2},
\end{align*}
we have $f''(\frac{-8}{\ln{(\rho)}})<0$ and $f''(0)=0$. Therefore, one concludes that $f(\frac{-8}{\ln{(\rho)}})$ is the unique local maximum point. Because the function is continuous and converges to zero as $x\rightarrow+\infty$, it is bounded. This implies that $x=\frac{-8}{\ln{(\rho)}}$ is a global maximum point. Then, we have
\[
\rho^{(k-4)}k^{4}=\rho^{-4}\rho^{k/2}k^{4}\rho^{k/2}\leq \rho^{-4}{\frac{(-8)^{4}}{(\ln{(\rho)})^{4}}\rho^{\frac{-4}{\ln{(\rho)}}}}\rho^{k/2}.
\]
Combining this bound with (\ref{eqn:final-theorem}), one get the bound in (\ref{eqn:final-theorem-cor}).

\bibliographystyle{elsarticle-num} 

\end{document}